\colorlet{tableheadcolor}{gray!25} % Table header colour = 25% gray
\colorlet{tablerowcolor}{gray!10} % Table row separator colour = 10% gray
\newcommand{\refeq}[1]{\eqref{#1}} %{Eq.\ref{#1}}
\newcommand{\reffig}[1]{Figure~\ref{#1}}
\newcommand{\reftab}[1]{Table~\ref{#1}}
\newcommand{\refsec}[1]{\S~\ref{#1}} %{Section~\ref{#1}}
\newcommand{\refapp}[1]{\ref{#1}} %{Section~\ref{#1}}
\newcommand{\refprop}[1]{Proposition~\ref{#1}}
\newcommand{\refeg}[1]{Example~\ref{#1}}
\newcommand{\refcor}[1]{Corollary~\ref{#1}}
\newcommand{\refthm}[1]{Theorem~\ref{#1}}
\newtheorem{thm}{Theorem}[section]
\newtheorem{prop}[thm]{Proposition}
\newtheorem{eg}[thm]{Example}
\newtheorem{cor}[thm]{Corollary}
\theoremstyle{definition}
\newtheorem{dfn}[thm]{Definition}
\theoremstyle{remark}
\newcommand{\subhead}[2]{\vspace{\baselineskip} \noindent \textbf{#1:} #2 \\}
\newcommand{\rone}[1]{#1}%\textcolor{magenta}{#1}}
\newcommand{\rtwo}[1]{#1}%\textcolor{cyan}{#1}}
\newcommand{\rsho}[1]{#1}%\textcolor{blue}{#1}}
\newcommand{\redt}[1]{#1}%\textcolor{blue}{#1}}
\newcommand{\xx}{\mathbf{x}}
\newcommand{\xxi}{\boldsymbol{\xi}}
\newcommand{\yy}{\mathbf{y}}
\newcommand{\ba}{\mathbf{a}}
\newcommand{\bu}{\mathbf{u}}
\newcommand{\dd}{\mathrm{d}}
\newcommand{\RR}{\mathbb{R}}
\newcommand{\Sph}{\mathbb{S}}
\newcommand{\CC}{\mathbb{C}}
\newcommand{\KK}{\mathrm{K}}
\newcommand{\NN}{\mathbb{N}}
\newcommand{\Half}{\mathbb{H}}
\newcommand{\YY}{\mathbb{Y}}
\newcommand{\params}{\mathbb{Y}}
\newcommand{\spSch}{\mathcal{S}}
\newcommand{\spsmooth}{\mathcal{E}}%{\mathcal{C}^\infty}
\newcommand{\spcomp}{\mathcal{D}}%{\mathcal{C}_c^\infty}
\newcommand{\spB}{\mathcal{B}} % =(D'L1)' = DL^\infty
\newcommand{\spM}{\mathcal{O_M}}
\newcommand{\sppoly}{\mathcal{P}}
\newcommand{\spCD}{\mathcal{O'_C}}
\newcommand{\spTD}{\mathcal{S}'}
\newcommand{\spSD}{\mathcal{D}'}
\newcommand{\spED}{\mathcal{E}'}
\newcommand{\spLD}[1]{\mathcal{D}'_{L^#1}}
\newcommand{\spL}[1]{\mathcal{D}_{L^#1}}
\newcommand{\spBdot}{\dot{\mathcal{B}}}
\newcommand{\spLiz}{\mathcal{S}_0}
\newcommand{\spLizD}{\mathcal{S}'_0}
\newcommand{\spanyD}{\mathcal{A}'}
\newcommand{\spany}{\mathcal{A}}
\newcommand{\spA}{\mathcal{A}}
\newcommand{\spX}{\mathcal{X}}
\newcommand{\spY}{\mathcal{Y}}
\newcommand{\spZ}{\mathcal{Z}}
\newcommand{\spW}{\mathcal{W}}
\newcommand{\sploc}[1]{L^{#1}_\mathrm{loc}}
\newcommand{\bigdual}[1]{\left\langle#1\right\rangle }% \langle #1, #2 \rangle}
\newcommand{\dual}[1]{\langle#1\rangle}
\newcommand{\ip}[1]{\left(#1\right) }% \langle #1, #2 \rangle}
\newcommand{\ttprod}{\rone{\widehat{\otimes}}}
\newcommand{\eps}{\varepsilon}
\newcommand{\supp}{\mathrm{supp\,}}
\newcommand{\sign}{\mathrm{sgn}}
\newcommand{\dawson}{F}
\newcommand{\sth}[1]{#1^\text{th}}
\newcommand{\rad}{\mathrm{R}}
\newcommand{\drad}{\mathrm{R}^*}
\newcommand{\refl}[1]{\widetilde{#1}}
\newcommand{\wav}{\mathcal{W}}
\newcommand{\mellin}{\mathcal{M}}
\newcommand{\rid}{\mathscr{R}}
\newcommand{\drid}{\mathscr{R}^{\dag}}
\newcommand{\hil}{\mathcal{H}}
\newcommand{\bp}{\mathrm{\Lambda}}
\newcommand{\intrep}{\mathrm{T}}
\newcommand{\measY}{\mu}
\newcommand{\TT}{\mathrm{T}}
\newcommand{\hsch}{\mathrm{D}}
\newcommand{\gauss}{\mathrm{G}}
\newcommand{\ind}{\mathbf{1}}
\newcommand{\rbf}{\mathrm{G}}
\newcommand{\sig}{\sigma}
\newcommand{\sps}{\sig^{(-1)}}
\newcommand{\fz}{\zeta}
\newcommand{\fp}{\omega}
\newcommand{\pz}{z}
\newcommand{\vk}{v}
\newcommand{\riemann}{\int_{-\infty}^\infty}
\newcommand{\notsure}{\int_{-\infty}^\infty}
\newcommand{\lebesgue}{\int_{\RR}}
\newcommand{\neighbour}{\Omega}
\title{Neural Network with Unbounded Activation Functions is \\ Universal Approximator}
\author[1]{Sho Sonoda\thanks{s.sonoda0110@toki.waseda.jp}}
\author[1]{Noboru Murata}
\affil[1]{Faculty of Science and Engineering, Waseda University}
\date{}
\begin{document}
\maketitle

\begin{abstract}
This paper \redt{presents an investigation of} the approximation property of \rone{neural networks} with unbounded activation functions, such as the rectified linear unit (ReLU), which is the new de-facto standard of deep learning.
The ReLU network can be analyzed by the ridgelet transform with respect to Lizorkin distributions. %, and this procedure is introduced in this paper.
By showing \rsho{three} reconstruction formulas by using the Fourier slice theorem, the Radon transform, \rsho{and Parseval's relation,} it is shown that \rone{a neural network} with unbounded activation functions still \rone{satisfies} the universal approximation property. As an additional consequence, the ridgelet transform, or the backprojection filter in the Radon domain, is what the network learns after backpropagation.
Subject to a constructive admissibility condition, the trained network can be obtained by simply discretizing the ridgelet transform, without backpropagation.
%Numerical examples were shown to be consistent with the theory. \rsho{Intriguingly,} some non-admissible cases result in low-pass filtering.
Numerical examples not only support the consistency of the admissibility condition but also imply that some non-admissible cases result in low-pass filtering.
\end{abstract}

% \begin{keyword}
% neural network \sep
% integral representation \sep
% rectified linear unit (ReLU) \sep
% universal approximation \sep
% ridgelet transform \sep
% admissibility condition \sep
% Lizorkin distribution \sep
% Radon transform \sep
% backprojection filter \sep
% Parseval's relation \sep
% bounded extension to $L^2$
%% keywords here, in the form: keyword \sep keyword

%% main text

\section{Introduction}
Consider approximating a function $f : \RR^m \to \CC$ by the neural network $g_J$ with an activation function $\eta : \RR \to \CC$ % such that
\begin{align}
g_J(\xx) = \frac{1}{J}\sum_j^J c_j \, \eta ( \ba_j \cdot \xx - b_j ), \quad (\ba_j, b_j, c_j ) \in \RR^m \times \RR \times \rone{\CC}
\end{align}
where we refer to $(\ba_j,b_j)$ as a hidden parameter and $c_j$ as an output parameter. 
Let $\YY^{m+1}$ denote the space of hidden parameters $\RR^m \times \RR$.
The network $g_J$ can be obtained by discretizing the {\em integral representation of the neural network}
\begin{align}
g(\xx) = \int_{\YY^{m+1}} \intrep(\ba,b) \eta ( \ba \cdot \xx - b ) \rsho{\dd \measY(\ba,b)},
\end{align}
\rsho{where $\intrep : \YY^{m+1} \to \CC$ corresponds to a continuous version of the output parameter; $\measY$ denotes a measure on $\YY^{m+1}$.}
%Let $\YY^{m+1} := \RR^m \times \RR$ denote the space of hidden parameters.
The right-hand side expression is known as the {\em dual ridgelet transform} of $\intrep$ with respect to $\eta$
\begin{align}
\drid_\eta \intrep(\xx) = \int_{\YY^{m+1}} \intrep(\ba,b) \eta( \ba \cdot \xx - b ) \frac{\dd \ba \dd b}{\rsho{\| \ba \|}}.
\end{align}
By substituting in $\intrep(\ba,b)$ the {\em ridgelet transform} of $f$ with respect to $\psi$
\begin{align}
\rid_\psi f (\ba,b) := \int_{\RR^m} f(\xx) \overline{ \psi( \ba \cdot \xx - b )} \| \ba\| \dd \xx,
\end{align}
under some good conditions, namely the {\em admissibility} of $(\psi,\eta)$ and some regularity of $f$, we can reconstruct $f$ by
\begin{align}
\drid_\eta \rid_\psi f = f.
\end{align}
By discretizing the reconstruction formula,
we can \rtwo{verify} the approximation property of \rsho{neural networks} with the activation function $\eta$.

In this study, we investigate the approximation property of \rsho{neural networks} for the case in which $\eta$ is a Lizorkin distribution, 
by extensively constructing the ridgelet transform with respect to Lizorkin distributions.
The Lizorkin distribution space $\spLizD$ is 
such a large space that contains
 the {\em rectified linear unit} (ReLU) $z_+$, {\em truncated power functions} $z_+^k$, and 
other unbounded functions that have at most polynomial growth (but \rone{do not have polynomials as such}).
\reftab{tab:acts} and \reffig{fig:acts} give some examples of Lizorkin distributions.
\vspace{-.5cm}
\begin{table}[h]
  \centering
    \caption{Zoo of activation functions
\rtwo{with which the corresponding neural network can approximate arbitrary functions in $L^1(\RR^m)$ in the sense of pointwise convergence (\refsec{sec:reconst})
and in $L^2(\RR^m)$ in the sense of mean convergence (\refsec{sec:L2}).
The third column indicates the space $\spW(\RR)$ to which an activation function $\eta$ belong (\refsec{sec:ex.liz}, \ref{sec:ex.K}).}
}
  \begin{tabular}{@{\hspace{2em}}lll}
    \toprule
    activation function & $\eta(z)$ & $\spW$ \\
    \midrule
    \multicolumn{3}{l}{\textbf{unbounded functions}} \\
    truncated power function & $z_+^k := \begin{cases} z^k & z > 0 \\ 0 & \rone{z \leq 0} \end{cases}, \quad k \in \NN_0$ & $\spLizD$ \\
    rectified linear unit (ReLU) & $z_+$  & $\spLizD$ \\
    softplus function & $\sps(z) := \log( 1 + e^z ) $ & $\spM$ \\
    \multicolumn{3}{l}{\textbf{bounded but not integrable functions}} \\
    unit step function & $z_+^0$ & $\spLizD$ \\
    (standard) sigmoidal function & $\sigma(z) := (1+e^{-z})^{-1}$ & $\spM$ \\
    hyperbolic tangent function & $\tanh(z)$ & $\spM$ \\
    \multicolumn{3}{l}{\textbf{\rtwo{bump functions}}} \\
    (Gaussian) radial basis function & $\rbf(z) := (2 \pi)^{-1/2} \exp \left( -z^2/2 \right)$ & $\spSch$ \\
    the first derivative of sigmoidal function & $\sig'(z)$ & $\spSch$ \\
    Dirac's $\delta$ & $\delta(z)$ & $\spLizD$\\
    \multicolumn{3}{l}{\textbf{oscillatory functions}} \\
    the $\sth{k}$ derivative of RBF & $\rbf^{(k)}(z)$ & $\spSch$ \\
    the $\sth{k}$ derivative of sigmoidal function & $\sig^{(k)}(z)$ & $\spSch$\\
    the $\sth{k}$ derivative of Dirac's $\delta$ & $\delta^{(k)}(z)$ & $\spLizD$\\
    \bottomrule
  \end{tabular} \label{tab:acts}
\end{table}

\begin{figure}[h]
  \centering
  \includegraphics[width=.7\linewidth]{./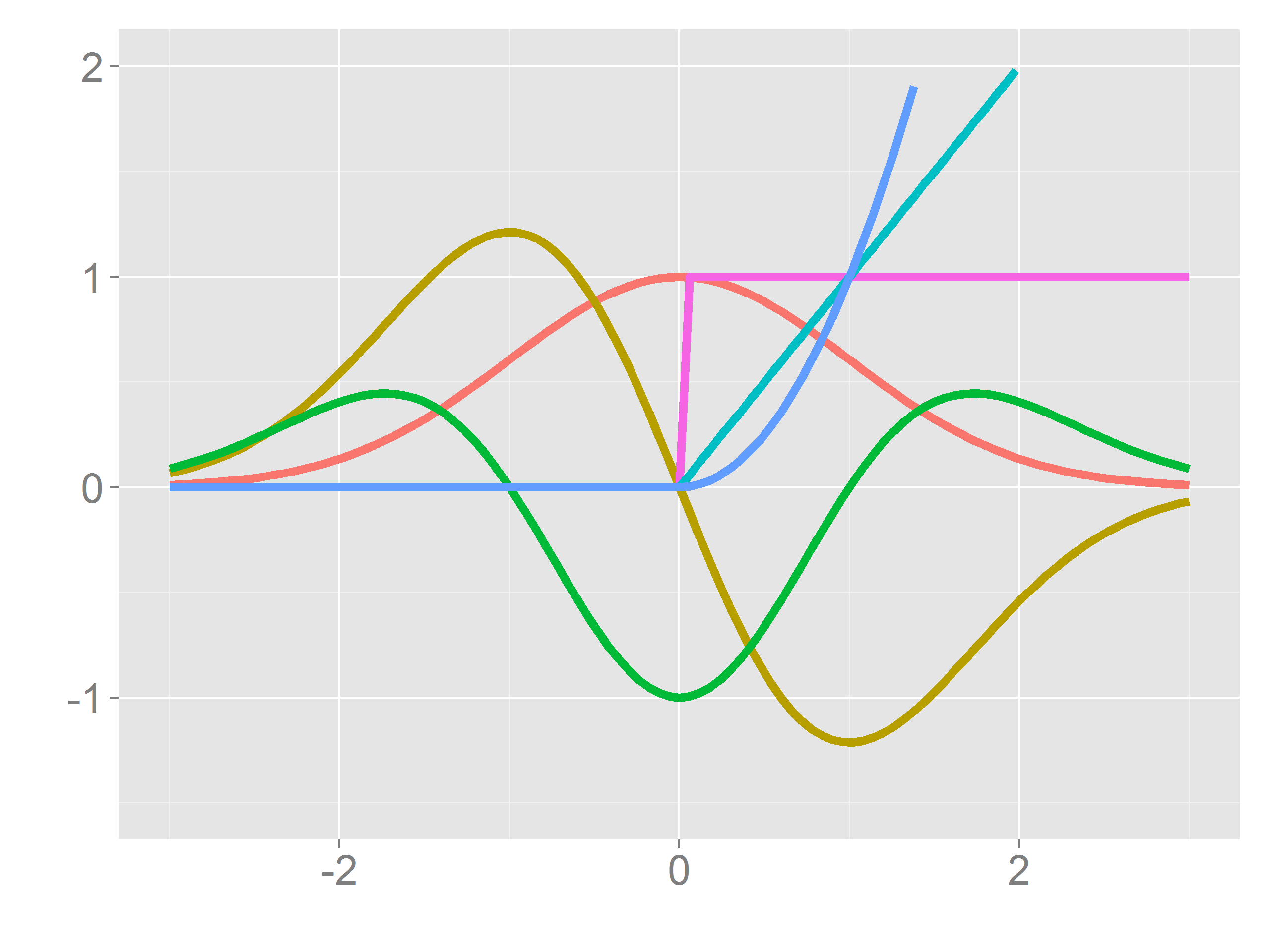}
  \vspace{-.5cm}
  \caption{Zoo of activation functions:
the Gaussian $\gauss(z)$ (red), the first derivative $\gauss'(z)$ (yellow), the second derivative $\gauss''(z)$ (green);
a truncated power function $z_+^2$ (blue), the ReLU $z_+$ (sky blue), the unit step function $z_+^0$ (rose).}
\label{fig:acts}
\end{figure}

\rsho{Recall that the derivative of the ReLU $z_+$ is the step function $z_+^0$.
Formally, the following suggestive formula
\begin{align}
\int_{\YY^{m+1}} \intrep(\ba,b) \eta'( \ba \cdot \xx - b )  \frac{\dd \ba \dd b}{\| \ba\|}
= \int_{\YY^{m+1}} \partial_b \intrep(\ba,b) \eta( \ba \cdot \xx - b )  \frac{\dd \ba \dd b}{\| \ba\|},
\end{align}
holds, because the integral representation is a convolution in $b$.
This formula suggests that once we have $\intrep_\mathrm{step}(\ba,b)$ for the step function,
which is implicitly known to exist based on some of our preceding studies \cite{Murata1996, Sonoda2014},
then we can formally obtain $\intrep_\mathrm{ReLU}(\ba,b)$ for the ReLU by differentiating $\intrep_\mathrm{ReLU}(\ba,b) = \partial_b \intrep_\mathrm{step}(\ba,b).$}

\subsection{ReLU and Other Unbounded Activation Functions}
The ReLU \cite{ReLU.Glorot, maxout, ReLU.Hinton, ReLU.Leaky}
became a new building block of {\em deep neural networks},
in the place of traditional bounded activation functions such as the sigmoidal function and the radial basis function (RBF).
\rsho{Compared with} traditional \rtwo{units}, a neural network with the ReLU is said \cite{ReLU.Glorot, Jarrett2009, Krizhevsky2012, ReLU.Zeiler, ReLU.Leaky}
to learn faster because it has larger gradients that can alleviate the vanishing gradient \cite{ReLU.Glorot},
and perform \redt{more efficiently} because it extracts sparser features.
To date, these hypotheses have only been empirically verified without analytical evaluation.

It is worth noting that in approximation theory,
it was already shown in the 1990s that neural networks with such unbounded activation \rtwo{functions}
 have the universal approximation property.
To be precise, if the activation function is {\em not} a polynomial function,
then \rtwo{the family of all neural networks} is dense in some functional spaces such as $L^p(\RR^m)$ and $C^0(\RR^m)$.
Mhaskar and Micchelli \cite{Mhaskar.Micchelli} seem to be the first to \rone{have shown} such universality by using the B-spline.
Later, Leshno et al. \cite{Leshno1993} reached a stronger claim by using functional analysis. Refer to Pinkus \cite{Pinkus.survey} for more details.

In this study, we initially work through the same statement by using harmonic analysis, or the ridgelet transform.
One strength is that our results are very constructive.
Therefore, we can construct what the network will learn during backpropagation.
\rtwo{Note that for bounded cases this idea is already implicit in \cite{Ito.Radon} and \cite{Sonoda2014}, and explicit in \cite{Kainen2007}.}

\subsection{Integral Representation of Neural Network and Ridgelet Transform}

We use the {\em integral representation of neural networks} introduced by Murata \cite{Murata1996}.
As already mentioned,
the integral representation corresponds to the dual ridgelet transform.
In addition, the ridgelet transform corresponds to the composite of a wavelet transform after the Radon transform.
Therefore, neural networks have a profound connection with harmonic analysis and tomography.

As K\r{u}rkov\'{a} \cite{Kurkova2012} noted, the idea of discretizing integral transforms to obtain an approximation is very old in approximation theory.
As for neural networks, at first, Carroll and Dickinson \cite{Carroll.Dickinson} and Ito \cite{Ito.Radon}
regarded a neural network as a Radon transform \cite{Helgason.new}.
%Cybenko \cite{Cybenko1989} was the first to show the universal approximation property of the neural network,
%by using the Hahn-Banach extension.
%Hornik \cite{Hornik1989} used Stone-Weierstrass.
Irie and Miyake \cite{Irie.Miyake}, Funahashi \cite{Funahashi1989}, Jones \cite{Jones1992}, and Barron \cite{Barron1993} 
used Fourier analysis to show the approximation property in a constructive way.
K\r{u}rkov\'{a} \cite{Kurkova2012} applied Barron's error bound to evaluate the complexity of neural networks.
Refer to Kainen et al. \cite{kainen.survey} for more details.
%\tmp{Girosi}

In the late 1990s, Cand\`{e}s \cite{Candes.HA, Candes.PhD}, Rubin \cite{Rubin.calderon}, and Murata \cite{Murata1996}
independently \redt{proposed} the so-called ridgelet transform, which has since been investigated by a number 
of authors \cite{Donoho.kplane, Donoho.ridgelet, Starck2010, Rubin.ridgelet, Kostadinova2014, Kostadinova2015}.

\subsection{Variations of Ridgelet Transform}
A ridgelet transform $\rid_\psi$, along with its reconstruction property, is determined by four classes \rtwo{of functions}:
domain $\spX(\RR^m)$, range $\spY(\YY^{m+1})$, ridgelet $\spZ(\RR)$, and dual ridgelet $\spW(\RR)$.
\begin{align}
\xymatrix{
\spX(\RR^m) \ni f \ar@/^2pc/[rr]^*+{\psi \in \spZ(\RR)}|*+{\rid_\psi} && \intrep \in \spY(\YY^{m+1}) \ar@/^2pc/[ll]|*+{\drid_\eta}^*+{\eta \in \spW(\RR)}
}
\end{align}

The following ladder relations by Schwartz \cite{Schwartz.new} are fundamental for describing the variations of the ridgelet transform:
\begin{gather}
\begin{array}{lccccccccccc}
(\mbox{functions})& \spcomp &\subset &\spSch &\subset &\spL{1} &\subset &\spL{p} &\subset &\spM &\subset &\spsmooth\\
& \cap &&\cap &&\cap &&\cap &&\cap &&\cap\\
(\mbox{distributions})& \spED &\subset &\spCD &\subset &\spLD{1} &\subset &\spLD{p} &\subset &\spTD &\subset &\spSD \\
    \multicolumn{1}{c}{} & \multicolumn{5}{@{}l@{}}{%
      \raisebox{.5\normalbaselineskip}{%
      \rlap{$\underbrace{\hphantom{\mbox{$p\lor{}q$\hspace*{\dimexpr11\arraycolsep+\arrayrulewidth}$p\land{}q$}}}_{\mbox{integrable}}$}}%
    }
    & \multicolumn{1}{c}{} &
\multicolumn{5}{@{}l@{}}{%
      \raisebox{.5\normalbaselineskip}{%
      \rlap{$\underbrace{\hphantom{\mbox{$p\lor{}q$\hspace*{\dimexpr13\arraycolsep+\arrayrulewidth}$p\land{}q$}}}_{\mbox{not always bounded}}$}}%
    }
\end{array},
\end{gather}
where the meaning of symbols are given below in \reftab{tab:class}.

The integral transform $\intrep$ by Murata \cite{Murata1996} coincides with the case for $\spZ \subset \spcomp$ and $\spW \subset \spsmooth \cap L^1$.
Cand\`{e}s \cite{Candes.HA, Candes.PhD} proposed the ``ridgelet transform'' for $\spZ = \spW \subset \spSch$.
Kostadinova et al. \cite{Kostadinova2014, Kostadinova2015} defined the ridgelet transform for the Lizorkin distributions $\spX = \spLizD$,
which is the broadest domain ever known, at the cost of restricting the choice of ridgelet functions to the Lizorkin functions $\spW = \spZ = \spLiz \subset \spSch$.

\subsection{Our Goal}
Although many researchers have investigated the ridgelet transform \cite{Donoho.kplane, Rubin.ridgelet, Kostadinova2014, Kostadinova2015},
in all the settings $\spZ$ does {\em not} directly admit some fundamental activation functions, namely the sigmoidal function and the ReLU.
One of the challenges we faced is to define the ridgelet transform for $\spW = \spLizD$, which admits the sigmoidal function and the ReLU.

\section{Preliminaries}

\subsection{Notations}
Throughout this paper, we consider approximating $f : \RR^m \to \CC$ by a neural network $g$ with hidden parameters $(\ba,b)$.
Following Kostadinova et al. \cite{Kostadinova2014, Kostadinova2015}, we denote by $\YY^{m+1} := \RR^m \times \RR$ the space of parameters $(\ba,b)$.
As already denoted, we symbolize the domain of a ridgelet transform as $\spX(\RR^m)$, the range as $\spY(\YY^{m+1})$, the space of ridgelets as $\spZ(\RR)$, and the space of dual ridgelets as $\spW(\RR)$.

We denote by $\Sph^{m-1}$ the $(m\!-\!1)$-sphere $\{ \bu \in \RR^m \mid \| \bu \| = 1 \}$;
by $\RR_+$ the open half-line $\{ \alpha \in \RR \mid \alpha > 0\}$;
by $\Half$ the open half-space $ \RR_+ \times \RR $.
We denote by $\NN$ and $\NN_0$ the sets of natural numbers excluding $0$ and including $0$, respectively.

We denote by $\refl{\cdot}$ the reflection $ \refl{f}(\xx) := f(-\xx) $;
\rsho{by $\overline{\cdot}$ the complex conjugate; 
by $a \lesssim b$ that there exists a constant $C \geq 0$ such that $a \leq C b$.}

\subsection{Class of Functions and Distributions}

Following Schwartz, we denote the classes of functions and distributions as in \reftab{tab:class}.
For Schwartz's distributions, we refer to Schwartz \cite{Schwartz.new} and Tr\`{e}ves \cite{Treves.new};
for Lebesgue spaces, Rudin \cite{Rudin.FA.new}, Brezis \cite{Brezis.new} and Yosida \cite{Yosida.new};
for Lizorkin distributions, Yuan et al. \cite{Yuan} and Holschneider \cite{Holschneider.new}.
\begin{table}[h!]
  \centering
    \caption{Classes of functions and distributions, and corresponding dual spaces.}
  \begin{tabular}{ll@{\hspace{3em}}ll}
    \toprule
    space & $\spany(\RR^k)$ & dual space & $\spanyD(\RR^k)$ \\
    \midrule
    polynomials of all degree & $\sppoly(\RR^k)$ & -- & \\
    smooth functions & $\spsmooth(\RR^k)$ & compactly supported distributions& $\spED(\RR^k)$ \\
    rapidly decreasing functions & $\spSch(\RR^k)$ & tempered distributions& $\spTD(\RR^k)$ \\
    compactly supported smooth functions & $\spcomp(\RR^k)$ & Schwartz distributions & $\spSD(\RR^k)$ \\
    $L^p$ of Sobolev order $\infty$ $(1 \leq p < \infty)$ & $\spL{p}(\RR^k)$ & Schwartz dists. $(1/p + 1/q=1)$ & $\spLD{q}(\RR^k)$ \\
    \rone{completion of $\spcomp(\RR^k)$ in $\spL{\infty}(\RR^k)$} & \rone{$\spBdot(\RR^k)$} & 
    \rone{Schwartz dists. $(p=1)$}
& $ \spLD{1}(\RR^k)$ \\
    slowly increasing functions & $\spM(\RR^k) $ & -- &  \\
    -- & & rapidly decreasing distributions & $\spCD(\RR^k) $  \\
    Lizorkin functions & $\spLiz(\RR^k)$ & Lizorkin distributions & $\spLizD(\RR^k)$ \\
    \bottomrule
  \end{tabular} \label{tab:class}
\end{table}

The space $\spLiz(\RR^k)$ of Lizorkin functions is a closed subspace of $\spSch(\RR^k)$ that consists of elements such that all moments vanish.
That is, $\spLiz(\RR^k) := \{ \phi \in \spSch(\RR^k) \mid \int_{\RR^k} \xx^{\boldsymbol{\alpha}} \phi(\xx) \dd \xx = 0 \mbox{ for any } \boldsymbol{\alpha} \in \NN_0^k \}$.
The dual space $\spLizD(\RR^k)$, known as the Lizorkin distribution space,
is homeomorphic to the quotient space of $\spTD(\RR^k)$ by the space of all polynomials $\sppoly(\RR^k)$.
That is, $\spLizD(\RR^k) \cong \spTD(\RR^k)/\sppoly (\RR^k)$. \rsho{Refer to Yuan et al. \cite[Prop.~8.1]{Yuan} for more details.}
In this work we identify and treat every polynomial as zero in the Lizorkin distribution.
That is, for \rone{$p \in \spTD(\RR^k)$}, if $p \in \sppoly(\RR^k)$ then $p \equiv 0$ in \rone{$\spLizD(\RR^k)$}.

\rsho{For $\Sph^{m-1}$, we work on the two subspaces $\spcomp(\Sph^{m-1}) \subset \spcomp(\RR^{m})$ and $\spED(\Sph^{m-1}) \subset \spED(\RR^m)$. In addition, we identify $\spcomp = \spSch = \spM = \spsmooth$
 and $\spED = \spCD = \spTD = \spLD{p} = \spSD$.}

\rsho{For $\Half$, let $\spsmooth(\Half) \subset \spsmooth(\RR^2)$ and $\spcomp(\Half) \subset \spcomp(\RR^2)$.
For $\TT \in \spsmooth(\Half)$, write
\begin{align}
\hsch_{s,t}^{k,\ell} \TT(\alpha,\beta) := \left( \alpha + 1/\alpha \right)^s (1+\beta^2)^{t/2} \partial_\alpha^k \partial_\beta^\ell \TT(\alpha,\beta), \quad s,t, k,\ell \in \NN_0.
\end{align}
The space $\spSch(\Half)$ 
consists of $\TT \in \spsmooth(\Half)$ such that for any $s,t,k,\ell \in \NN_0$,
the seminorm below is finite
\begin{align}
\sup_{(\alpha,\beta) \in \Half} \big|\hsch_{s,t}^{k,\ell} \TT(\alpha,\beta) \big| < \infty.
\end{align}
The space $\spM(\Half)$ consists of $\TT \in \spsmooth(\Half)$ such that for any $k,\ell \in \NN_0$ there exist $s,t \in \NN_0$ such that
\begin{align}
\big| \hsch^{k,\ell}_{0,0} \TT(\alpha,\beta) \big| \lesssim (\alpha + 1/\alpha)^s (1+\beta^2)^{t/2}.
\end{align}
The space $\spSD(\Half)$ consists of all bounded linear functionals $\Phi$ on  $\spcomp(\Half)$ such that
for every compact set $\KK \subset \Half$, there exists $N \in \NN_0$ such that
\begin{align}
\Bigg| \int_\KK \TT(\alpha,\beta) \Phi(\alpha,\beta) \frac{\dd \alpha \dd \beta}{\alpha} \Bigg| \lesssim \sum_{k,\ell \leq N} \sup_{(\alpha,\beta) \in \Half} | \hsch_{0,0}^{k,\ell} \TT (\alpha,\beta) |, \quad \forall \TT \in \spcomp(\KK),
\end{align}
where the integral is understood as the action of $\Phi$.
The space $\spTD(\Half)$ consists of $\Phi \in \spSch(\Half)$ for which there exists $N \in \NN_0$ such that
\begin{align}
\Bigg| \int_\Half \TT(\alpha,\beta) \Phi(\alpha,\beta) \frac{\dd \alpha \dd \beta}{\alpha} \Bigg|
\lesssim
\sum_{s,t,k,\ell \leq N}
\sup_{(\alpha, \beta) \in \Half} \big| \hsch_{s,t}^{k,\ell} \TT (\alpha,\beta)\big|, \quad 
\forall \TT \in \spSch(\Half).
\end{align}}

\subsection{Convolution of Distributions}
\reftab{tab:conv} lists the convergent convolutions of distributions and their ranges by Schwartz \cite{Schwartz.new}.
\begin{table}[h!]
  \centering
  \caption{Range of convolution (excerpt from Schwartz \cite{Schwartz.new})}
  \begin{tabular}{llll}
	\toprule
	case & $\spany_1$ & $\spany_2$ & $\spany_1 * \spany_2$ \\
	\midrule
	regularization & $\spcomp$ & $\spSD, \spLD{p}, \spED$ & $\spsmooth, L^p, \spcomp$ \\
	compactly supported distribution & $\spED$ & $\spED, \spsmooth, \rsho{\spSD} $ & $\spED, \spsmooth, \rsho{\spSD}$ \\
	regularization & $\spSch$ & $\spSch, \spTD$ & $\spSch, \spM$ \\
	Schwartz convolutor & $\spCD$ & $\spSch, \spCD, \spLD{p}, \spTD$ & $\spSch, \spCD, \spLD{p}, \spTD$ \\
	Young's inequality & $L^p$ & $L^q$ & $L^r \ (1/r=1/p+1/q-1)$ \\
	Young's inequality & $\spLD{p}$ & $\spL{q}, \spLD{q}$ & \rone{$\spLD{r} \ (1/r=1/p+1/q-1)$} \\
	\bottomrule
  \end{tabular} \label{tab:conv}
\end{table}

In general a convolution of distributions may neither commute $\phi * \psi \neq \psi * \phi$ nor associate $\phi * (\psi * \eta) \neq (\phi * \psi) * \eta$.
According to Schwartz \cite[Ch.6 Th.7, Ch.7 Th.7]{Schwartz.new}, both $\spSD * \spED * \spED * \cdots$ and $\spTD * \spCD * \spCD * \cdots$ are commutative and associative.

\subsection{Fourier Analysis} % \cite{Grafakos.classic}}
The Fourier transform $\widehat{\cdot}$ of $f:\RR^m \to \CC$ and the inverse Fourier transform $\widecheck{\cdot}$ of $F:\RR^m \to \CC$ are given by
\begin{gather}
\widehat{f}(\xxi) := \int_{\RR^m} f(\xx) e^{- i \xx \cdot \xxi} \dd \xx, \quad \xxi \in \RR^m \\
\quad  \widecheck{F}(\xx):= \frac{1}{(2 \pi)^m}\int_{\RR^m} F(\xxi) e^{i \xx \cdot \xxi} \dd \xxi, \quad \xx \in \RR^m.
\end{gather}

The Hilbert transform $\hil$ of $f : \RR \to \CC$ is given by
\begin{gather}
\hil f(s) := \frac{i}{\pi} \mathrm{p.v.} \riemann \frac{f(t)}{s-t} \dd t, \quad s \in \RR
\end{gather}
where $\mathrm{p.v.} \riemann$ denotes the principal value.
\rsho{We set the coefficients above to satisfy}
\begin{gather}
\widehat{\hil f}(\omega) = \sign \, \omega \cdot  \widehat{f}(\omega), \\
\hil^2 f(s) = f(s).
\end{gather}

\subsection{Radon Transform} % \cite{Helgason.new}}
The Radon transform $\rad$ of $f:\RR^m \to \CC$ and the dual Radon transform $\drad$ of $\Phi : \Sph^{m-1} \times \RR \to \CC$
are given by
\begin{gather}
\rad f(\bu, p) := \int_{(\RR \bu)^\perp} f(p \bu + \yy) \dd \yy, \quad (\bu,p) \in \Sph^{m-1} \times \RR \label{eq:radon} \\
\drad \Phi(\xx) := \int_{\Sph^{m-1}} \Phi(\bu, \bu \cdot \xx) \dd \bu, \quad \xx \in \RR^m \label{eq:dradon}
\end{gather}
where $(\RR \bu)^\perp := \{ \yy \in \RR^m \mid \yy \cdot \bu = 0 \}$ denotes the orthogonal complement of a line $\RR \bu \subset \RR^m$; and $\dd \yy$ denotes the Lebesgue measure on $(\RR \bu)^\perp$;
and $\dd \bu$ denotes the \rone{surface} measure on $\Sph^{m-1}$.

We use the following fundamental results (\cite{Helgason.new, Hertle1983}) for $f \in L^1(\RR^m)$ without proof:
{\em Radon's inversion formula}
\begin{align}
\drad \bp^{m-1} \rad f &= 2 (2 \pi)^{m-1} f,
\end{align}
where the {\em backprojection filter} $\bp^m$ is defined in \eqref{eq:bp};
the {\em Fourier slice theorem}
\begin{align}
\widehat{f}(\omega \bu) = \lebesgue \rad f(\bu,p) e^{-i p \omega }\dd p, \quad (\bu,\omega ) \in \Sph^{m-1} \times \RR
\end{align}
where
the left-hand side is the $m$-dimensional Fourier transform,
whereas the right-hand side is the one-dimensional Fourier transform of the Radon transform;
and a corollary of {\em Fubini's theorem}
\begin{align}
\lebesgue \rad f(\bu, p) \dd p = \int_{\RR^m} f(\xx) \dd \xx, \quad \mbox{a.e. } \bu \in \Sph^{m-1}.
\end{align}

\subsection{Backprojection filter}
For a function $\Phi(\bu,p)$, we define the {\em backprojection filter} $\bp^m$ as
\begin{align} \label{eq:bp}
\bp^{m} \Phi (\bu, p) :=
\left\{
\begin{array}{ll}
\partial_p^{m} \Phi (\bu, p), & \mbox{$m$ even} \\
\hil_p \partial_p^{m} \Phi (\bu, p), &\mbox{$m$ odd}.
\end{array}
\right.
\end{align}
where $\hil_p$ and $\partial_p$ denote the Hilbert transform and the partial differentiation with respect to $p$, respectively.
It is designed as a one-dimensional Fourier multiplier with respect to $p \to \omega$ such that
\begin{align}
\widehat{ \bp^m \Phi }(\bu, \omega) = i^m |\omega|^m \widehat{\Phi}(\bu, \omega).
\end{align}

\section{Classical Ridgelet Transform}
\subsection{An Overview}
The {\em ridgelet transform} $\rid_\psi f$ of $f:\RR^m \to \CC$
with respect to $\psi:\RR \to \CC$ is formally given by
\begin{align}
\rid_\psi f(\ba, b) &:= \int_{\RR^m} f(\xx) \overline{\psi( \ba \cdot \xx - b )} \rsho{\| \ba \|^s} \dd \xx, \quad (\ba,b) \in \YY^{m+1} \rsho{\mbox{ and } s > 0}. \label{eq:eucrid}
\end{align}
\rsho{The factor $|\ba|^s$ is simply posed for technical convenience. After the next section we set $s=1$, which simplifies some notations (e.g., \refthm{thm:existence}).
Murata \cite{Murata1996} originally posed $s=0$, which is suitable for the Euclidean formulation.
Other authors such as Cand\`{e}s \cite{Candes.PhD} used $s=1/2$, Rubin \cite{Rubin.calderon} used $s=m$, and Kostadinova et al. \cite{Kostadinova2014} used $s=1$ .}

When $f \in L^1(\RR^m)$ and $\psi \in L^\infty(\RR)$, by using H{\"o}lder's inequality,
the ridgelet transform is absolutely convergent at every $(\ba,b) \in \YY^{m+1}$.
\begin{align}
\int_{\RR^m} \big| f(\xx) \overline{\psi( \ba \cdot \xx - b )} \rsho{\| \ba \|^s} \big| \dd \xx &\leq \| f \|_{L^1(\RR^m)} \cdot  \| \psi \|_{L^\infty(\RR)} \rsho{\cdot \| \ba \|^s} < \infty.
\end{align}
\rsho{In particular when $s=0$, the estimate is independent of $\ba$ and thus $\rid_\psi f \in L^\infty(\YY^{m+1})$.
Furthermore, $\rid$ is a bounded bilinear operator $L^1(\RR^m) \times L^\infty(\RR) \to L^\infty(\YY^{m+1})$.}

The {\em dual ridgelet transform} $\drid_\eta \intrep$ of $\intrep : \YY^{m+1} \to \CC$ with respect to $\eta : \RR \to \CC$ is formally given by
\begin{align}
\drid_\eta \intrep(\xx) &:= \int_{\YY^{m+1}} \intrep(\ba,b) \eta( \ba \cdot \xx - b ) \rsho{\| \ba \|^{-s}} \dd \ba \dd b, \quad \xx \in \RR^m. \label{eq:drid}
\end{align}
The integral is absolutely convergent when $\eta \in L^\infty(\RR)$ and $\intrep \in L^1(\YY^{m+1}; \rsho{\| \ba \|^{-s}}\dd \ba \dd b)$ at every $\xx \in \RR^m$,
\begin{align}
\int_{\YY^{m+1}} \big| \intrep(\ba,b) \eta( \ba \cdot \xx - b ) \big| \rsho{\|\ba\|^{-s}} \dd \ba \dd b &\leq \| \intrep \|_{L^1(\YY^{m+1}; \rsho{\|\ba\|^{-s}} \dd \ba \dd b)} \cdot  \| \eta \|_{L^\infty(\RR)} < \infty,
\end{align}
\rsho{and thus $\drid$ is a bounded bilinear operator $L^1(\YY^{m+1};\| \ba \|^{-s} \dd \ba \dd b) \times L^\infty(\RR) \to L^\infty(\RR^m)$.}

Two functions $\psi$ and $\eta$ are said to be {\em admissible}
when 
\begin{align}
K_{\psi,\eta} := (2 \pi)^{m-1} \notsure \frac{\overline{\widehat{\psi}(\fz)} \widehat{\eta}(\fz)}{|\fz|^m} \dd \fz ,
\end{align}
is finite and not zero.
Provided that $\psi, \eta$, and $f$ belong to some good classes, and $\psi$ and $\eta$ are admissible,
then the {\em reconstruction formula}
\begin{align}
\drid_\eta \rid_\psi f = K_{\eta,\psi} f,
\end{align}
holds.

\subsection{Ridgelet Transform in Other Expressions}
It is convenient to write the ridgelet transform in ``polar'' coordinates as 
\begin{align}
\rid_\psi f(\bu, \alpha, \beta) &=
\int_{\RR^m} f(\xx) 
\overline{ \psi \left( \frac{\bu \cdot \xx - \beta}{\alpha} \right)} \rsho{\frac{1}{\alpha^s}}
 \dd \xx,
\end{align}
where ``polar'' variables are given by
\begin{gather}
\bu := \ba / \| \ba \|,  \quad
\alpha := 1 / \| \ba \|, \quad
\beta := b / \| \ba \|.
\end{gather}
Emphasizing the connection with wavelet analysis, we define the ``radius'' $\alpha$ as reciprocal.
Provided there is no likelihood of confusion, we use the same symbol $\params^{m+1}$ for the parameter space, regardless of whether it is parametrized by $(\ba,b) \in \RR^m \times \RR$ or $(\bu, \alpha, \beta) \in \Sph^{m-1} \times \RR_+ \times \RR$.

For a fixed $(\bu,\alpha,\beta) \in \YY^{m+1}$,
the ridgelet function
\begin{align}
\psi_{\bu,\alpha,\beta}(\xx)
&:= \psi \left( \frac{\bu \cdot \xx - \beta}{\alpha} \right) \rsho{\frac{1}{\alpha^s}}, \quad \xx \in \RR^m
\end{align}
behaves as a constant function on $(\RR \bu)^\perp$, and as a dilated and translated wavelet function on $\RR \bu$.
That is, by using the orthogonal decomposition $\xx = p \bu + \yy$ with $p \in \RR$ and $\yy \in (\RR \bu )^\perp$,
\begin{align}
\psi_{\bu,\alpha,\beta}(\xx)
= \psi \left( \frac{\bu \cdot ( p \bu + \yy ) - \beta}{\alpha} \right) \rsho{\frac{1}{\alpha^s}}
= \psi \left( \frac{p -\beta}{\alpha} \right) \rsho{\frac{1}{\alpha^s}} \otimes 1(\yy).
\end{align}

By using the decomposition above and Fubini's theorem, and assuming that the ridgelet transform is absolutely convergent,
we have the following equivalent expressions
\begin{align}
\rid_\psi f (\bu, \alpha, \beta)
&= \lebesgue \left( \int_{(\RR \bu)^\perp}f(p \bu + \yy) \dd \yy \right) \overline{\psi \left( \frac{p - \beta}{\alpha} \right)} \rsho{\frac{1}{\alpha^s}} \dd p \\
&= \rsho{ \lebesgue \rad f\left(\bu, p \right) \overline{\psi \left( \frac{p-\beta}{\alpha} \right)} \frac{1}{\alpha^s} \dd p } \\
&= \lebesgue \alpha^{\rsho{1-s}} \rad f\left(\bu, \alpha\pz+ \beta \right) \overline{\psi(\pz)} \dd\pz &\text{(weak form)} \label{eq:weakridge} \\
&= \left(\rad f (\bu, \cdot ) * \overline{\refl{\psi_{\alpha}}}\right) (\beta), \quad \psi_\alpha(p) := \psi \left( \frac{p}{\alpha} \right)\rsho{\frac{1}{\alpha^s}} &\text{(convolution form)} \label{eq:convridge} \\
&= \frac{1}{2 \pi} \lebesgue \widehat{f}(\omega \bu) \overline{\widehat{\psi}(\alpha \omega)} \alpha^{\rsho{1-s}} e^{i \omega \beta} \dd \omega, &\text{(Fourier slice th. \cite{Kostadinova2014})} \label{eq:fstridge}
\end{align}
\rsho{ where $\rad$ denotes the Radon transform \eqref{eq:radon};
the Fourier form follows by applying the identity $\mathcal{F}^{-1}_\omega \mathcal{F}_p = \mathrm{Id}$ to the convolution form.}
These reformulations reflect a well-known claim \cite{Starck2010, Kostadinova2014} that ridgelet analysis is wavelet analysis in the Radon domain.

\subsection{Dual Ridgelet Transform in Other Expressions}
Provided the dual ridgelet transform \refeq{eq:drid} is absolutely convergent,
some changes of variables lead to other expressions.
\begin{align}
\drid_\eta \intrep(\xx)
&= \rsho{ \int_{\RR^m} \int_\RR \intrep(\ba,b) \eta(\ba \cdot \xx - b) \| \ba \|^{-s} \dd b \, \dd \ba } \\
&= \rsho{ \int_0^\infty \int_{\Sph^{m-1}} \int_{\RR} \intrep (r \bu,b) \eta(r \bu \cdot \xx - b) \dd b \, \dd \bu \, r^{m-s-1} \dd r} \\
&=\int_{\Sph^{m-1}} \int_0^\infty \lebesgue
\intrep \left( \frac{\bu}{\alpha}, \frac{\beta}{\alpha} \right) 
\eta \left( \frac{ \bu \cdot \xx - \beta}{\alpha} \right) 
\frac{\dd \beta \dd \alpha \dd \bu}{\rsho{\alpha^{m-s+2}}} &\text{(polar expression)} \\
&= \int_{\Sph^{m-1}} \int_0^\infty \lebesgue
     \intrep \left( \bu, \alpha, \bu \cdot \xx -\alpha \pz \right)
     \eta(\pz) \frac{\dd \pz \dd \alpha \dd \bu}{\rsho{\alpha^{m-s+1}}}, &\text{(weak form)} \label{eq:weakdridge}
\end{align}
\rsho{where every integral is understood to be an iterated integral;
the second equation follows by substituting $(r, \bu) \gets (\| \ba \|, \ba / \| \ba \|)$ and using the coarea formula for polar coordinates;
the third equation follows by substituting $(\alpha,\beta) \gets (1/r, b/r)$ and using Fubini's theorem;
in the fourth equation with a slight abuse of notation, we write $\intrep(\bu, \alpha, \beta) := \intrep(\bu / \alpha, \beta / \alpha)$.}

\rsho{Furthermore, write $\eta_\alpha(p) := \eta(p/\alpha)/\alpha^t$.
Recall that the dual Radon transform $\drad$ is given by \eqref{eq:dradon} and the Mellin transform $\mellin$ \cite{Holschneider.new} is given by
$\mellin f(z) := \int_0^\infty f (\alpha) \alpha^{z-1} \dd \alpha, \ z \in \CC$.
Then,
\begin{align}
\drid_\eta \TT(\xx) = \drad\left[ \mellin [\TT(\bu,\alpha,\cdot) * \eta_\alpha ](s+t-m-1) \right](\xx).
\end{align}
Note that the composition of the Mellin transform and the convolution is the dual wavelet transform \cite{Holschneider.new}.
Thus, the dual ridgelet transform is the composition of the dual Radon transform and the dual wavelet transform.}
\section{Ridgelet Transform with respect to Distributions}

Using the weak expressions \refeq{eq:weakridge} and \refeq{eq:weakdridge}, we define the ridgelet transform with respect to distributions. \rsho{Henceforth, we focus on the case for which the index $s$ in \eqref{eq:eucrid} equals $1$.}

\subsection{Definition and Well-Definedness}
\begin{dfn}[Ridgelet Transform with respect to Distributions]
The ridgelet transform $\rid_\psi f$ of a function $f \in \spX(\RR^m)$
with respect to a distribution $\psi \in \spZ(\RR)$ is given by
\begin{align}
\rid_\psi f(\bu, \alpha, \beta)
&:= \lebesgue \rad f \left( \bu, \alpha \pz + \beta\right) \overline{\psi(\pz)} \dd \pz, \quad (\bu,\alpha,\beta) \in \YY^{m+1}
\end{align}
\rsho{where $\int_\RR \cdot \, \overline{\psi(z)} \dd z$ is understood as the action of a distribution $\psi$.}
\end{dfn}

\rone{Obviously,}
this ``weak'' definition coincides with the ordinary strong one
when $\psi$ coincides with a locally integrable function $(\sploc{1})$.
\rsho{With a slight abuse of notation, the weak definition coincides with the convolution form}
\begin{align}
\rsho{\rid_\psi f(\bu,\alpha,\beta)
= \left(\rad f (\bu, \cdot ) * \overline{\refl{\psi_{\alpha}}}\right) (\beta), \quad (\bu,\alpha,\beta) \in \YY^{m+1}}
\end{align}
\rsho{ where $\psi_\alpha(p) := \psi \left( p / \alpha \right) /\alpha $;
the convolution $\cdot * \cdot $, dilation $\cdot _\alpha$, reflection $\widetilde{\cdot}$, and complex conjugation $\overline{\cdot}$ are understood as operations for Schwartz distributions.}

\begin{thm}[Balancing Theorem] \label{thm:existence}
The ridgelet transform $\rid : \spX(\RR^m) \times \spZ(\RR) \to \spY(\YY^{m+1})$ is well defined as a bilinear map
when $\spX$ and $\spZ$ are chosen from \reftab{tab:weakridge}.
\end{thm}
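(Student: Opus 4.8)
The plan is to exploit the convolution factorisation recorded just above the theorem,
\begin{align}
\rid_\psi f(\bu,\alpha,\beta) = \bigl(\rad f(\bu,\cdot) * \overline{\refl{\psi_\alpha}}\bigr)(\beta), \qquad \psi_\alpha(p):=\psi(p/\alpha)/\alpha ,
\end{align}
which exhibits $\rid_\psi f$ as a composition of three operations: (a) the Radon transform $f\mapsto\rad f$ on $\RR^m$; (b) for each fixed direction $\bu$, a one-dimensional wavelet-type transform in the variable $p$, namely convolution against the dilated, reflected and conjugated distribution $\overline{\refl{\psi_\alpha}}$; and (c) reassembly of the result as a function of $(\bu,\alpha,\beta)\in\Sph^{m-1}\times\Half$. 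Accordingly the proof breaks into a Radon step, a convolution step, and a joint-regularity bookkeeping step, the point of the theorem being that these three are compatible precisely for the pairs $(\spX,\spZ)$ listed in \reftab{tab:weakridge}, with the composition separately continuous in $f$ and in $\psi$.

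For the Radon step I would invoke the mapping properties of $\rad$ on each admissible domain: for $\spX=\spSch$ this is Helgason's classical theorem \cite{Helgason.new}; for $\spX=\spLizD$ it is the result of Kostadinova et al.\ \cite{Kostadinova2014} that $\rad$ is a homeomorphism onto a Lizorkin-type space on the cylinder $\Sph^{m-1}\times\RR$; for the Lebesgue rows it follows from Young's inequality together with the Fubini corollary $\int_\RR\rad f(\bu,p)\,\dd p=\int_{\RR^m}f$; and for the compactly supported rows it follows because $\rad$ preserves compact support in $p$. The convolution step then reads off \reftab{tab:conv}: with $\rad f(\bu,\cdot)$ in the space just identified and $\overline{\refl{\psi_\alpha}}$ a dilate of $\psi\in\spZ(\RR)$, the one-dimensional convolution converges and lands in the range prescribed there (regularisation, Schwartz convolutor, Young's inequality, the compactly supported case, and so on); this is exactly how the column $\spY$ of \reftab{tab:weakridge} is fixed. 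For the bookkeeping step I would differentiate the pairing $\int_\RR\rad f(\bu,\alpha z+\beta)\,\overline{\psi(z)}\,\dd z$ in $(\bu,\alpha,\beta)$ under the action of $\psi$: this is legitimate because $\partial_\beta$ and $\alpha\partial_\alpha$ applied to $z\mapsto\rad f(\bu,\alpha z+\beta)$ stay in the test space dual to $\spZ$, and because $\bu\mapsto\rad f(\bu,\cdot)$ inherits the smoothness of $f$ on $\Sph^{m-1}$. Estimating the seminorms $\hsch^{k,\ell}_{s,t}$ on $\Half$ (together with the $C^\infty(\Sph^{m-1})$ seminorms) of the result in terms of finitely many seminorms of $f$ and of $\psi$ then yields continuity of the bilinear map into $\spY(\YY^{m+1})$.

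The step I expect to be the main obstacle is controlling the dilation variable $\alpha$ at the two extremes $\alpha\to0^+$ and $\alpha\to\infty$ — exactly what the weight $(\alpha+1/\alpha)^s$ in the definitions of $\spSch(\Half)$ and $\spM(\Half)$ is designed to absorb. As $\alpha\to\infty$ the kernel $\psi_\alpha$ spreads out, so one needs decay of $\rad f(\bu,\cdot)$ in $p$, hence integrability or decay of $f$; as $\alpha\to0^+$ the kernel concentrates like a (derivative of a) Dirac mass, so if $\psi$ is a distribution of order $N$ the convolution transfers $N$ derivatives onto $\rad f$, forcing $f$ to be correspondingly smooth — this transfer of derivatives is the quantitative content of ``balancing''. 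I would treat the two regimes separately, splitting $(0,\infty)=\{\alpha\le1\}\cup\{\alpha\ge1\}$ and using smoothness of $\rad f$ on the first and decay of $\rad f$ on the second, with the two estimates glued by the weight. For the smooth (Schwartz, Lizorkin) rows a cleaner alternative viewpoint is the Fourier-slice form
\begin{align}
\rid_\psi f(\bu,\alpha,\beta)=\frac{1}{2\pi}\int_\RR\widehat f(\omega\bu)\,\overline{\widehat\psi(\alpha\omega)}\,e^{i\omega\beta}\,\dd\omega ,
\end{align}
where the delicate point is the behaviour near $\omega=0$; when the ridgelet is Lizorkin, $\widehat\psi$ vanishes to all orders at the origin, which both secures convergence and renders immaterial the polynomial ambiguity in $\widehat f$ that is inherent to the Lizorkin-distribution domain.
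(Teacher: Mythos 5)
Your strategy --- factor $\rid_\psi f$ through the Radon transform, treat the $p$-variable as a one-dimensional convolution whose convergence and range are read off from the Schwartz convolution table, separate the sphere variable via the tensor-product structure, and control the two dilation extremes $\alpha\to0^+$ and $\alpha\to\infty$ with the weight $(\alpha+1/\alpha)^s$ through case-by-case seminorm estimates --- is exactly the route the paper takes in its Appendix A proof (Steps 1--4 and Cases 3a--3f). The only discrepancies are minor: the paper disposes of the entire Radon step uniformly by citing Hertle's continuity theorem $\rad : \spX(\RR^m)\hookrightarrow\spX(\Sph^{m-1}\times\RR)$ for all six domains rather than your patchwork of separate arguments (which as stated does not cover the $\spCD$ row), and your reference to a domain row $\spX=\spLizD$ is a misreading of \reftab{tab:weakridge} --- no such row exists, and the paper explicitly defers that non-integrable case to a duality technique outside the scope of this theorem.
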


\begin{table}[h!]
  \centering
  \caption{Combinations of classes for which the ridgelet transform is well defined as a bilinear map.
    The first and third columns list domains $\spX(\RR^m)$ of $f$ and $\spZ(\RR)$ of $\psi$, respectively.
    The second column lists the range of the Radon transform $\rad f(\bu,p)$ for which we reused the same symbol $\spX$ as it coincides.
    The fourth, fifth, and sixth columns list the range of the ridgelet transform with respect to $\beta$, $(\alpha,\beta)$, and $(\bu,\alpha,\beta)$, respectively.
  }
  \begin{tabular}{llllll}
    \toprule
    $f(\xx)$ & $\rad f(\bu,p)$ & $\psi(\pz)$ &
\multicolumn{3}{c}{
$\rid_\psi f(\bu,\alpha,\beta)$}
\\
\cmidrule(r){4-6}
    $\spX(\RR^m)$ & $\spX(\Sph^{m-1} \times \RR)$ & $\spZ(\RR)$ & $\spB(\RR)$ & $\spA(\Half)$ & $\spY(\YY^{m+1})$\\
    \midrule
     $\spcomp$ &  $\spcomp$ &  $\spSD$ & $\spsmooth$ & $\spsmooth$ & \rsho{$\spsmooth$} \\
     $\spED$   &  $\spED$ & $\rsho{\spSD}$ & $\rsho{\spSD}$ & \rsho{$\spSD$} & \rsho{$\spSD$} \\
     $\spSch$  &  $\spSch$ &  $\spTD$ & $\spM$ & \rsho{$\spM$} & \rsho{$\spM$} \\ 
     $\spCD$   &  $\spCD$ & $\spTD$ & $\spTD$ & \rsho{$\spTD$} & \rsho{$\spTD$} \\
     $L^1$ &  $L^1$ & $L^p \cap C^0$ & $L^p \cap C^0$ & \rsho{$\spTD$} & \rsho{$\spTD$} \\
     $\spLD{1}$ &  $\spLD{1}$ & $\spLD{p}$ & $\spLD{p}$ & \rsho{$\spTD$} & \rsho{$\spTD$} \\ 
    \bottomrule
  \end{tabular} \label{tab:weakridge}
\end{table}

\rsho{The proof is provided in \refapp{app:proof.existence}.}
Note that each $\spZ$ is (almost) the largest in the sense that the convolution $\spB = \spX * \spZ$ converges.
Thus, \reftab{tab:weakridge} suggests that there is a trade-off relation between $\spX$ and $\spZ$, that is, as $\spX$ increases, $\spZ$ decreases and vice versa.

Extension of the ridgelet transform of non-integrable functions requires
more sophisticated approaches,
because a direct computation of the Radon transform may diverge.
For instance, Kostadinova et al. \cite{Kostadinova2014} \rone{extend} $\spX = \spSch_0'$
by using a duality technique. \rsho{In \refsec{sec:L2} we extend the ridgelet transform to $L^2(\RR^m)$, by using the bounded extension procedure.}

\begin{prop}[Continuity of the Ridgelet Transform $L^1(\RR^m) \to L^\infty(\YY^{m+1})$] \label{prop:conti.L1}
\rsho{Fix $\psi \in \spSch(\RR)$. The ridgelet transform $\rid_\psi : L^1(\RR^m) \to L^\infty(\YY^{m+1})$ is bounded.}
\end{prop}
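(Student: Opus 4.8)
The plan is to establish the quantitative estimate
\[
\|\rid_\psi f\|_{L^\infty(\YY^{m+1})} \;\le\; \|\psi\|_{L^\infty(\RR)}\,\|f\|_{L^1(\RR^m)},
\]
which is precisely the asserted boundedness once one notes that $\psi$ is fixed and $\spSch(\RR)\subset L^\infty(\RR)$, so the coefficient is a finite constant. Well-definedness of $\rid_\psi f$ for $f\in L^1(\RR^m)$ is already supplied by \refthm{thm:existence} (the row $\spX=L^1$, $\spZ=L^p\cap C^0$, using that $\spSch(\RR)$ lies in every $L^p(\RR)\cap C^0(\RR)$), so only the uniform pointwise bound remains to be proved.

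First I would fix an arbitrary $(\ba,b)\in\YY^{m+1}$ and work directly with the defining integral over $\RR^m$, which converges absolutely because $f\in L^1(\RR^m)$ and $\psi$ is bounded. H\"older's inequality with the conjugate exponents $(1,\infty)$ gives
\[
|\rid_\psi f(\ba,b)| \;\le\; \int_{\RR^m}|f(\xx)|\,\bigl|\psi(\ba\cdot\xx-b)\bigr|\,\dd\xx \;\le\; \|f\|_{L^1(\RR^m)}\cdot\operatorname*{ess\,sup}_{\xx\in\RR^m}\bigl|\psi(\ba\cdot\xx-b)\bigr|.
\]
The remaining point is the elementary observation that an affine substitution inside $\psi$ cannot enlarge its essential range: for $\ba\neq 0$ the map $\xx\mapsto\ba\cdot\xx-b$ is onto $\RR$, so the last supremum equals $\|\psi\|_{L^\infty(\RR)}$, while for $\ba=0$ it equals $|\psi(-b)|\le\|\psi\|_{L^\infty(\RR)}$. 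Hence $|\rid_\psi f(\ba,b)|\le\|\psi\|_{L^\infty(\RR)}\|f\|_{L^1(\RR^m)}$ with a constant independent of $(\ba,b)$, and taking the supremum over $\YY^{m+1}$ finishes the proof. The same estimate can also be read off the Radon-domain picture: from the convolution form $\rid_\psi f(\bu,\alpha,\cdot)=\rad f(\bu,\cdot)*\overline{\refl{\psi_\alpha}}$ via Young's inequality, combined with the Fubini corollary $\lebesgue|\rad f(\bu,p)|\,\dd p\le\|f\|_{L^1(\RR^m)}$ recorded in the preliminaries.

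Because the entire argument reduces to one application of H\"older's (or Young's) inequality together with the translation/dilation invariance of the supremum norm of $\psi$, I do not expect a genuine obstacle. The only steps worth a line of justification are the uniformity claim — that neither the shift by $b$ nor the rescaling encoded in $\ba$ inflates the bound — and, should one instead follow the Radon-domain route, the legitimacy of the Fubini interchange used to pass from $f$ to $\rad f$, which holds because $(p,\yy)\mapsto f(p\bu+\yy)$ belongs to $L^1(\RR\times(\RR\bu)^\perp)$.
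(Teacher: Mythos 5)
Your argument is the same in spirit as the paper's (H\"older/Young plus the sup norm of the translated--dilated window), but your first displayed inequality silently drops the weight $\|\ba\|^{s}$ that is part of the definition \eqref{eq:eucrid}, and from \S 4 onward the paper fixes $s=1$. With that normalization the integrand is $f(\xx)\,\overline{\psi(\ba\cdot\xx-b)}\,\|\ba\|$, so H\"older only yields
\begin{align*}
|\rid_\psi f(\ba,b)| \;\le\; \|\ba\|\,\|\psi\|_{L^\infty(\RR)}\,\|f\|_{L^1(\RR^m)},
\end{align*}
which is not uniform over $\YY^{m+1}$; equivalently, in the convolution form the window is $\psi_\alpha(p)=\psi(p/\alpha)/\alpha$, whose sup norm is $\|\psi\|_\infty/\alpha$ and blows up as $\alpha\to 0$. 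The step where you argue that an affine substitution ``cannot enlarge the essential range'' of $\psi$ is exactly where the weight matters, so the uniform bound does not follow from your computation. What you have actually proved is the $s=0$ statement already recorded in \S 3.1 (``when $s=0$ the estimate is independent of $\ba$ and thus $\rid_\psi f\in L^\infty(\YY^{m+1})$'').

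You should also know that the paper's own proof runs the identical Young-inequality argument in the Radon domain and reduces to $\mathop{\mathrm{ess\,sup}}_{(r,\beta)}|r\,\psi(r\beta)|$, which it declares finite ``because $\psi$ decays rapidly''; that step is itself unsound (take $\beta\to 0$ and $r\to\infty$ with $r\beta$ fixed: the supremum is infinite whenever $\psi(0)\neq 0$). Indeed, for $s=1$ the conclusion can fail: choose $f\in L^1(\RR^m)$ with $\rad f(\bu,\cdot)$ unbounded near $p=0$ (e.g.\ $f(\xx)=\|\xx\|^{1/2-m}\ind_{\|\xx\|\le 1}$ for $m\ge 2$); then $\rid_\psi f(\bu,\alpha,0)\to\infty$ as $\alpha\to0$ because $\psi_\alpha$ acts as an approximate identity. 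So the gap is not a technicality you can patch with the same estimate: either the proposition must be read with the $s=0$ normalization, in which case your proof is complete and essentially the paper's, or additional hypotheses (or a weighted target space) are required. As written against the $s=1$ convention in force, both your argument and the paper's stop short at the same point.
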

\begin{proof}
Fix an arbitrary $f \in L^1(\RR^m)$ and $\psi \in \spSch(\RR)$.
Recall that this case is absolutely convergent. By using the convolution form,
\begin{align}
\mathop{\mathrm{ess \ sup}}_{(\bu,\alpha,\beta)}
\Big| \left( \rad f(\bu,\cdot) * \overline{\widetilde{\psi_\alpha}} \right) (\beta) \Big|
&\leq \| f \|_{L^1(\RR^m)} \cdot \mathop{\mathrm{ess \ sup}}_{(\alpha,\beta)} | \psi_\alpha (\beta) | \\
&\leq \| f \|_{L^1(\RR^m)} \cdot \mathop{\mathrm{ess \ sup}}_{(r,\beta)} | r \cdot \psi(r \beta) | < \infty,
\end{align}
where the first inequality follows by using Young's inequality and applying $\int_\RR | \rad f(\bu,p) | \dd p = \| f \|_1$;
the second inequality follows by changing the variable $r \gets 1/\alpha$, and the resultant is finite because $\psi$ decays rapidly.
\end{proof}

The ridgelet transform $\rid_\psi$ is injective when $\psi$ is {\em admissible},
because if $\psi$ is admissible then the reconstruction formula holds and thus $\rid_\psi$ has the inverse.
However, $\rid_\psi$ is {\em not} always injective.
For instance, take a Laplacian $f := \Delta g$ of some function $g \in \spSch(\RR^m)$
and a polynomial $\psi(z) = z + 1$, \rtwo{which satisfies $\psi^{(2)} \equiv 0$.}
According to \reftab{tab:weakridge}, $\rid_\psi f$ exists as a smooth function because $f \in \spSch(\RR^m)$ and $\psi \in \spTD(\RR)$.
In this case $\rid_\psi f = 0$, which means $\rid_\psi$ is not injective.
That is,
\begin{align}
\rid_\psi f (\bu, \alpha,\beta)
&= \left( \rad \Delta g(\bu,\cdot) * \overline{\refl{\psi_\alpha}}\right)(\beta) \\
&= \left( \partial^{2} \rad g(\bu,\cdot) * \overline{\refl{\psi_\alpha}}\right)(\beta) \\
&= \left( \rad g(\bu,\cdot) * \partial^{2} \overline{\refl{\psi_\alpha}}\right)(\beta) \\
&= \left( \rad g(\bu,\cdot) * 0 \right)(\beta) \\
&= 0,
\end{align}
\rtwo{where the second equality follows by the intertwining relation $\rad \Delta g(\bu,p) = \partial_p^2 \rad g(\bu,p)$ \cite{Helgason.new}.}
\rone{Clearly}
the non-injectivity stems from the choice of $\psi$.
In fact, as we see in the next section, 
\rtwo{no polynomial can be admissible and thus $\rid_\psi$ is not injective for any polynomial $\psi$.}

\subsection{Dual Ridgelet Transform with respect to Distributions}
\begin{dfn}[Dual Ridgelet Transform with respect to Distributions]
The dual ridgelet transform $\drid_\eta \intrep$ of $\intrep \in \spY(\YY^{m+1})$ with respect to $\eta \in \spW(\RR)$ is given by
\begin{align}
\drid_\eta \intrep(\xx)
&= \lim_{\substack{\delta \to \infty \\ \eps \to 0}}\int_{\Sph^{m-1}} \int_\eps^\delta \lebesgue \intrep \left( \bu, \alpha, \bu \cdot \xx -\alpha \pz \right) \eta(\pz) \frac{\dd\pz\dd \alpha \dd \bu}{\alpha^{m}},
\quad \xx \in \RR^{m}
\end{align}
\rsho{where $\int_\RR \cdot \eta(z) \dd z$ is understood as the action of a distribution $\eta$.}
\end{dfn}

If the dual ridgelet transform $\drid_\eta$ exists, 
then it coincides with the dual operator \cite{Yosida.new} of the ridgelet transform $\rid_\eta$.
\begin{thm} \label{thm:dual}
Let $\spX$ and $\spZ$ be chosen from \reftab{tab:weakridge}. Fix $\psi \in \spZ$.
Assume that $\rid_\psi : \spX(\RR^m) \to \spY(\YY^{m+1})$ is injective and
that $\drid_{\psi}:\spY'(\YY^{m+1}) \to \spX'(\RR^{m})$ exists.
Then $\drid_\psi$ is the dual operator $(\rid_\psi)':\spY'(\YY^{m+1}) \to \spX'(\RR^m)$ of $\rid_\psi$.
\end{thm}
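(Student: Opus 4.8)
The plan is to show that $\drid_\psi$ and $(\rid_\psi)'$ agree by testing both against arbitrary elements and unwinding the definitions to a symmetric bilinear pairing in which the roles of $f$ and $\intrep$ are interchangeable. Concretely, fix $\psi \in \spZ$ with $\rid_\psi : \spX(\RR^m) \to \spY(\YY^{m+1})$ injective, and suppose $\drid_\psi : \spY'(\YY^{m+1}) \to \spX'(\RR^m)$ exists. By definition the dual operator $(\rid_\psi)'$ is the unique continuous map $\spY'(\YY^{m+1}) \to \spX'(\RR^m)$ satisfying $\langle (\rid_\psi)' \intrep, f \rangle_{\spX', \spX} = \langle \intrep, \rid_\psi f \rangle_{\spY', \spY}$ for all $f \in \spX(\RR^m)$ and $\intrep \in \spY'(\YY^{m+1})$. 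So it suffices to prove
\begin{align}
\langle \drid_\psi \intrep, f \rangle = \langle \intrep, \rid_\psi f \rangle, \quad \forall f \in \spX(\RR^m),\ \intrep \in \spY'(\YY^{m+1}). \label{eq:dualpairing}
\end{align}
Injectivity of $\rid_\psi$ is what guarantees the dual operator is well defined on all of $\spY'$ (the range of $\rid_\psi$ being suitably large/dense in $\spY$), so I would first record that reduction, then concentrate on \eqref{eq:dualpairing}.

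First I would expand the left side using the weak form of the dual ridgelet transform, writing $\langle \drid_\psi \intrep, f\rangle$ as the (regularized $\eps \to 0$, $\delta \to \infty$) iterated integral of $\intrep(\bu,\alpha,\bu\cdot\xx - \alpha z)\,\overline{\psi(z)}$ against $f(\xx)$ over $\xx \in \RR^m$ and $(\bu,\alpha,z) \in \Sph^{m-1}\times\RR_+\times\RR$ with the measure $\dd\xx\,\dd z\,\dd\alpha\,\dd\bu/\alpha^m$, the $z$-integral being the action of $\psi$. The key manipulation is then to perform the $\xx$-integral first: using the orthogonal decomposition $\xx = p\bu + \yy$ with $p \in \RR$, $\yy \in (\RR\bu)^\perp$, the inner $\xx$-integral of $f(\xx)\,\intrep(\bu,\alpha,\bu\cdot\xx-\alpha z)$ collapses, by the definition \eqref{eq:radon} of the Radon transform and a change of variable $p = \alpha z + \beta$, to $\int_\RR \rad f(\bu,\alpha z+\beta)\,(\text{stuff in }\beta)\,\dd\beta$, which is exactly $\rid_\psi f(\bu,\alpha,\beta)$ integrated against $\intrep$ in the $(\bu,\alpha,\beta)$ variables. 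Matching the Jacobian factors — the $1/\alpha^m$ in $\drid$, the $\alpha$ from $\dd p = \alpha\,\dd\beta$ after substitution — against the measure implicit in the $\spY'$–$\spY$ pairing (recall the Half-space pairing in the preliminaries carries $\dd\alpha\,\dd\beta/\alpha$) should produce precisely $\langle \intrep, \rid_\psi f\rangle$, giving \eqref{eq:dualpairing}.

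The main obstacle is the interchange of integration order: for general distributional $\intrep \in \spY'$ and general $\psi \in \spZ$ (possibly a genuine distribution, not a function), the iterated integral defining $\drid_\psi$ is not absolutely convergent, so Fubini does not apply directly. The way I would handle this is to approximate: since $f \in \spX(\RR^m)$ lies in a nice test space and, by \refthm{thm:existence}, $\rid_\psi f$ lands in the predual $\spY(\YY^{m+1})$ of $\spY'$, the pairing $\langle \intrep,\rid_\psi f\rangle$ is continuous in $f$; meanwhile the regularization $\int_\eps^\delta$ in the definition of $\drid_\psi$ makes each truncated integral a perfectly legitimate iterated integral over a region where everything is a smooth compactly-supported-in-$\alpha$ object, so Fubini is licit at the truncated level. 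Thus I would (i) verify \eqref{eq:dualpairing} with the $\eps,\delta$ truncation in place by the Radon-collapse computation above applied to the truncated integral, then (ii) pass to the limit $\eps\to 0$, $\delta\to\infty$ on the left — which converges by hypothesis since $\drid_\psi\intrep$ is assumed to exist in $\spX'$ — while the right side is already independent of $\eps,\delta$. A secondary technical point to check is that the various distributional operations (reflection, dilation by $\alpha$, conjugation, convolution with $\rad f(\bu,\cdot)$) are continuous enough in the parameters that the truncated $\alpha$-integral of a distribution-valued function is itself the action of a well-defined distribution; this is exactly the content packaged into the $\spA(\Half)$ and $\spY(\YY^{m+1})$ columns of \reftab{tab:weakridge}, so I would cite \refthm{thm:existence} rather than redo it.
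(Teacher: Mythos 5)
Your proposal follows essentially the same route as the paper: reduce the claim to the pairing identity $\bigdual{\rid_\psi f, \intrep} = \bigdual{f, \drid_\psi \intrep}$ and conclude by the classical existence and uniqueness of the dual operator for a densely defined injective map (Yosida). The paper disposes of the pairing identity in a single formal Cartesian-coordinate display, whereas you justify the interchange of integrals via the $\eps,\delta$-truncation, the Radon-transform collapse, and \refthm{thm:existence}; this is a more careful rendering of the same step, not a different argument.
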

\begin{proof}
By assumption $\rid_\psi$ is densely defined on $\spX(\RR^m)$
and injective.
Therefore, by a classical result on the existence of the dual operator \cite[VII. 1. Th. 1, pp.193]{Yosida.new},
 there uniquely exists a dual operator $(\rid_\psi)' : \spY'(\YY^{m+1}) \to \spX'(\RR^m)$.
On the other hand, for $f \in \spX(\RR^m)$ and $\intrep \in \spY(\YY^{m+1})$,
\begin{align}
\bigdual{\rid_\psi f, \intrep}_{\YY^{m+1}}
= \int_{\RR^m \times \YY^{m+1}} f(\xx) \overline{\psi ( \ba \cdot \xx - b) \intrep (\ba,b)} \dd \xx \dd \ba \dd b
= \bigdual{f, \drid_{\psi} \intrep}_{\RR^m}.
\end{align}
By the uniqueness of the dual operator, we can conclude $(\rid_\psi)' = \drid_{\psi}$.
\end{proof}

\section{Reconstruction Formula for Weak Ridgelet Transform}
In this section we discuss the admissibility condition and the reconstruction formula,
 not only in the Fourier domain as many authors did \cite{Candes.HA, Candes.PhD, Murata1996, Kostadinova2014, Kostadinova2015},
but also in the real domain and in the Radon domain. Both domains are key to the constructive formulation.
In \refsec{sec:ac} we derive a constructive admissibility condition.
In \refsec{sec:reconst} we show two reconstruction formulas.
The first of these formulas is obtained by using the Fourier slice theorem and the other by using the Radon transform.
In \refsec{sec:L2} we will \rsho{extend the ridgelet transform to $L^2$.}

\subsection{Admissibility Condition} \label{sec:ac}
\begin{dfn}[Admissibility Condition]
A pair $(\psi, \eta) \in \spSch(\RR) \times \spTD(\RR)$ is said to be admissible when 
\rsho{there exists a neighborhood $\neighbour \subset \RR$ of $0$ such that $\rone{\widehat{\eta} \in \sploc{1}(\neighbour \setminus \{ 0 \}), }$ and the integral
\begin{gather}
K_{\psi, \eta} := (2 \pi)^{m-1} \left( \int_{\neighbour \setminus \{ 0\}} + \int_{\RR \setminus \neighbour} \right) \frac{\overline{\widehat{\psi}(\fz)}\widehat{\eta}(\fz)}{|\fz|^m}\dd \fz, \label{eq:defK}
\end{gather}
\rone{converges and is not zero}, 
where $\int_{\neighbour \setminus \{ 0 \}}$ and $\int_{\RR \setminus \neighbour}$ are understood as 
Lebesgue's integral and the action of $\widehat{\eta}$, respectively.}
\end{dfn}

Using the Fourier transform in $\spW$ requires us to assume that $\spW \subset \spTD$.

\rsho{The second integral $\int_{\RR \setminus \neighbour}$ is always finite 
because $|\zeta|^{-m} \in \spM(\RR \setminus \neighbour)$ and thus $|\zeta|^{-m} \overline{\widehat{\psi}(\zeta)}$ decays rapidly; therefore, by definition the action of a tempered distribution $\widehat{\eta}$ always converges.
The convergence of the first integral $\int_{\neighbour \setminus \{ 0 \}}$ does not depend on the choice of $\neighbour$ because 
for every two neighborhoods $\neighbour$ and $\neighbour'$ of $0$, the residual $\int_{\neighbour \setminus \neighbour'}$ is always finite.
Hence, the convergence of $K_{\psi,\eta}$ does not depend on the choice of $\neighbour$.}

\rsho{The removal of $0$ from the integral is essential because a product of two singular distributions, which is indeterminate in general, can occur at $0$. See examples below.
In \refapp{app:proof.formula}, we have to treat $|\zeta|^{-m}$ as a locally integrable function, rather than simply a regularized distribution such as Hadamard's finite part.
If the integrand coincides with a function at $0$, then obviously $\int_{\RR \setminus \{ 0 \}} = \int_\RR$.}

\rsho{If $\widehat{\eta}$ is supported in the singleton $\{ 0 \}$ then $\eta$ cannot be admissible because $K_{\psi,\eta} = 0$ for any $\psi \in \spSch(\RR)$.
According to Rudin \cite[Ex. 7.16]{Rudin.FA.new}, it happens if and only if $\eta$ is a polynomial.}
Therefore, it is natural to take $\spW = \spTD / \sppoly \cong \spLizD$ rather than $\spW = \spTD$.
That is, in $\spLizD(\RR)$,  we identify a polynomial \rsho{$\eta \in \sppoly(\RR)$ as $0 \in \spTD(\RR)$.
The integral $K_{\psi,\eta}$ is well-defined for $\spLizD(\RR)$. Namely $K_{\psi,\eta}$ is invariant under the addition of a polynomial $Q$ to $\eta$
\begin{align}
K_{\psi, \eta} = K_{\psi, \eta + Q}.
\end{align}
}

\begin{eg}[{Modification of Schwartz \cite[Ch.5 Th.6]{Schwartz.new}}]
Let $\eta(z) = z$ and $\psi(z) = \bp \gauss(z)$ with $\gauss(z) = \exp( -z^2/2 )$.
Then,
\begin{gather}
\widehat{\eta}(\fz) = \delta(\fz) \quad  \mbox{ and } \quad \overline{\widehat{\psi}(\fz)} = |\fz| \cdot \gauss(\fz).
\end{gather}
In this case the product of the two distributions is not associative
\begin{gather}
\int_{\RR} \mathrm{p.v.} \ \frac{1}{|\fz|} \times \left( |\fz| \cdot \gauss(\fz) \times \delta(\fz)\right) \dd \fz =  0, \\
\int_{\RR} \left( \mathrm{p.v.} \ \frac{1}{|\fz|} \times |\fz| \cdot \gauss(\fz) \right) \times \delta(\fz) \dd \fz = \gauss(0) \neq 0.
\end{gather}
On the other hand \eqref{eq:defK} is well defined
\begin{gather}
K_{\psi,\eta} = 
 \int_{0 < |\zeta| < 1} \frac{|\fz| \cdot \gauss(\fz) \times 0}{|\fz|} \dd \zeta  + \int_{1 \leq |\zeta|} \frac{|\fz| \cdot \gauss(\fz)}{|\fz|} \delta(\fz) \dd \zeta = 0.
\end{gather}
\end{eg}

\begin{eg}
\rsho{Let $\eta(z) = z_+^0 + (2 \pi)^{-1} \exp iz$ and $\psi(z) = \bp \gauss(z)$.
Then,
\begin{gather}
\widehat{\eta}(\fz) = \frac{i}{\zeta} + \delta(\fz) + \delta(\fz-1) \quad  \mbox{ and } \quad \overline{\widehat{\psi}(\fz)} = |\fz| \cdot \gauss(\fz).
\end{gather}
The product of the two distributions is not associative
\begin{gather}
\int_{\RR} \mathrm{p.v.} \ \frac{1}{|\fz|} \times \left( |\fz| \cdot \gauss(\fz) \times \left( \frac{i}{\zeta} + \delta(\fz) + \delta(\fz-1) \right)\right) \dd \fz =  \gauss(1), \\
\int_{\RR} \left( \mathrm{p.v.} \ \frac{1}{|\fz|} \times |\fz| \cdot \gauss(\fz) \right) \times \left( \frac{i}{\zeta} + \delta(\fz) + \delta(\fz-1)\right) \dd \fz = \gauss(0) + \gauss(1) \neq 0.
\end{gather}
On the other hand, \eqref{eq:defK} is well defined
\begin{gather}
K_{\psi,\eta} = 
 \int_{0 < |\zeta| < 1} \frac{|\fz| \cdot \gauss(\fz) \times  i \zeta^{-1} }{|\fz|} \dd \zeta  + \int_{1 \leq |\zeta|} \frac{|\fz| \cdot \gauss(\fz)}{|\fz|} \left( \mathrm{p.v.} \ \frac{i}{\zeta} + \delta(\fz) + \delta(\fz-1) \right) \dd \zeta = \infty + G(1).
\end{gather}}
\end{eg}

Observe that formally the integrand $\widehat{u}(\fz) := \overline{\widehat{\psi}(\fz)}\widehat{\eta}(\fz) |\fz|^{-m}$ 
is a solution of $|\fz|^m \widehat{u}(\fz) = \overline{\widehat{\psi}(\fz)} \widehat{\eta}(\fz)$.
By taking the Fourier inversion, we have $\bp^m u = \overline{\widetilde{\psi}} * \eta$.
\rsho{To be exact, $\widehat{\eta}$ may contain a point mass at the origin, such as Dirac's $\delta$.}

\begin{thm}(Structure Theorem for Admissible Pairs) \label{thm:eq.ac}
\rsho{Let $(\psi, \eta) \in \spSch(\RR) \times \spTD(\RR)$.
Assume that there exists $k \in \NN_0$ such that
\begin{align}
\widehat{\eta}(\zeta) = \sum_{j=0}^k c_j \delta^{(j)}(\zeta), \quad \zeta \in \{ 0 \}.
\end{align}
Assume that there exists a neighborhood $\neighbour$ of $0$ such that $\widehat{\eta} \in C^0(\neighbour \setminus \{ 0 \})$.
Then $\psi$ and $\eta$ are admissible if and only if there exists $u \in \spM(\RR)$ such that
\begin{gather}
\bp^m u = \overline{\refl{\psi}} * \left( \eta - \sum_{j=0}^k c_j z^j \right) \quad \mbox{ and } \quad 
\int_{\RR \setminus \{ 0 \}} \widehat{u}(\zeta) \dd \zeta \neq 0,
\end{gather}
where $\bp$ is the backprojection filter defined in \eqref{eq:bp}.
In addition, 
$\lim_{\zeta \to +0} |\widehat{u}(\zeta)| < \infty$ and  
$\lim_{\zeta \to -0} |\widehat{u}(\zeta)| < \infty$.}
\end{thm}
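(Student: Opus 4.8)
The plan is to transport the whole equivalence to the Fourier side, where $\bp^m$ acts as multiplication by $i^m|\zeta|^m$, and to read $u$ off the admissibility integrand. Write $\eta_0 := \eta - \sum_{j=0}^k c_j z^j$ for the Lizorkin representative of $\eta$, i.e.\ $\eta$ with its polynomial part removed. Since $\widehat{z^j}$ is a constant multiple of $\delta^{(j)}$, the hypothesis ``$\widehat\eta=\sum_{j\le k}c_j\delta^{(j)}$ near $0$'' says exactly that the component of $\widehat\eta$ supported at $\{0\}$ is the transform of this polynomial, so $\widehat{\eta_0}$ coincides with $\widehat\eta$ on $\RR\setminus\{0\}$ (hence $\widehat{\eta_0}\in C^0(\neighbour\setminus\{0\})$ and $\widehat{\eta_0}$ carries no mass at the origin). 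In particular $K_{\psi,\eta}$ in \eqref{eq:defK} is unchanged when $\eta$ is replaced by $\eta_0$: the Lebesgue integral over $\neighbour\setminus\{0\}$ and the distributional action over $\RR\setminus\neighbour$ both ignore $\{0\}$ (this is the invariance $K_{\psi,\eta}=K_{\psi,\eta+Q}$ already noted). I would also record at the outset that $\overline{\refl\psi}*\eta_0\in\spM(\RR)$ — regularization gives $\spSch*\spTD\subset\spM$, and $\overline{\refl\psi}*\sum_j c_j z^j$ is a polynomial — so the displayed equation is an identity in $\spTD(\RR)$.

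\textbf{Fourier reformulation.} Taking Fourier transforms, and using $\widehat{\bp^m u}=i^m|\zeta|^m\widehat u$ (the multiplier property behind \eqref{eq:bp}), $\widehat{\overline{\refl\psi}}=\overline{\widehat\psi}$, and the convolution theorem $\widehat{\overline{\refl\psi}*\eta_0}=\overline{\widehat\psi}\,\widehat{\eta_0}$ (legitimate since $\overline{\refl\psi}\in\spSch$, $\eta_0\in\spTD$), the equation $\bp^m u=\overline{\refl\psi}*\eta_0$ is equivalent to
\[
i^m|\zeta|^m\,\widehat u(\zeta)=\overline{\widehat\psi(\zeta)}\,\widehat{\eta_0}(\zeta)\qquad\text{in }\spTD(\RR).
\]
On the open set $\RR\setminus\{0\}$, where $|\zeta|^{-m}$ is smooth, this forces $\widehat u(\zeta)=i^{-m}|\zeta|^{-m}\overline{\widehat\psi(\zeta)}\widehat{\eta_0}(\zeta)$, which is continuous there, and then $\int_{\RR\setminus\{0\}}\widehat u\,\dd\zeta=(2\pi)^{1-m}i^{-m}K_{\psi,\eta}$, so $\int\widehat u\neq0\iff K_{\psi,\eta}\neq0$. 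Hence the statement reduces to: a solution $u\in\spM(\RR)$ of the displayed identity exists (with $\widehat u$ having finite one-sided limits at $0$) if and only if $\int_{\neighbour\setminus\{0\}}\overline{\widehat\psi(\zeta)}\widehat{\eta_0}(\zeta)|\zeta|^{-m}\,\dd\zeta$ converges.

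\textbf{Admissible $\Rightarrow$ existence of $u$.} Given admissibility, $i^{-m}|\zeta|^{-m}\overline{\widehat\psi}\widehat{\eta_0}\in L^1(\neighbour\setminus\{0\})$, so defining $\widehat u$ by the formula above yields an $L^1$ function near $0$ — an ordinary $\sploc{1}(\neighbour)$ function, with no point-mass ambiguity at the origin — while away from $0$ it is the Schwartz function $\chi_\infty\, i^{-m}|\zeta|^{-m}\overline{\widehat\psi}$ times $\widehat\eta$ once a cutoff $\chi_\infty$ vanishing near $0$ is inserted. Picking a smooth partition of unity $\chi_0+\chi_\infty\equiv1$ with $\supp\chi_0\subset\neighbour$, $\supp\chi_\infty\subset\RR\setminus\{0\}$, the piece $\chi_0\widehat u$ is compactly supported and $L^1$, hence in $\spED\subset\spCD$, while $\chi_\infty\widehat u$ is a Schwartz function times $\widehat\eta\in\spTD$, hence in $\spCD$ (Fourier-dual to $\spSch*\spTD\subset\spM$); therefore $\widehat u\in\spCD$ and $u:=\widecheck{\widehat u}\in\spM(\RR)$. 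The multiplier identity holds on $\RR\setminus\{0\}$ by construction and persists across $0$ because both sides are then $L^1$ functions near $0$ agreeing almost everywhere ($|\zeta|^m\lesssim1$ on $\neighbour$ and $\widehat{\eta_0}$ has no mass at $0$); unwinding the Fourier reformulation gives $\bp^m u=\overline{\refl\psi}*\eta_0$, and $\int_{\RR\setminus\{0\}}\widehat u\neq0$ by the identity above.

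\textbf{Existence of $u$ $\Rightarrow$ admissible, the finite-limit clause, and the main obstacle.} Conversely, from $u\in\spM$ solving the equation the Fourier reformulation gives $\widehat u=i^{-m}|\zeta|^{-m}\overline{\widehat\psi}\widehat{\eta_0}$ on $\RR\setminus\{0\}$ and $\widehat u\in\spCD$; it remains to see that near $0$ this is an honest, locally integrable function, in fact one with finite one-sided limits, so that $\int_{\RR\setminus\{0\}}\widehat u$ and hence $K_{\psi,\eta}=(2\pi)^{m-1}i^m\int_{\RR\setminus\{0\}}\widehat u$ is an absolutely convergent Lebesgue integral rather than a regularized quantity such as a Hadamard finite part. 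This is the heart of the argument and the step I expect to be hardest: one must combine the structural hypothesis (which makes $\overline{\widehat\psi}\widehat{\eta_0}$ a genuine continuous function near $0$, with no distributional singularity there) with $\widehat u\in\spCD$ — whose local structure is a finite sum of derivatives of continuous functions, so that $|\zeta|^m\widehat u=\overline{\widehat\psi}\widehat{\eta_0}$ admits no solution carrying a $|\zeta|^{-m}$-type finite-part term — to conclude that $\widehat u$ coincides near $0$ with the continuous function $i^{-m}|\zeta|^{-m}\overline{\widehat\psi}\widehat{\eta_0}$ up to a distribution supported at $\{0\}$, and that this residual part is absent and the one-sided limits are finite. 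Once this local analysis at the origin is in place, admissibility of $(\psi,\eta)$ follows and the ``in addition'' clause is precisely what has been shown; everything outside this origin analysis is bookkeeping with the convolution/ladder tables and the multiplier identity \eqref{eq:bp} for $\bp^m$.
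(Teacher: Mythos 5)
Your overall strategy is the same as the paper's: pass to the Fourier side where $\bp^m$ is the multiplier $i^m|\zeta|^m$, identify $\widehat{u}$ with $|\zeta|^{-m}\overline{\widehat{\psi}}\widehat{\eta}$ away from the origin, split $\RR$ into a punctured neighborhood of $0$ and its complement, and use the duality $\spM \leftrightarrow \spCD$ to pass between $u$ and $\widehat{u}$. Your ``admissible $\Rightarrow$ existence of $u$'' direction is essentially the paper's necessity argument (you glue with a smooth partition of unity where the paper glues with the sharp indicators $\ind_{\neighbour_0}, \ind_{\neighbour_1}$, putting one piece in $\spED$ and the other in $\spCD$); that part is fine. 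You are also more careful than the paper about the $i^m$ constant in the multiplier.

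The genuine gap is that the converse direction and the ``in addition'' clause are announced rather than proved. You correctly identify the crux --- that one must rule out a finite-part-type regularization of $|\zeta|^{-m}\overline{\widehat{\psi}}\widehat{\eta}$ near $0$ and show $\widehat{u}$ is an honest locally integrable function there with finite one-sided limits, so that $K_{\psi,\eta}$ is an absolutely convergent Lebesgue integral --- but you then say only what ``one must combine'' to conclude this, and the one reason you do offer is not sufficient: being locally a finite sum of derivatives of continuous functions does \emph{not} exclude finite-part terms (e.g.\ $\mathrm{Pf}(\zeta^{-2})$ is of finite order), so ``$\widehat{u}\in\spCD$ has finite local order'' cannot by itself kill a $|\zeta|^{-m}$-type singularity. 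The paper's sufficiency argument instead proceeds by showing that the triple product $\overline{\widehat{\psi}}\,\widehat{\eta}\,|\zeta|^{-m}$ is unambiguously defined piecewise --- as an ordinary function on $\neighbour\setminus\{0\}$ (where $\widehat{\eta}$ is continuous) and as an associative product $\spSch\cdot\spTD\cdot\spM$ on $\RR\setminus\neighbour$ --- and coincides with $\widehat{u}$ on each piece, so that $K_{\psi,\eta}=(2\pi)^{m-1}\langle\widehat{u},\ind_{\RR\setminus\{0\}}\rangle$, which is finite because $\widehat{u}\in\spCD$. Separately, the finite one-sided limits of $\widehat{u}$ at $0$ are obtained in the paper from the pointwise estimate $|\overline{\widehat{\psi}(\zeta)}\widehat{\eta}(\zeta)|\lesssim|\zeta|^{m-1+\eps}$ on $\neighbour_0\setminus\{0\}$, which yields a bounded $v_0$ with $\overline{\widehat{\psi}}\widehat{\eta}=|\zeta|^m v_0$ there; your argument never produces more than $\widehat{u}\in L^1$ near the origin, so the ``in addition'' clause is not established in either direction of your proposal. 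To complete the proof you would need to supply this local analysis at $\zeta=0$ explicitly rather than deferring it.
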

\rsho{The proof is provided in \ref{app:proof.eq.ac}. Note that the continuity implies local integrability.
If $\psi$ has $\ell$ vanishing moments with $\ell \geq k$, namely $\int_\RR \psi(z) z^j \dd z = 0$ for $j \leq \ell$,
then the condition reduces to
\begin{align}
\bp^m u = \overline{\widetilde{\psi}} * \eta, \quad 
\quad \Bigg| \int_\RR u(z) \dd z \Bigg| < \infty
\quad \mbox{ and }
\quad \int_\RR \widehat{u}(\zeta) \dd \zeta \neq 0.
\end{align}}

As a consequence of \rsho{\refthm{thm:eq.ac}}, we can construct admissible pairs as below.
\begin{cor}[Construction of Admissible Pairs] \label{cor:const.ap}
Given $\eta \in \spLizD(\RR)$. Assume that there exists a neighborhood $\neighbour$ of $0$ and $k \in \NN_0$ such that
$\zeta^k \cdot \widehat{\eta}(\zeta) \in C^0(\neighbour)$.
Take $\psi_0 \in \spSch(\RR)$ such that
\begin{align}
\int_\RR \zeta^k \, \overline{\widehat{\psi_0}(\zeta)} \widehat{\eta}(\zeta) \dd \zeta \neq 0.
\end{align}
Then
\begin{align}
\psi := \bp^m \psi_0^{(k)},
\end{align}
is admissible with $\eta$.
\end{cor}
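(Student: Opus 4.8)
The point of the construction is that the backprojection filter $\bp^m$ contributes a factor $|\zeta|^m$ to $\widehat{\psi}$ which cancels exactly the weight $|\zeta|^{-m}$ in $K_{\psi,\eta}$, after which the surviving $\zeta^k$ turns \eqref{eq:defK} into the integral quantified in the hypothesis. So the plan is: compute $\widehat{\psi}$ explicitly, substitute it into \eqref{eq:defK}, and read off finiteness and non-vanishing. This is just an explicit instance of \refthm{thm:eq.ac}, but it is cleaner to verify the admissibility condition directly.

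First I would compute $\widehat{\psi}$. From the differentiation rule $\widehat{\psi_0^{(k)}}(\zeta)=(i\zeta)^k\widehat{\psi_0}(\zeta)$ and the Fourier-multiplier description of the backprojection filter, $\widehat{\bp^m\phi}(\zeta)=i^m|\zeta|^m\widehat{\phi}(\zeta)$ (which is \eqref{eq:bp} in one variable), I get
\[
\widehat{\psi}(\zeta)=i^{m+k}\,|\zeta|^m\,\zeta^k\,\widehat{\psi_0}(\zeta),
\qquad\text{hence}\qquad
\frac{\overline{\widehat{\psi}(\zeta)}}{|\zeta|^m}=(-i)^{m+k}\,\zeta^k\,\overline{\widehat{\psi_0}(\zeta)},
\]
a bona fide Schwartz function of $\zeta$. (For even $m$ one has $\bp^m=\partial_p^m$, so in fact $\psi=\psi_0^{(m+k)}\in\spSch(\RR)$; in general one checks separately that $\psi$ lies in the class on which the admissibility condition is posed.) Substituting this into \eqref{eq:defK} for a neighborhood $\neighbour$ of $0$ as in the hypothesis gives
\[
K_{\psi,\eta}=(2\pi)^{m-1}(-i)^{m+k}\left(\int_{\neighbour\setminus\{0\}}+\int_{\RR\setminus\neighbour}\right)\zeta^k\,\overline{\widehat{\psi_0}(\zeta)}\,\widehat{\eta}(\zeta)\,\dd\zeta .
\]

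Next I would check that every piece is well defined. On $\neighbour\setminus\{0\}$ one has $\widehat{\eta}(\zeta)=\zeta^{-k}\bigl(\zeta^k\widehat{\eta}(\zeta)\bigr)\in C^0(\neighbour\setminus\{0\})\subset\sploc{1}(\neighbour\setminus\{0\})$, which is exactly the regularity the admissibility condition demands of $\widehat{\eta}$, so a valid $\neighbour$ exists. The first (Lebesgue) integrand equals $(-i)^{m+k}\overline{\widehat{\psi_0}(\zeta)}$ times the function $\zeta^k\widehat{\eta}(\zeta)\in C^0(\neighbour)$, hence is continuous and bounded on a bounded $\neighbour$, so that integral converges (the missing point $\{0\}$ is null). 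In the second integral $\zeta^k\overline{\widehat{\psi_0}(\zeta)}$ is a Schwartz function — smoothly cut off outside $\neighbour$, as in the remark following \eqref{eq:defK} — so the action of the tempered distribution $\widehat{\eta}$ on it converges; as in that remark the value does not depend on $\neighbour$ and equals $(2\pi)^{m-1}(-i)^{m+k}\int_{\RR}\zeta^k\,\overline{\widehat{\psi_0}(\zeta)}\,\widehat{\eta}(\zeta)\,\dd\zeta$, i.e. the nonzero constant $(2\pi)^{m-1}(-i)^{m+k}$ times the integral in the hypothesis. Since that integral is assumed nonzero, $K_{\psi,\eta}$ is finite and nonzero, which is precisely admissibility of $(\psi,\eta)$. (Equivalently, one exhibits the $u\in\spM(\RR)$ of \refthm{thm:eq.ac} as the inverse Fourier transform of $i^{-m}(-i)^{m+k}\zeta^k\overline{\widehat{\psi_0}(\zeta)}$ times the regular part of $\widehat{\eta}$ at $0$, the continuity hypothesis giving $u\in\spM$ and the $u$-integral condition being again the hypothesis.)

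I do not expect a serious obstacle: the substance is the one-line cancellation of $|\zeta|^m$ against $|\zeta|^{-m}$. The only care needed is the distributional bookkeeping at the origin — legitimizing the split of $K_{\psi,\eta}$ into a Lebesgue part on $\neighbour\setminus\{0\}$ and a distributional part on $\RR\setminus\neighbour$ once $|\zeta|^m$ has cancelled, confirming independence of the choice of $\neighbour$, and identifying the outcome with the $\neighbour$-free integral quantified in the statement (and, when $m$ is odd, checking that $\psi=\hil\,\psi_0^{(m+k)}$ indeed lies in the required space).
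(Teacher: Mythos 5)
Your proof is correct, and it takes a different (more self-contained) route than the paper. The paper disposes of the corollary in one line by exhibiting the witness $u := \overline{\widetilde{\psi_0^{(k)}}} * \eta$ for the structure theorem (\refthm{thm:eq.ac}): there $u \in \spM(\RR)$ comes for free from the convolution table ($\spSch * \spTD \subset \spM$), $\bp^m u = \overline{\widetilde{\psi}}*\eta$ holds by construction of $\psi$, and the non-vanishing of $\int_{\RR\setminus\{0\}}\widehat{u}$ is exactly the hypothesis; all of the bookkeeping at the origin is then inherited from the proof of that theorem. You instead verify the definition of $K_{\psi,\eta}$ directly, making explicit the cancellation $\overline{\widehat{\psi}(\zeta)}/|\zeta|^m = (-i)^{m+k}\zeta^k\overline{\widehat{\psi_0}(\zeta)}$ and redoing the split into the Lebesgue piece on $\neighbour\setminus\{0\}$ and the distributional piece on $\RR\setminus\neighbour$ by hand; your parenthetical correctly identifies the equivalence with the paper's witness. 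Two small remarks. First, your identification of $K_{\psi,\eta}$ with $(2\pi)^{m-1}(-i)^{m+k}$ times the hypothesis integral silently uses that any point mass $\sum_{j<k}c_j\delta^{(j)}$ of $\widehat{\eta}$ at the origin is annihilated by the factor $\zeta^k$ (its action on $\zeta^k\overline{\widehat{\psi_0}}$ vanishes since $j<k$), so the origin-deleted integral and the full distributional pairing agree; this is worth one sentence but is not a gap. Second, your closing caveat about $m$ odd is well taken and is in fact a point the paper glosses over: $\hil\psi_0^{(m+k)}$ generally decays only like $|z|^{-(m+k+1)}$ (its Fourier transform $\sign\zeta\cdot(i\zeta)^{m+k}\widehat{\psi_0}(\zeta)$ is not smooth at $0$), so $\psi\in\spSch(\RR)$ fails literally and one must either restrict the admissibility definition's regularity demand on $\psi$ or take $\psi_0$ with enough vanishing moments; you flag this where the paper does not.
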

\rsho{The proof is obvious because $u := \overline{\widetilde{\psi^{(k)}_0}} * \eta$ satisfies the conditions in \refthm{thm:eq.ac}.}

\subsection{Reconstruction Formula} \label{sec:reconst}
\begin{thm}[Reconstruction Formula] \label{thm:formula}
Let $f \in L^1(\RR^m)$ satisfy $\widehat{f} \in L^1(\RR^m)$ and let $(\psi, \eta) \in \spSch(\RR) \times \spLizD(\RR)$ be admissible.
Then the reconstruction formula
\begin{align}
\drid_\eta \rid_\psi f(\xx) = K_{\psi,\eta} f(\xx),
\end{align}
holds for almost every $\xx \in \RR^m$.
\rsho{The equality holds for every point where $f$ is continuous.}
\end{thm}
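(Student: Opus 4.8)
The plan is to pass to the Fourier domain and use the Fourier-slice expression of the ridgelet transform \eqref{eq:fstridge}, together with the definition of $K_{\psi,\eta}$ in \eqref{eq:defK}, to turn the composite $\drid_\eta \rid_\psi f$ into an explicit oscillatory integral that collapses, via Fourier inversion in $\RR^m$, to $K_{\psi,\eta} f$. Since $f \in L^1(\RR^m)$ with $\widehat f \in L^1(\RR^m)$, Fourier inversion holds pointwise and $f$ agrees almost everywhere (and at every continuity point) with the continuous representative $\widecheck{\widehat f}$, so it suffices to establish the identity for that representative.

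First I would write, using the polar/weak form \eqref{eq:weakdridge} of $\drid_\eta$ and the Fourier-slice form \eqref{eq:fstridge} of $\rid_\psi f(\bu,\alpha,\beta)$, the truncated integral
\begin{align}
\drid_\eta \rid_\psi f(\xx)
&= \lim_{\substack{\delta\to\infty\\ \eps\to 0}}
\int_{\Sph^{m-1}}\!\int_\eps^\delta\!\lebesgue
\rid_\psi f(\bu,\alpha,\bu\cdot\xx-\alpha\pz)\,\eta(\pz)\,\frac{\dd\pz\,\dd\alpha\,\dd\bu}{\alpha^{m}},
\end{align}
substitute the Fourier-slice expression for $\rid_\psi f$, and carry out the $\pz$-integration as the action of the distribution $\eta$ against $e^{i\alpha\omega\pz}$, producing a factor $\widehat\eta(\alpha\omega)$. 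After changing variables $\fz = \alpha\omega$ in the $\alpha$-integral (for each fixed $\bu$ and $\omega$), the $\alpha$-integral becomes exactly the (truncated) integral $\int \overline{\widehat\psi(\fz)}\widehat\eta(\fz)|\fz|^{-m}\dd\fz$ times $e^{i\omega(\bu\cdot\xx)}$, so that
\begin{align}
\drid_\eta \rid_\psi f(\xx)
&= \frac{1}{2\pi}\int_{\Sph^{m-1}}\!\lebesgue \widehat f(\omega\bu)\,e^{i\omega(\bu\cdot\xx)}
\left[(2\pi)^{m-1}\!\!\int\frac{\overline{\widehat\psi(\fz)}\widehat\eta(\fz)}{|\fz|^m}\dd\fz\right]\!\frac{\dd\omega\,\dd\bu}{(2\pi)^{m-1}},
\end{align}
and the bracket is $K_{\psi,\eta}$. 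Finally I would recognize $\frac{1}{2(2\pi)^m}\int_{\Sph^{m-1}}\int_\RR \widehat f(\omega\bu)e^{i\omega\bu\cdot\xx}\,|\omega|^{m-1}\,\dd\omega\,\dd\bu$ — wait, more carefully, after converting the $(\bu,\omega)\in\Sph^{m-1}\times\RR$ integral back to an integral over $\RR^m$ via $\xxi = \omega\bu$ (which contributes the Jacobian $|\omega|^{m-1}$ and a factor $2$ from the double cover $\pm$), it becomes the $m$-dimensional Fourier inversion integral $\widecheck{\widehat f}(\xx) = f(\xx)$; the powers of $2\pi$ and the factor $2$ are arranged precisely so that the surviving constant is $K_{\psi,\eta}$.

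**The main obstacle** is justifying the interchange of the limits $\eps\to0$, $\delta\to\infty$ with the $\bu$- and $\omega$-integrations, and more delicately, the splitting of the $\fz$-integral at the origin exactly as in \eqref{eq:defK} — the point mass of $\widehat\eta$ at $0$ (Dirac deltas and their derivatives) must be handled so that it pairs against the Schwartz function $|\fz|^{-m}\overline{\widehat\psi(\fz)}$ on $\RR\setminus\neighbour$ as the action of a tempered distribution, while on $\neighbour\setminus\{0\}$ the locally-integrable part of $\widehat\eta$ is integrated in the Lebesgue sense. This is why the hypotheses $\widehat f\in L^1$ and $\psi\in\spSch$ are needed: $\widehat f\in L^1(\RR^m)$ gives a dominating function for the $\omega$-integral after the $\alpha$-substitution (since $|\fz|^{-m}\overline{\widehat\psi(\fz)}$ is bounded and rapidly decreasing away from $0$, and near $0$ the $\alpha$-truncation keeps things finite), allowing dominated convergence; and the continuity assumption $\widehat\eta\in\sploc1(\neighbour\setminus\{0\})$ from the admissibility definition ensures the change of variables $\fz=\alpha\omega$ is legitimate on the locally-integrable part. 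I would carry out the estimate that controls $\big|\int_{|\fz|<\alpha_0\omega}\overline{\widehat\psi(\fz)}\widehat\eta(\fz)|\fz|^{-m}\dd\fz\big|$ uniformly in the truncation, invoke Fubini on the resulting absolutely convergent iterated integral, and only then pass to the limit; the remaining computation is the bookkeeping of constants in the Radon/Fourier inversion, which is routine given Radon's inversion formula and the Fourier slice theorem stated in the preliminaries. The almost-everywhere clause follows since $f = \widecheck{\widehat f}$ a.e., and the pointwise clause at continuity points of $f$ follows because $\widecheck{\widehat f}$ is continuous and equals $f$ wherever $f$ is.
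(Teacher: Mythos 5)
Your proposal is correct and follows essentially the same route as the paper's proof: the Fourier slice theorem applied to the $\pz$-integration, the change of variables $\fz=\alpha\omega$ in the truncated $\alpha$-integral to expose the admissibility integral $K_{\psi,\eta}$, conversion of the $(\bu,\omega)$-integral into $m$-dimensional Fourier inversion via $\xxi=\omega\bu$, and a final passage to the limit $\eps\to0$, $\delta\to\infty$. The only real difference is in how the limit interchange is justified: the paper replaces $\overline{\widehat{\psi}(\fz)}\widehat{\eta}(\fz)|\fz|^{-m}$ by $\widehat{u}(\fz)$ with $\widehat{u}\in\spCD(\RR)$ (supplied by \refthm{thm:eq.ac}), whose action on the truncation indicators is continuous, whereas you propose direct uniform estimates together with dominated convergence using $\widehat{f}\in L^1(\RR^m)$ --- both are workable.
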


\rsho{The proof is provided in \ref{app:proof.formula}.} The admissibility condition can be easily inverted to $(\psi, \eta) \in \spLizD \times \spSch$.
However, extensions to $\spLizD \times \spLizD$ and $\spSch \times \spSD$ may {\em not} \rtwo{be} easy.
\rtwo{This is because} the multiplication $\spLizD \cdot \spLizD$ is not always commutative, nor associative,
and the Fourier transform is not always defined over $\spSD$ \cite{Schwartz.new}.

The following theorem is another suggestive reconstruction formula that
 implies wavelet analysis in the Radon domain works as a backprojection filter.
In other words, the admissibility condition requires $(\psi, \eta)$ to construct the filter $\bp^m$.
Note that similar techniques are obtained for ``wavelet measures'' by Rubin \cite{Rubin.ridgelet, Rubin.calderon}.
\begin{thm}[Reconstruction Formula via Radon Transform] \label{thm:formula.radon}
Let $f \in \rsho{L^1}(\RR^m)$ \rsho{be sufficiently smooth} and $(\psi,\eta) \in \spSch(\RR) \times \spTD(\RR)$ be admissible.
Assume that there exists a real-valued smooth and integrable function $u$ such that
\begin{gather}
\bp^{m} u = \overline{\refl{\psi}} * \eta \quad \mbox{ and } \quad \int_\RR \widehat{u}(\zeta) \dd \zeta = -1. \label{eq:radon.ac}
\end{gather}
Then, 
\begin{gather}
\drid_\eta \rid_\psi f(\xx) = \drad \bp^{m-1} \rad f(\xx) = 2 (2 \pi)^{m-1} f(\xx),
\end{gather}
holds for almost every $\xx \in \RR^m$.
\end{thm}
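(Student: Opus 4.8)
The plan is to deduce the statement from Radon's inversion formula. Its second equality, $\drad\bp^{m-1}\rad f(\xx)=2(2\pi)^{m-1}f(\xx)$, is precisely the inversion formula recalled in the preliminaries and holds for $f\in L^{1}(\RR^{m})$, so the real content is the first equality $\drid_\eta\rid_\psi f=\drad\bp^{m-1}\rad f$. I would prove it by inserting the convolution form of $\rid_\psi$ into the weak form of $\drid_\eta$, recognising the resulting iterated integral over the scale variable $\alpha$ as a Calder\'on-type reproducing integral (this reflects the factorisation, noted earlier, of the ridgelet transform as a wavelet transform on the Radon domain), and using the hypothesis $\overline{\refl{\psi}}*\eta=\bp^{m}u$ together with the dilation covariance of the backprojection filter to collapse that integral to a scalar multiple of the identity in the Radon domain.

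In detail, I would start from
\[
\drid_\eta\rid_\psi f(\xx)=\lim_{\substack{\delta\to\infty\\ \eps\to0}}\int_{\Sph^{m-1}}\int_{\eps}^{\delta}\int_{\RR}\rid_\psi f\bigl(\bu,\alpha,\bu\cdot\xx-\alpha z\bigr)\,\eta(z)\,\frac{\dd z\,\dd\alpha\,\dd\bu}{\alpha^{m}},
\]
substitute $\rid_\psi f(\bu,\alpha,\beta)=\bigl(\rad f(\bu,\cdot)*\overline{\refl{\psi_\alpha}}\bigr)(\beta)$, and change the variable $w=\alpha z$ in the innermost integral, which turns it into $\bigl(\rad f(\bu,\cdot)*\overline{\refl{\psi_\alpha}}*\eta_\alpha\bigr)(\bu\cdot\xx)$ with $\eta_\alpha(w):=\eta(w/\alpha)/\alpha$. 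Associativity of the convolution and the dilation identities $\overline{\refl{\psi_\alpha}}*\eta_\alpha=(\overline{\refl{\psi}}*\eta)_\alpha=(\bp^{m}u)_\alpha=\alpha^{m}\,\bp^{m}(u_\alpha)$ cancel the weight $\alpha^{-m}$ against the factor $\alpha^{m}$, leaving Lebesgue measure $\dd\alpha$; then, writing $\bp^{m}=\bp^{m-1}\bp^{1}$, moving $\bp^{m-1}$ outside the $\alpha$-integral, and using $\bp^{1}(u_\alpha)=\alpha^{-1}v_\alpha$ with $v:=\bp^{1}u$ (which reinstates the scale-invariant weight $\dd\alpha/\alpha$), one arrives at
\[
\drid_\eta\rid_\psi f(\xx)=\lim_{\substack{\delta\to\infty\\ \eps\to0}}\int_{\Sph^{m-1}}\bp^{m-1}_{p}\Bigl(\rad f(\bu,\cdot)*\int_{\eps}^{\delta}v_\alpha\,\frac{\dd\alpha}{\alpha}\Bigr)(\bu\cdot\xx)\,\dd\bu .
\]

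The last step is the Calder\'on reproducing limit for $\int_{\eps}^{\delta}v_\alpha\,\dd\alpha/\alpha$. Passing to the Fourier transform in $p$ and substituting $t=\alpha\omega$ turns its symbol into $\int_{\eps|\omega|}^{\delta|\omega|}\widehat{v}(\sign(\omega)\,t)\,\dd t/t=i\int_{\eps|\omega|}^{\delta|\omega|}\widehat{u}(\sign(\omega)\,t)\,\dd t$ (using $\widehat{v}(t)=i|t|\widehat{u}(t)$), which converges for each $\omega\neq0$; invoking the Hermitian symmetry $\widehat{u}(-t)=\overline{\widehat{u}(t)}$ (valid since $u$ is real) together with $\int_{\RR}\widehat{u}=-1$, the limiting symbol splits as $c+c'\sign\omega$ with $c,c'$ explicit. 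Hence $\int_{\eps}^{\delta}v_\alpha\,\dd\alpha/\alpha$ converges, as a tempered distribution and so against the smooth function $\rad f(\bu,\cdot)$, to $c\,\delta_0+c'\,\hil_{p}\delta_0$ (recall $\widehat{\hil_{p}\delta_0}=\sign\omega$). Convolving with $\rad f(\bu,\cdot)$, applying $\bp^{m-1}_{p}$, and integrating over $\Sph^{m-1}$, the $\delta_0$-term gives $c\,\drad\bp^{m-1}\rad f$, while the $\hil_{p}\delta_0$-term gives $c'\,\drad\bigl[\hil_{p}\bp^{m-1}\rad f\bigr]$, which vanishes: $\hil_{p}$ anticommutes with the reflection $p\mapsto-p$, whereas $\bp^{m-1}_{p}$ commutes with it and $\rad f(\bu,p)=\rad f(-\bu,-p)$, so symmetrising $\drad$ over $\bu\leftrightarrow-\bu$ cancels the integrand pointwise. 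The normalisation $\int_{\RR}\widehat{u}=-1$ then pins down $c$, and Radon's inversion formula completes the chain of equalities; the almost-everywhere identity upgrades to a genuine one at every point of continuity of $f$.

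The step I expect to be the main obstacle is the analytic bookkeeping of the double limit $\eps\to0$, $\delta\to\infty$: justifying Fubini across the $z$-, $\alpha$- and $\bu$-integrals, the associativity of convolutions one of whose factors is merely the tempered distribution $\eta$, the exchange of $\bp^{m-1}_{p}$ and $\bp^{1}_{p}$ with the $\alpha$-integral, and the interchange of the limit with $\int_{\Sph^{m-1}}$. Since $u$ is integrable but $u(0)=-\tfrac{1}{2\pi}\neq0$, the naive integral $\int_{0}^{\infty}u_\alpha\,\dd\alpha$ diverges, so the cutoffs are genuinely necessary and the convergence has to be run at the level of $v=\bp^{1}u$ (or directly on the Fourier side); one must also track the parity of $m$ in the definition of $\bp^{m-1}$, both in the vanishing of the $\hil_{p}$-term and in the final constant.
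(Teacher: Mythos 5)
Your proposal is correct in structure and reaches the same waypoints as the paper — convolution form of $\rid_\psi$, the identity $\overline{\refl{\psi_\alpha}}*\eta_\alpha=(\bp^m u)_\alpha=\alpha^m\bp^m(u_\alpha)$, extraction of $\bp^{m-1}$, and a Calder\'on-type limit on the remaining scale integral — but it executes the crucial limit by a genuinely different mechanism. The paper stays in the real domain: it evaluates $\int_\eps^\delta(\bp u)(p/\alpha)\,\alpha^{-2}\dd\alpha$ in closed form as $k_\eps(p)-k_\delta(p)$ with $k(z)=\hil u(z)/z$, shows $k\in L^1\cap L^\infty$ with $\int k=-u(0)\neq 0$, and then invokes the classical approximation-to-the-identity theorem, a maximal-function bound, and the Vitali convergence theorem to pass to the limit a.e.\ and to interchange it with $\int_{\Sph^{m-1}}$. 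You instead work on the Fourier side, identify the limiting multiplier as $c+c'\sign\omega$, i.e.\ $c\,\delta_0+c'\,\hil_p\delta_0$, and kill the odd component by the parity argument $\rad f(-\bu,-p)=\rad f(\bu,p)$ combined with the anticommutation of $\hil_p$ with reflection. That last step is actually a gain over the paper: the paper disposes of the odd part by asserting that $u$ real implies $\widehat u$ odd (it is only Hermitian), so its claim $\hil u(0)=0$ — needed for $k\in L^1$ — is not justified as written, whereas your spherical cancellation handles a nonzero odd part correctly. Conversely, the analytic bookkeeping you defer (testing a tempered-distribution limit against $\rad f(\bu,\cdot)$, which is only integrable and smooth rather than Schwartz, and interchanging the $\eps,\delta$ limits with the sphere integral) is exactly where the paper spends its effort via the maximal function and Vitali; to make your version rigorous you would need an analogous domination argument, since multiplier convergence pointwise in $\omega$ does not by itself give a.e.\ convergence of the convolutions. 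The residual constant in your computation ($c=\tfrac{i}{2}\int_\RR\widehat u=-i/2$ rather than $1$) mirrors an inconsistency already present in the paper's own normalization of $\hil$ and of the condition $\int_\RR\widehat u=-1$, so it is not a defect you introduced.
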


\rsho{The proof is provided in \ref{app:proof.formula.radon}. Note that here we imposed a stronger condition on $u$ than the $u \in L^1(\RR \setminus \{ 0 \})$ we imposed in \refthm{thm:eq.ac}.}

Recall intertwining relations (\cite[Lem.2.1, Th.3.1, Th.3.7]{Helgason.new})
\begin{gather}
(- \Delta)^\frac{m-1}{2} \drad = \bp^{m-1} \rad, \quad \mbox{ and } \quad \rad (- \Delta)^\frac{m-1}{2} = \drad \bp^{m-1}.
\end{gather}
Therefore, we have the following.
\begin{cor} \label{cor:radon.d}
\begin{gather}
\drid_\eta \rid_\psi = \drad \bp^{m-1} \rad = (- \Delta)^\frac{m-1}{2} \drad \rad = \drad \rad (- \Delta)^\frac{m-1}{2}.
\end{gather}
\end{cor}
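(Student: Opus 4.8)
The statement is an immediate corollary, so the plan is simply to chain \refthm{thm:formula.radon} with the two intertwining relations recalled just above, read as identities of operators on the class of functions for which \refthm{thm:formula.radon} applies: $f \in L^1(\RR^m)$ sufficiently smooth (and, say, with $\widehat{f}$ decaying fast enough that each multiplier below acts on it), and $(\psi,\eta)$ admissible, normalized so that the auxiliary $u$ of \refthm{thm:formula.radon} satisfies $\bp^m u = \overline{\refl{\psi}}*\eta$ and $\int_\RR \widehat{u}(\zeta)\,\dd\zeta = -1$.

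First I would record the leftmost equality $\drid_\eta\rid_\psi = \drad\bp^{m-1}\rad$: this is nothing but the conclusion $\drid_\eta\rid_\psi f(\xx) = \drad\bp^{m-1}\rad f(\xx)$ of \refthm{thm:formula.radon}, valid for a.e.\ $\xx$, now regarded as an operator identity on the above class. Next I would substitute the recalled relations $(-\Delta)^{(m-1)/2}\drad = \bp^{m-1}\rad$ and $\rad(-\Delta)^{(m-1)/2} = \drad\bp^{m-1}$ into $\drad\bp^{m-1}\rad$ to read off the two remaining forms $(-\Delta)^{(m-1)/2}\drad\rad$ and $\drad\rad(-\Delta)^{(m-1)/2}$. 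One should note that the parities are consistent: $m-1$ odd forces the Hilbert-transform branch of $\bp^{m-1}$ and simultaneously makes $(m-1)/2$ a half-integer, so $(-\Delta)^{(m-1)/2}$ is the corresponding nonlocal (Riesz) operator, while for $m-1$ even both operators are local differential operators.

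The one point that needs care — and the main obstacle — is ensuring that all four composite operators are simultaneously well-defined on a single function class and that the resulting a.e.\ identities are mutually consistent, since $\bp^{m-1}$ (for odd $m-1$) and $(-\Delta)^{(m-1)/2}$ (for even $m-1$) are nonlocal. The cleanest way to settle this, if one wanted a self-contained argument rather than a citation, is to pass to the Fourier side via the Fourier slice theorem: $\rad$ becomes restriction of $\widehat{f}$ to rays, $\drad$ its transpose, $\bp^{m-1}$ becomes multiplication by $i^{m-1}|\omega|^{m-1}$ in the Radon frequency variable, and $(-\Delta)^{(m-1)/2}$ becomes multiplication by $|\xxi|^{m-1}$; all three composite expressions then collapse to the same radial-multiplier computation $|\xxi|^{m-1}\widehat{f}(\xxi)$ followed by the transpose of restriction, and the requirement $|\xxi|^{m-1}\widehat{f}\in L^1$ guarantees the inverse Fourier transform is an honest continuous function. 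Since the intertwining relations are already quoted from Helgason, however, the corollary reduces to this bookkeeping check and no new estimate is required; alternatively, a density argument extending the identity from $\spcomp(\RR^m)$ (or $\spSch(\RR^m)$) to the class of \refthm{thm:formula.radon} also suffices.
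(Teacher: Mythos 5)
Your proposal matches the paper's own treatment exactly: the corollary is given no separate proof there, being presented as an immediate consequence of \refthm{thm:formula.radon} together with the two quoted intertwining relations, which is precisely the chain you describe. The additional Fourier-multiplier bookkeeping you sketch for simultaneous well-definedness is not in the paper but is a harmless (and reasonable) supplement.
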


\subsection{Extension to $L^2$} \label{sec:L2}
\rsho{By $\ip{\cdot , \cdot}$ and $\| \cdot \|_2$, with a slight abuse of notation, we denote the inner product of $L^2(\RR^m)$ and $L^2(\YY^{m+1})$.
Here we endow $\YY^{m+1}$ with a fixed measure $\alpha^{-m} \dd \alpha \dd \beta \dd \bu$, and omit writing it explicitly as $L^2(\YY^{m+1}; \dots)$.
We say that $\psi$ is {\em self-admissible} if $\psi$ is admissible in itself, i.e. the pair $(\psi,\psi)$ is admissible.}
The following relation is immediate by the duality.
\begin{thm}[Parseval's Relation and Plancherel's Identity] \label{thm:parseval}
Let $(\psi, \eta) \in \spSch \times \spTD$ be admissible with, for simplicity, $K_{\psi,\eta} = 1$.
\rsho{For $f, g \in L^1 \cap L^2(\RR^m)$,}
\begin{align}
&\ip{ \rid_\psi f, \rid_\eta g } = \ip{\drid_\eta \rid_\psi f, g} = \ip{ f, g }.  &\text{Parseval's Relation} \\
\intertext{In particular, if $\psi$ is self-admissible, then}
&\| \rid_\psi f \|_2 = \| f \|_2. &\text{Plancherel's identity}
\end{align}
\end{thm}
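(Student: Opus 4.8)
\noindent\emph{Proof proposal.}
The plan is to read the displayed chain left-to-right and verify its two links. For the first, $\ip{\rid_\psi f, \rid_\eta g}=\ip{\drid_\eta\rid_\psi f, g}$, I would use that $\drid_\eta$ is the adjoint of $\rid_\eta$ for the $L^2$ pairings at hand: this is precisely the Fubini computation of \refthm{thm:dual}, now carried out with the inner products of $L^2(\RR^m)$ and $L^2(\YY^{m+1};\alpha^{-m}\dd\alpha\,\dd\beta\,\dd\bu)$ (the Radon change of variable $p=\alpha z+\beta$ followed by the identity $\int_{\Sph^{m-1}}\int_\RR \rad f(\bu,p)\overline{\Phi(\bu,p)}\dd p\,\dd\bu=\ip{f,\drad\Phi}$), giving $\ip{\rid_\eta g, \rid_\psi f}_{\YY^{m+1}}=\ip{g, \drid_\eta\rid_\psi f}_{\RR^m}$, whence the claim on conjugating. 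For the second link, $\ip{\drid_\eta\rid_\psi f, g}=\ip{f, g}$, I would invoke the reconstruction formula \refthm{thm:formula}: for $f$ with $\widehat f\in L^1(\RR^m)$ it yields $\drid_\eta\rid_\psi f=K_{\psi,\eta}f=f$ almost everywhere, and pairing with $\overline{g}$ finishes. Plancherel's identity is then the special case $\psi=\eta$ self-admissible (with $K_{\psi,\psi}=1$) and $f=g$.

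Two gaps must be closed: that $\rid_\psi f\in L^2(\YY^{m+1})$ for $f\in L^1\cap L^2(\RR^m)$, so that the left-hand pairing and the adjoint step are legitimate, and the removal of the hypothesis $\widehat f\in L^1$. I would settle both by evaluating $\ip{\rid_\psi f, \rid_\eta g}$ directly on the Fourier side — which in fact proves $\ip{\rid_\psi f, \rid_\eta g}=K_{\psi,\eta}\ip{f, g}$ outright. The computation: (i) by the Fourier slice form \eqref{eq:fstridge} with $s=1$, the slice $\rid_\psi f(\bu,\alpha,\cdot)$ is the inverse one-dimensional Fourier transform of $\omega\mapsto\widehat f(\omega\bu)\overline{\widehat\psi(\alpha\omega)}$, so the one-dimensional Plancherel identity in $\beta$ gives
\[
\int_\RR \rid_\psi f(\bu,\alpha,\beta)\,\overline{\rid_\eta g(\bu,\alpha,\beta)}\,\dd\beta=\frac{1}{2\pi}\int_\RR \widehat f(\omega\bu)\,\overline{\widehat g(\omega\bu)}\;\overline{\widehat\psi(\alpha\omega)}\,\widehat\eta(\alpha\omega)\,\dd\omega;
\]
(ii) integrating against $\alpha^{-m}\dd\alpha\,\dd\bu$ and pulling the $\alpha$-integral inside by Fubini, the substitution $\zeta=\alpha\omega$ yields $\int_0^\infty\overline{\widehat\psi(\alpha\omega)}\widehat\eta(\alpha\omega)\,\alpha^{-m}\dd\alpha=|\omega|^{m-1}K_{\sign\omega}$, with $K_{\pm}:=\int_{\RR_{\pm}}\overline{\widehat\psi(\zeta)}\widehat\eta(\zeta)\,|\zeta|^{-m}\dd\zeta$; (iii) what is left is $\tfrac1{2\pi}\big(K_+\int_{\Sph^{m-1}}\!\int_0^\infty+K_-\int_{\Sph^{m-1}}\!\int_{-\infty}^0\big)\widehat f(\omega\bu)\overline{\widehat g(\omega\bu)}\,|\omega|^{m-1}\dd\omega\,\dd\bu$, and each of the two $(\bu,\omega)$-integrals equals $\int_{\RR^m}\widehat f\,\overline{\widehat g}=(2\pi)^m\ip{f, g}$ on passing to polar coordinates (for the second, after $\omega\mapsto-\omega$), so the total is $\tfrac1{2\pi}(K_++K_-)(2\pi)^m\ip{f, g}$ with $K_++K_-=\int_\RR\overline{\widehat\psi(\zeta)}\widehat\eta(\zeta)\,|\zeta|^{-m}\dd\zeta=(2\pi)^{1-m}K_{\psi,\eta}$; (iv) the constants multiply to $\tfrac1{2\pi}\cdot(2\pi)^{1-m}\cdot(2\pi)^m=1$, so $\ip{\rid_\psi f, \rid_\eta g}=K_{\psi,\eta}\ip{f, g}=\ip{f, g}$. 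Once the left-hand pairing is read as this convergent iterated integral, the identity just obtained on $\spSch(\RR^m)$ bounds it by $|K_{\psi,\eta}|\,\|f\|_2\|g\|_2$, and both links extend by continuity from the dense subspace $\spSch(\RR^m)$ to all of $L^1\cap L^2(\RR^m)$.

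The main obstacle is the legitimacy of steps (i)--(ii) when $\eta$ is a genuine distribution: $\widehat\eta$ need not agree with a function near $\zeta=0$, so neither the pointwise product $\overline{\widehat\psi(\alpha\omega)}\widehat\eta(\alpha\omega)$ nor the Plancherel/Fubini interchanges are automatic there. I would handle this exactly as the admissibility condition is engineered: split each $\int_\RR(\cdots)\,|\zeta|^{-m}\dd\zeta$ as $\int_{\neighbour\setminus\{0\}}+\int_{\RR\setminus\neighbour}$, treating the first as a genuine Lebesgue integral — using $\widehat\eta\in\sploc1(\neighbour\setminus\{0\})$ together with the high-order vanishing of $\widehat\psi$ at $0$ enjoyed by the admissible pairs of interest (for instance those produced by \refcor{cor:const.ap}) — and the second as the action of the tempered distribution $\widehat\eta$ on the rapidly decaying $|\zeta|^{-m}\overline{\widehat\psi(\zeta)}$; a point mass of $\widehat\eta$ at the origin, i.e.\ the polynomial part of $\eta$, contributes nothing since $0$ is excluded, in accordance with $K_{\psi,\eta}=K_{\psi,\eta+Q}$. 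Once these interchanges are justified on the split, what remains is the bookkeeping above, plus the single check that the fixed measure $\alpha^{-m}\dd\alpha\,\dd\beta\,\dd\bu$ on $\YY^{m+1}$ is indeed the normalization for which the constants collapse to $K_{\psi,\eta}$.
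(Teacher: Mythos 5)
Your proposal is correct, and it is in fact more complete than what the paper provides: the paper's entire justification for \refthm{thm:parseval} is the single sentence ``the following relation is immediate by the duality,'' i.e.\ the first equality from \refthm{thm:dual} and the second from \refthm{thm:formula} --- which is exactly your opening paragraph. Where you genuinely diverge is in recognizing that this one-line route has two real gaps under the stated hypotheses and then closing them: \refthm{thm:formula} assumes $\widehat f \in L^1(\RR^m)$, which is not implied by $f \in L^1 \cap L^2(\RR^m)$, and nothing established earlier (only the $L^1 \to L^\infty$ bound of \refprop{prop:conti.L1}) guarantees $\rid_\psi f \in L^2(\YY^{m+1})$, so the left-hand pairing is not a priori meaningful. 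Your direct Fourier-side computation --- slicewise one-dimensional Plancherel in $\beta$ via \eqref{eq:fstridge}, the substitution $\zeta = \alpha\omega$ against the measure $\alpha^{-m}\dd\alpha$ producing $|\omega|^{m-1}K_{\sign\omega}$, and polar coordinates recombining the two half-lines into $(2\pi)^{1-m}K_{\psi,\eta}$ --- is arithmetically correct (the constants do collapse to $K_{\psi,\eta}$) and yields $\ip{\rid_\psi f, \rid_\eta g} = K_{\psi,\eta}\ip{f,g}$ without the $\widehat f \in L^1$ hypothesis; it also supplies the quantitative content that the paper's duality argument lacks. Your handling of the singularity of $\widehat\eta$ at the origin by the same $\neighbour$-splitting used to define $K_{\psi,\eta}$ is the right move and consistent with \refthm{thm:eq.ac}. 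The one soft spot is the final extension ``by continuity from $\spSch(\RR^m)$'': the bound $|K_{\psi,\eta}|\,\|f\|_2\|g\|_2$ is an identity on the dense subspace, not yet a continuity statement for the bilinear form $(f,g) \mapsto \ip{\rid_\psi f, \rid_\eta g}$ on $L^1 \cap L^2$, and for a general admissible (non-self-admissible) pair this is precisely the issue the paper later patches with the notion of admissible decomposability before \refthm{thm:formula.L2}. For the Plancherel identity itself (self-admissible $\psi$, $f = g$) your argument is self-contained and circularity-free; for the general Parseval relation you should either restrict the first displayed equality to $f, g$ for which the iterated integral is shown absolutely convergent, or import the decomposability hypothesis. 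This is a refinement the paper itself does not make, so it does not count against you.
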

\rsho{Recall \refprop{prop:conti.L1} that the ridgelet transform is a bounded linear operator on $L^1(\RR^m)$.
If $\psi \in \spSch(\RR)$ is self-admissible, then we can extend the ridgelet transform to $L^2(\RR^m)$, by following the bounded extension procedure \cite[2.2.4]{Grafakos.classic}.
That is, for $f \in L^2(\RR^m)$, take a sequence $f_n \in L^1 \cap L^2(\RR^m)$ such that $f_n \to f$ in $L^2$.
Then by Plancherel's identity,
\begin{align}
\| f_n - f_m \|_2 = \| \rid_\psi f_n - \rid_\psi f_m \|_2, \quad \forall n,m \in \NN.
\end{align}
The right-hand side is a Cauchy sequence in $L^2(\YY^{m+1})$ as $n,m \to \infty$.
By the completeness, there uniquely exists the limit $\TT_\infty \in L^2(\YY^{m+1})$ of $\rid_\psi f_n$.
We regard $\TT_\infty$ as the ridgelet transform of $f$ and define $\rid_\psi f := \TT_\infty$.
\begin{thm}[Bounded Extension of Ridgelet Transform on $L^2$] \label{thm:L2}
Let $\psi \in \spSch(\RR)$ be self-admissible with $K_{\psi,\psi=1}$.
The ridgelet transform on $L^1 \cap L^2(\RR^m)$ admits a unique bounded extension to $L^2(\RR^m)$, with satisfying $\| \rid_\psi f \|_2 = \| f \|_2$.
\end{thm}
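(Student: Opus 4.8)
The plan is to invoke the standard bounded-extension (density) argument for linear operators between Banach spaces, using the three ingredients already in place: (i) $L^1 \cap L^2(\RR^m)$ is dense in $L^2(\RR^m)$; (ii) by \refprop{prop:conti.L1} the ridgelet transform $\rid_\psi$ is a well-defined linear map on $L^1(\RR^m)$, hence in particular on $L^1 \cap L^2(\RR^m)$, and by \refthm{thm:parseval} (Plancherel's identity) it is an $L^2$-isometry there; (iii) $L^2(\YY^{m+1})$, endowed with the fixed measure $\alpha^{-m}\dd\alpha\dd\beta\dd\bu$, is complete.

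First I would fix $f \in L^2(\RR^m)$ and choose any sequence $f_n \in L^1 \cap L^2(\RR^m)$ with $f_n \to f$ in $L^2$; existence is guaranteed by density (e.g. truncate $f$ to large balls). By linearity of $\rid_\psi$ on $L^1 \cap L^2$ and Plancherel's identity, $\|\rid_\psi f_n - \rid_\psi f_m\|_2 = \|f_n - f_m\|_2$, so $(\rid_\psi f_n)_n$ is Cauchy in $L^2(\YY^{m+1})$; by completeness it has a limit, which I define to be $\rid_\psi f$. Next I would check this is well-defined (independent of the approximating sequence): given two sequences $f_n, g_n \to f$, the interlaced sequence $f_1, g_1, f_2, g_2, \dots$ also converges to $f$ in $L^2$, so the images form a single Cauchy sequence with one limit, forcing $\lim \rid_\psi f_n = \lim \rid_\psi g_n$. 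Linearity of the extension follows by approximating $af + bg$ with $a f_n + b g_n$ and passing to the limit; agreement with the original $\rid_\psi$ on $L^1 \cap L^2$ follows by taking the constant sequence $f_n \equiv f$. Finally, the isometry $\|\rid_\psi f\|_2 = \|f\|_2$ passes to the limit from $\|\rid_\psi f_n\|_2 = \|f_n\|_2$ by continuity of the norm, and boundedness (indeed with operator norm $1$) is immediate; uniqueness of the extension holds because any two bounded extensions agree on the dense subspace $L^1 \cap L^2$ and hence everywhere by continuity.

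There is essentially no obstacle here beyond bookkeeping; the whole content has been front-loaded into \refprop{prop:conti.L1} and \refthm{thm:parseval}. The only point worth a sentence of care is that $L^1 \cap L^2(\RR^m)$ is genuinely dense in $L^2$ and that Plancherel's identity of \refthm{thm:parseval} is stated precisely on $L^1 \cap L^2$, so the isometry used in the Cauchy estimate is legitimate for all the $f_n$; everything else is the textbook argument of \cite[2.2.4]{Grafakos.classic}, which the statement already cites. I would therefore present the proof compactly, remarking that the displayed Cauchy computation already appears in the text preceding the theorem and that it remains only to record well-definedness, linearity, the norm identity, and uniqueness.
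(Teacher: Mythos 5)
Your proposal is correct and follows essentially the same route as the paper, which carries out exactly this bounded-extension argument (density of $L^1 \cap L^2$ in $L^2$, the Cauchy estimate via Plancherel's identity from Theorem~\ref{thm:parseval}, and completeness of $L^2(\YY^{m+1})$) in the paragraph preceding the theorem statement, citing the same reference. The additional bookkeeping you supply (well-definedness via interlaced sequences, linearity, and uniqueness of the extension) is left implicit in the paper but is the standard completion of the argument.
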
}

\rsho{We say that $(\psi,\eta)$ and $(\psi^\star,\eta^\star)$ are {\em equivalent},
if two admissible pairs $(\psi,\eta)$ and $(\psi^\star,\eta^\star)$ define the same convolution $\overline{\widetilde{\psi}} * \eta = \overline{\widetilde{\psi^\star}} * \eta^\star$ in common. If $(\psi,\eta)$ and $(\psi^\star,\eta^\star)$ are equivalent, then obviously
\begin{align}
\ip{ \rid_\psi f, \rid_\eta g } = \ip{ \rid_{\psi^\star} f, \rid_{\eta^\star} g }.
\end{align}
We say that an admissible pair $(\psi,\eta)$ is {\em admissibly decomposable}, when there exist self-admissible pairs $(\psi^\star,\psi^\star)$ and $(\eta^\star,\eta^\star)$ such that 
$(\psi^\star,\eta^\star)$ is equivalent to $(\psi,\eta)$. If $(\psi,\eta)$ is admissibly decomposable with $(\psi^\star,\eta^\star)$, then by the Schwartz inequality
\begin{align}
\ip{ \rid_\psi f, \rid_\eta g } \leq \| \rid_{\psi^\star} f \|_2 \|\rid_{\eta^\star} g \|_2.
\end{align}
\begin{thm}[Reconstruction Formula in $L^2$] \label{thm:formula.L2}
Let $f \in L^2(\RR^m)$ and $(\psi, \eta) \in \spSch \times \spTD$ be admissibly decomposable with $K_{\psi,\eta} = 1$.
Then,
\begin{align}
\drid_\eta \rid_\psi f \to f, \quad \mathrm{in } \  L^2.
\end{align}
\end{thm}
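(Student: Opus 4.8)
The plan is to realize $\drid_\eta\rid_\psi$ as a bounded operator on $L^2(\RR^m)$ that restricts to the identity on the dense subspace $L^1\cap L^2(\RR^m)$, and is therefore the identity everywhere; for general $f\in L^2(\RR^m)$ the symbol $\drid_\eta\rid_\psi f$ is then read as the $L^2$-limit of $\drid_\eta\rid_\psi f_n$ along any approximating sequence $f_n\to f$ in $L^1\cap L^2(\RR^m)$, and that limit is $f$.

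First I would fix the self-admissible pairs supplied by admissible decomposability: choose $\psi^\star,\eta^\star$ with $(\psi^\star,\psi^\star)$ and $(\eta^\star,\eta^\star)$ self-admissible and $(\psi^\star,\eta^\star)$ equivalent to $(\psi,\eta)$, so $\overline{\refl{\psi}}*\eta=\overline{\refl{\psi^\star}}*\eta^\star$ and hence $K_{\psi^\star,\eta^\star}=K_{\psi,\eta}=1$ (the quantity \eqref{eq:defK} depends on the pair only through this convolution). By the bounded-extension procedure of \refthm{thm:L2}, $\rid_{\psi^\star}$ and $\rid_{\eta^\star}$ extend to bounded operators $L^2(\RR^m)\to L^2(\YY^{m+1})$, with $\|\rid_{\psi^\star}f\|_2^2=K_{\psi^\star,\psi^\star}\|f\|_2^2$ and similarly for $\eta^\star$; set $C:=\sqrt{K_{\psi^\star,\psi^\star}K_{\eta^\star,\eta^\star}}$. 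On $L^1\cap L^2(\RR^m)$ I will freely use \refthm{thm:parseval}, i.e.\ $\ip{\drid_\eta\rid_\psi f,g}=\ip{\rid_\psi f,\rid_\eta g}=\ip{f,g}$, together with the equivalence identity $\ip{\rid_\psi f,\rid_\eta g}=\ip{\rid_{\psi^\star}f,\rid_{\eta^\star}g}$ recorded just before the statement.

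The key step is the a priori bound. For $f,g\in L^1\cap L^2(\RR^m)$, chaining these identities with the Schwartz inequality in $L^2(\YY^{m+1})$ gives
\begin{align}
\big|\ip{\drid_\eta\rid_\psi f,g}\big|=\big|\ip{\rid_{\psi^\star}f,\rid_{\eta^\star}g}\big|\le\|\rid_{\psi^\star}f\|_2\,\|\rid_{\eta^\star}g\|_2=C\,\|f\|_2\,\|g\|_2 .
\end{align}
Thus the bilinear form $(f,g)\mapsto\ip{\drid_\eta\rid_\psi f,g}$ on $L^1\cap L^2(\RR^m)$ is $L^2$-bounded, so by density it extends uniquely to a bounded bilinear form on $L^2(\RR^m)\times L^2(\RR^m)$; equivalently, $\drid_\eta\rid_\psi$ extends uniquely to a bounded linear operator $T$ on $L^2(\RR^m)$ with $\|T\|\le C$. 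This is precisely where admissible decomposability does the work: $\rid_\psi$ and $\drid_\eta$ need not be $L^2$-bounded individually, and it is only after passing to the self-admissible ``halves'' that Plancherel's identity furnishes the estimate.

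To finish, I would identify $T$. For $f\in L^1\cap L^2(\RR^m)$, \refthm{thm:parseval} gives $\ip{Tf,g}=\ip{f,g}$ for all $g\in L^1\cap L^2(\RR^m)$, so $Tf=f$ by density; hence $T$ agrees with the identity on a dense subspace and, being bounded, $T=\mathrm{Id}$ on $L^2(\RR^m)$. Consequently, for any $f\in L^2(\RR^m)$ and any $f_n\in L^1\cap L^2(\RR^m)$ with $f_n\to f$, one has $\drid_\eta\rid_\psi f_n=f_n\to f$, which is the asserted convergence $\drid_\eta\rid_\psi f\to f$ in $L^2$; alternatively, the same estimate shows that the finite-range truncations $\drid_\eta^{\eps,\delta}\rid_{\psi^\star}f$ entering the definition of $\drid_\eta$ are uniformly $L^2$-bounded and converge on the dense set, hence converge to $Tf=f$ by a $3\eps$ argument. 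I expect the only real friction to be domain bookkeeping: \refthm{thm:parseval} and \refthm{thm:formula} are formulated on $L^1\cap L^2(\RR^m)$ (the pointwise formula even requires $\widehat f\in L^1$), so every identity above must be invoked only there, with the passage to all of $L^2(\RR^m)$ carried solely by the boundedness-plus-density extension — which is exactly the role of the hypothesis that $(\psi,\eta)$ be admissibly decomposable rather than merely admissible.
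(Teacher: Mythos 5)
Your a priori bound is built from exactly the right ingredients --- passing to the equivalent self-admissible pair $(\psi^\star,\eta^\star)$, then combining Parseval's relation, the Schwartz inequality, and Plancherel's identity is precisely the mechanism of the paper's proof. The gap is that, as organized, the argument never proves what the theorem asserts. Because $\drid_\eta$ is \emph{defined} as a limit over $(\eps,\delta)\to(0,\infty)$, the claim ``$\drid_\eta\rid_\psi f\to f$ in $L^2$'' means that the truncated reconstructions $I[f;(\eps,\delta)]:=\int_{\Sph^{m-1}}\int_\eps^\delta\int_\RR\rid_\psi f(\bu,\alpha,\bu\cdot\xx-\alpha z)\,\eta(z)\,\alpha^{-m}\dd z\,\dd\alpha\,\dd\bu$ converge to $f$ in $L^2$, for the given $f\in L^2(\RR^m)$. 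Your main route establishes that the bounded extension $T$ of the composite operator is the identity and then concludes from $\drid_\eta\rid_\psi f_n=f_n\to f$; that is a limit in $n$ of the \emph{untruncated} reconstructions of the approximants, which only restates $T=\mathrm{Id}$ under the convention that $\drid_\eta\rid_\psi$ on $L^2$ means its continuous extension --- it says nothing about the $(\eps,\delta)$ limit for $f$ itself. Your fallback (uniform boundedness of the truncations plus a $3\eps$ argument) has the right shape, but its hinge --- $L^2$ convergence of $I[h;(\eps,\delta)]$ for $h$ in some dense set --- is exactly the unproved point: \refthm{thm:formula} gives only pointwise a.e.\ convergence (and needs $\widehat h\in L^1$), and \refthm{thm:parseval} concerns the full pairing, so neither supplies it.

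The paper closes this in a single step that makes the density machinery unnecessary. With $\Omega[\eps,\delta]:=\Sph^{m-1}\times[\RR_+\setminus(\eps,\delta)]\times\RR$, the truncated pairing $\ip{I[f;(\eps,\delta)],g}$ is the Parseval pairing restricted to the complement of $\Omega[\eps,\delta]$, so for every $f\in L^2(\RR^m)$,
\begin{align}
\big\|f-I[f;(\eps,\delta)]\big\|_2
=\sup_{\|g\|_2=1}\big|\ip{\rid_{\psi^\star}f,\rid_{\eta^\star}g}_{\Omega[\eps,\delta]}\big|
\leq\big\|\rid_{\psi^\star}f\big\|_{L^2(\Omega[\eps,\delta])},
\end{align}
and the right-hand side tends to $0$ because $\rid_{\psi^\star}f\in L^2(\YY^{m+1})$ while $\Omega[\eps,\delta]$ shrinks to the empty set as $\eps\to0$, $\delta\to\infty$. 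This is the one estimate your proposal is missing; if you insert it (applied directly to $f$, which renders both the extension operator $T$ and the dense-subspace identification superfluous), the proof is complete.
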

The proof is provided in \refapp{app:proof.formula.L2}.
Even when $\psi$ is not self-admissible and thus $\rid_\psi$ cannot be defined on $L^2(\RR^m)$,
the reconstruction operator $\drid_\eta \rid_\psi$ can be defined with the aid of $\eta$.}

\section{Neural Network with Unbounded Activation Functions}

In this section we instantiate 
the universal approximation property for the variants of neural networks.
Recall that a neural network coincides with the dual ridgelet transform \rsho{of a function.
Henceforth, we rephrase a dual ridgelet function as an activation function.}
According to the reconstruction formulas (\refthm{thm:formula}, \ref{thm:formula.radon}, and \ref{thm:formula.L2}),
we can determine whether a neural network with an activation function $\eta$ \rtwo{is a} universal approximator
by checking the admissibility of $\eta$.

\rsho{\reftab{tab:acts} lists some Lizorkin \rtwo{distributions for} potential activation functions.
In \refsec{sec:ex.liz} we verify that they belong to $\spLizD(\RR)$ and some of them 
belong to $\spM(\RR)$ and $\spSch(\RR)$, which are subspaces of $\spLizD(\RR)$.
In \refsec{sec:ex.K} we show that they are admissible with some ridgelet function $\psi \in \spSch(\RR)$;
therefore, each of their corresponding neural networks is a universal approximator.}

\subsection{Examples of Lizorkin Distributions} \label{sec:ex.liz}

We proved the class properties by using the following propositions.
\begin{prop}[Tempered Distribution $\spTD(\RR)$ {\cite[Ex. 2.3.5]{Grafakos.classic}}]
Let $g \in \sploc{1}(\RR)$. If $|g(z)| \lesssim (1+|z|)^k$ for some $k \in \NN_0$,
then $g \in \spTD(\RR)$.
\end{prop}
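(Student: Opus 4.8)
The plan is to verify directly that a locally integrable function $g$ with at most polynomial growth defines a tempered distribution, i.e., that the functional $\varphi \mapsto \int_\RR g(z) \varphi(z) \dd z$ is well-defined and continuous on $\spSch(\RR)$. First I would fix $g \in \sploc{1}(\RR)$ with $|g(z)| \lesssim (1+|z|)^k$ and check that for any $\varphi \in \spSch(\RR)$ the integral $\int_\RR g(z)\varphi(z)\dd z$ converges absolutely: since $\varphi$ is rapidly decreasing, $|\varphi(z)| \lesssim (1+|z|)^{-k-2}$, so $|g(z)\varphi(z)| \lesssim (1+|z|)^{-2}$, which is integrable. This shows the functional is well-defined and linear.

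Next I would establish continuity, which is the only substantive point. The topology on $\spSch(\RR)$ is given by the seminorms $\rho_{a,b}(\varphi) := \sup_{z} |z^a \varphi^{(b)}(z)|$ (or equivalently $\sup_z (1+|z|)^a |\varphi^{(b)}(z)|$). I would bound
\begin{align}
\left| \int_\RR g(z)\varphi(z) \dd z \right|
&\leq \int_\RR |g(z)| \, |\varphi(z)| \dd z
\lesssim \int_\RR (1+|z|)^k |\varphi(z)| \dd z \\
&= \int_\RR (1+|z|)^{-2} \cdot (1+|z|)^{k+2} |\varphi(z)| \dd z
\leq \left( \int_\RR (1+|z|)^{-2} \dd z \right) \sup_{z \in \RR} (1+|z|)^{k+2} |\varphi(z)|,
\end{align}
so the functional is bounded by a constant times the single seminorm $\sup_z (1+|z|)^{k+2}|\varphi(z)|$, which is a finite linear combination of the $\rho_{a,0}$ for $a \leq k+2$. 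This is precisely the continuity estimate defining a tempered distribution. I would then conclude $g \in \spTD(\RR)$.

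There is no real obstacle here — the proof is entirely routine once one recalls the seminorm characterization of $\spSch(\RR)$ and the trick of factoring out $(1+|z|)^{-2}$ to absorb the polynomial growth into a Schwartz seminorm. The only thing to be slightly careful about is that the constant implicit in $\lesssim$ from the hypothesis $|g(z)| \lesssim (1+|z|)^k$ may hold only for $|z|$ large; on any bounded set $g$ is integrable by the $\sploc{1}$ assumption, so one splits $\RR = \{|z| \leq M\} \cup \{|z| > M\}$, handles the compact piece with $\int_{|z|\leq M} |g| \cdot \sup|\varphi|$, and the tail with the estimate above. This is such a standard fact (it is literally cited as \cite[Ex.~2.3.5]{Grafakos.classic}) that I would expect the author's proof to be a one- or two-line remark rather than a full argument.
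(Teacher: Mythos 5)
Your proof is correct and is the standard seminorm argument: absolute convergence via the rapid decay of $\varphi \in \spSch(\RR)$, and continuity via the bound by $\sup_z (1+|z|)^{k+2}|\varphi(z)|$ after factoring out the integrable weight $(1+|z|)^{-2}$. The paper itself gives no proof of this proposition --- it is stated with only the citation to Grafakos --- so there is nothing to compare against; your argument, including the careful splitting of $\RR$ into a compact piece (handled by the $\sploc{1}(\RR)$ hypothesis) and a tail (handled by the growth bound), is exactly the textbook proof the citation points to.
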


\begin{prop}[Slowly Increasing Function $\spM(\RR)$ {\cite[Def. 2.3.15]{Grafakos.classic}}] \label{prop:spM}
Let $g \in \spsmooth(\RR)$. 
If for any $\alpha \in \NN_0, \, |\partial^\alpha g(x)| \lesssim (1+|z|)^{k_\alpha} $ for some $k_\alpha \in \NN_0$,
then $g \in \spM(\RR)$.
\end{prop}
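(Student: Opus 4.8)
The plan is to establish that pointwise multiplication by $g$ is a continuous linear map $\spSch(\RR) \to \spSch(\RR)$, which is exactly what it means for a smooth function to lie in $\spM(\RR)$; the growth hypotheses on the derivatives of $g$ are precisely what is needed to control every Schwartz seminorm of a product $g\phi$ by finitely many Schwartz seminorms of $\phi$.

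First I would fix $\phi \in \spSch(\RR)$ together with orders $N, M \in \NN_0$ and examine the seminorm $\sup_{x \in \RR}(1+|x|)^N|\partial^M(g\phi)(x)|$. By the Leibniz rule,
\[
\partial^M(g\phi) = \sum_{j=0}^M \binom{M}{j}\,(\partial^j g)\,(\partial^{M-j}\phi).
\]
Then I would feed the hypothesis $|\partial^j g(x)| \lesssim (1+|x|)^{k_j}$ into each of the finitely many summands. Writing $K := \max_{0 \le j \le M} k_j$, every term is dominated by a constant multiple of $(1+|x|)^{N+K}|\partial^{M-j}\phi(x)|$, whose supremum over $x$ is a Schwartz seminorm of $\phi$ and hence finite. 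Summing over $j$ gives
\[
\sup_{x \in \RR}(1+|x|)^N|\partial^M(g\phi)(x)| \;\lesssim\; \max_{0 \le j \le M}\ \sup_{x \in \RR}(1+|x|)^{N+K}\,|\partial^{M-j}\phi(x)| \;<\; \infty,
\]
so $g\phi \in \spSch(\RR)$. Since the right-hand side is a finite maximum of Schwartz seminorms of $\phi$, with $K$ depending only on $M$, the same chain of inequalities exhibits $\phi \mapsto g\phi$ as continuous from $\spSch(\RR)$ into itself; hence $g \in \spM(\RR)$.

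I do not expect a genuine obstacle. The only points needing care are bookkeeping — the growth exponent $k_j$ of $\partial^j g$ forces passing from seminorm order $N$ to order $N+K$, and one must note that $K$ may be chosen depending on $M$ alone, so that the estimate is strong enough to yield continuity rather than merely membership $g\phi \in \spSch(\RR)$ — and observing that the argument carries over verbatim on $\RR^k$ with multi-indices, which is the form tabulated in \reftab{tab:class}. (If $\spM(\RR)$ is instead taken, following Schwartz, to be by definition the space of smooth functions all of whose derivatives have at most polynomial growth, the statement is a tautology, and the computation above is precisely what certifies that this definition coincides with the multiplier-of-$\spSch$ characterization used elsewhere in the paper.)
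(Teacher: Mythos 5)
Your argument is correct, but note that the paper itself offers no proof of this proposition: it is cited verbatim as \cite[Def.~2.3.15]{Grafakos.classic} and used as a black box, and in the paper's own convention (see its definition of $\spM(\Half)$, which is stated via polynomially bounded derivatives) the statement is essentially the definition of a slowly increasing function, exactly as your closing parenthetical observes. What you have actually supplied is the nontrivial half of the standard equivalence between the two characterizations of $\spM$: the Leibniz-rule estimate
\[
\sup_{x}(1+|x|)^N\big|\partial^M(g\phi)(x)\big| \lesssim \max_{0\le j\le M}\sup_{x}(1+|x|)^{N+K}\big|\partial^{M-j}\phi(x)\big|, \qquad K=\max_{0\le j\le M}k_j,
\]
shows that $\phi\mapsto g\phi$ is a continuous map $\spSch(\RR)\to\spSch(\RR)$, i.e.\ that $g$ is a multiplier of $\spSch$. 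The bookkeeping is right: $K$ depends only on $M$, each summand is a genuine Schwartz seminorm of $\phi$, and the finite sum preserves the estimate, so you get continuity and not merely membership $g\phi\in\spSch(\RR)$. This buys more than the paper needs here (the paper only invokes the proposition to certify that $\sig$, $\tanh$, and $\sps$ lie in $\spM(\RR)$, for which the tautological reading suffices), but it is the substantive content behind the citation, and it transfers verbatim to $\RR^k$ with multi-indices. No gap.
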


\begin{eg}
Truncated power functions $z_+^k \, (k \in \NN_0)$, which contain the ReLU $z_+$ and the step function $z_+^0$, belong to $\spLizD(\RR)$.
\end{eg}
\begin{proof}
For any $\ell \in \NN_0$ there exists a constant $C_\ell$ such that $|\partial^\ell (z_+^k)| \leq C_\ell (1+|z|)^{k-\ell}$. Hence, $z_+^k \in \spLizD(\RR)$.
\end{proof}

\begin{eg} \label{eg:sig}
The sigmoidal function $\sig(z)$ and the softplus $\sps(z)$ belong to $\spM(\RR)$.
The derivatives $\sig^{(k)}(z) \, (k \in \NN)$ belong to $\spSch(\RR)$.
Hyperbolic tangent $\tanh(z)$ belongs to $\spM(\RR)$.
\end{eg}
The proof is provided in \refapp{app:proof.sig}

\begin{eg}[{\cite[Ex.2.2.2]{Grafakos.classic}}]
RBF $\rbf(z)$ and their derivatives $\rbf^{(k)}(z)$ belong to $\spSch(\RR)$.
\end{eg}

\begin{eg}[{\cite[Ex.2.3.5]{Grafakos.classic}}]
Dirac's $\delta(z)$ and their derivatives $\delta^{(k)}(z)$ belong to $\spTD(\RR)$.
\end{eg}

\subsection{$K_{\psi,\eta}$ when $\psi$ is a derivative of the Gaussian} \label{sec:ex.K}

Given \rsho{an activation function} $\eta \in \spLizD(\RR)$,
according to \refcor{cor:const.ap}
we can construct an admissible ridgelet function $\psi \in \spSch(\RR)$
by letting
\begin{gather}
\psi := \bp^m \psi_0,
\end{gather}
where $\psi_0 \in \spSch(\RR)$ satisfies
\begin{gather}
\bigdual{\widehat{\eta},\widehat{\psi_0}} := \int_{\RR \setminus \{ 0 \}} \overline{\widehat{\psi_0}(\zeta)} \widehat{\eta}(\zeta) \dd \zeta  \neq 0, \, \pm \infty.
\end{gather}

Here we consider the case when $\psi_0$ is given by
\begin{align}
\psi_0 = \gauss^{(\ell)}, \quad 
\end{align}
for some $\ell \in \NN_0$, where $\gauss$ denotes the Gaussian $\gauss(z) := \exp(-z^2/2)$.

The Fourier transform of the Gaussian is given by $\widehat{\gauss}(\fz) = \exp(-\fz^2/2) = \sqrt{2 \pi} \, \gauss(\zeta)$.
The Hilbert transform of the Gaussian, which we encounter by computing $\psi = \bp^m \gauss$ when $m$ is odd, is given by
\begin{gather}
\hil \gauss(z) = \frac{2 i}{\sqrt{\pi}} \dawson \left( \frac{z}{\sqrt{2}} \right),
\end{gather}
where $\dawson(z)$ is the Dawson function $\dawson(z) := \exp ( -z^2 ) \int_0^z \exp (w^2) \dd w$.

\begin{eg}
$z_+^k \ (k \in \NN_0)$ is admissible with $\psi = \bp^m \gauss^{(\ell+k+1)} \ (\ell \in \NN_0)$ \rtwo{iff} $\ell$ is even. If odd, then $K_{\psi,\eta} = 0$.
\end{eg}
\begin{proof}
It follows from the fact that, according to Gel'fand and Shilov \cite[\S~9.3]{GelfandShilov}, 
\begin{align}
\widehat{ z_+^k }(\fz) = \frac{k!}{(i \zeta)^{k+1}} + \pi i^k \delta^{(k)}(\zeta), \quad k \in \NN_0.  &\qedhere
\end{align}
\end{proof}

\begin{eg}$\eta(z) = \delta^{(k)}(z)  \ (k \in \NN_0)$ is admissible with $\psi = \bp^{m} \gauss$ \rtwo{iff} $k$ is even. If odd, then $K_{\psi,\eta} = 0$.
\end{eg}
In contrast to polynomial functions, Dirac's $\delta$ can be an admissible activation function.

\begin{eg}
$\eta(z) = \rbf^{(k)}(z) \ (k \in \NN_0)$ is admissible with $\psi = \bp^m \gauss$ \rtwo{iff} $k$ is even. If odd, then $K_{\psi,\eta} = 0$.
\end{eg}

\begin{eg} \label{eg:adm.sig}
$\eta(z) = \sig^{(k)}(z) \ (k \in \NN_0)$ is admissible with $\psi = \bp^m \gauss$ \rtwo{iff} $k$ is odd. If odd, then $K_{\psi,\eta} = 0$.
$\sig^{(-1)}$ is admissible with $\psi=\bp^m \gauss''$.
\end{eg}
The proof is provided in \refapp{app:proof.sig}.

\section{Numerical Examples of Reconstruction}
We performed some numerical experiments on reconstructing a one-dimensional signal and a two-dimensional image,
with reference to our theoretical diagnoses for admissibility in the previous section.
\reftab{tab:admissible} lists the diagnoses of $(\bp^m\psi_0, \eta)$ we employ in this section.
\rsho{The symbols '$+$,' '$0$,' and '$\infty$' in each cell indicate that $K_{\psi,\eta}$ of the corresponding $(\psi, \eta)$ converges to a non-zero constant ($+$), converges to zero ($0$), and diverges ($\infty$). Hence, by \refthm{thm:formula}, if the cell $(\psi,\eta)$ indicates '$+$' then a neural network with an activation function $\eta$ is a universal approximator.}
\begin{table}[h!]
  \centering
    \caption{\rsho{Theoretical diagnoses for admissibility of $\psi = \bp^m \psi_0$ and $\eta$.
        '$+$' indicates that $(\psi,\eta)$ is admissible.
        '$0$' and '$\infty$' indicate that $K_{\psi,\eta}$ vanishes and diverges, respectively,
        and thus $(\psi,\eta)$ is not admissible.}}
  \begin{tabular}{llccc}
  \toprule
  activation function & $\eta$ & $ \psi = \bp^m \gauss $ & $ \psi = \bp^m \gauss' $ & $ \psi = \bp^m \gauss'' $ \\
  \midrule
  derivative of sigmoidal ft. & $\sig'$ & $+$ & $\rsho{0}$ & $+$ \\
  sigmoidal function & $\sig$ & $\rsho{\infty}$ & $+$ & $\rsho{0}$ \\
  softplus & $\sps$ & $\rsho{\infty}$ & $\rsho{\infty}$ & $+$ \\
  \midrule
  Dirac's $\delta$ & $\delta$ & $+$ & $\rsho{0}$ & $+$ \\
  unit step function & $z_+^0$ & $\rsho{\infty}$ & $+$ & $\rsho{0}$ \\
  ReLU & $z_+$ & $\rsho{\infty}$ & $\rsho{\infty}$ & $+$ \\
  \midrule
  linear function & $z$ & $\rsho{0}$ & $\rsho{0}$ & $\rsho{0}$ \\
  \midrule
  RBF & $\rbf$ & $+$ & $\rsho{0}$ & $+$ \\
  \bottomrule
  \end{tabular} \label{tab:admissible}
\end{table}

\subsection{Sinusoidal Curve}
We studied a one-dimensional signal $f(x) = \sin 2 \pi x$ defined on $x \in [-1,1]$.
The ridgelet functions \rsho{functions} $\psi = \bp \psi_0$ were chosen from derivatives of the Gaussian $\psi_0 = \gauss^{(\ell)}, \ (\ell=0,1,2)$.
The activation \rsho{functions} $\eta$ were chosen from among
the softplus $\sps$, the sigmoidal function $\sig$ and its derivative $\sig'$, 
the ReLU $z_+$, unit step function $z_+^0$, and Dirac's $\delta$.
In addition, we examined the case when the activation function is simply a linear function: $\eta(z) = z$,
which cannot be admissible because the Fourier transform of polynomials is supported at the origin in the Fourier domain.

The signal was sampled from $[-1, 1]$ with $\Delta x = 1/100$.
We computed the reconstruction formula
\begin{align}
\int_{\RR} \int_\RR \rid_\psi f(a,b) \eta(ax-b) \frac{\dd a \dd b}{\rsho{|a|}},
\end{align}
by simply discretizing \rsho{$(a,b) \in [ -30, 30 ] \times [-30, 30]$} by $\Delta a = \Delta b = 1/10$.
\rsho{That is,
\begin{align}
&\rid_\psi f(a,b) \approx \sum_{n=0}^N f(x_n) \overline{ \psi ( a \cdot x_n - b ) } |a| \Delta x, \quad x_n = x_0 + n \Delta x \\
&\drid_\eta \rid f(x) \approx \sum_{(i,j)=(0,0)}^{I,J} \rid_\psi f(a_i, b_j) \eta ( a_i \cdot x - b_j ) \frac{\Delta a \Delta b}{|a_i|}, \quad a_i = a_0 + i \Delta a, \ b_j = b_0 + j \Delta b
\end{align}
where $x_0 = -1, \ a_0 = -30, \ b_0 = -30$, and $N=200, (I,J)=(600,600)$.}

\begin{figure}[h]
\centering
\begin{tabular}{cccc}
  $ \psi = \bp \gauss $ & $ \psi = \bp \gauss' $ & $ \psi = \bp \gauss'' $ & \\
  \raisebox{-.5\height}{\includegraphics[width=.3\linewidth]{./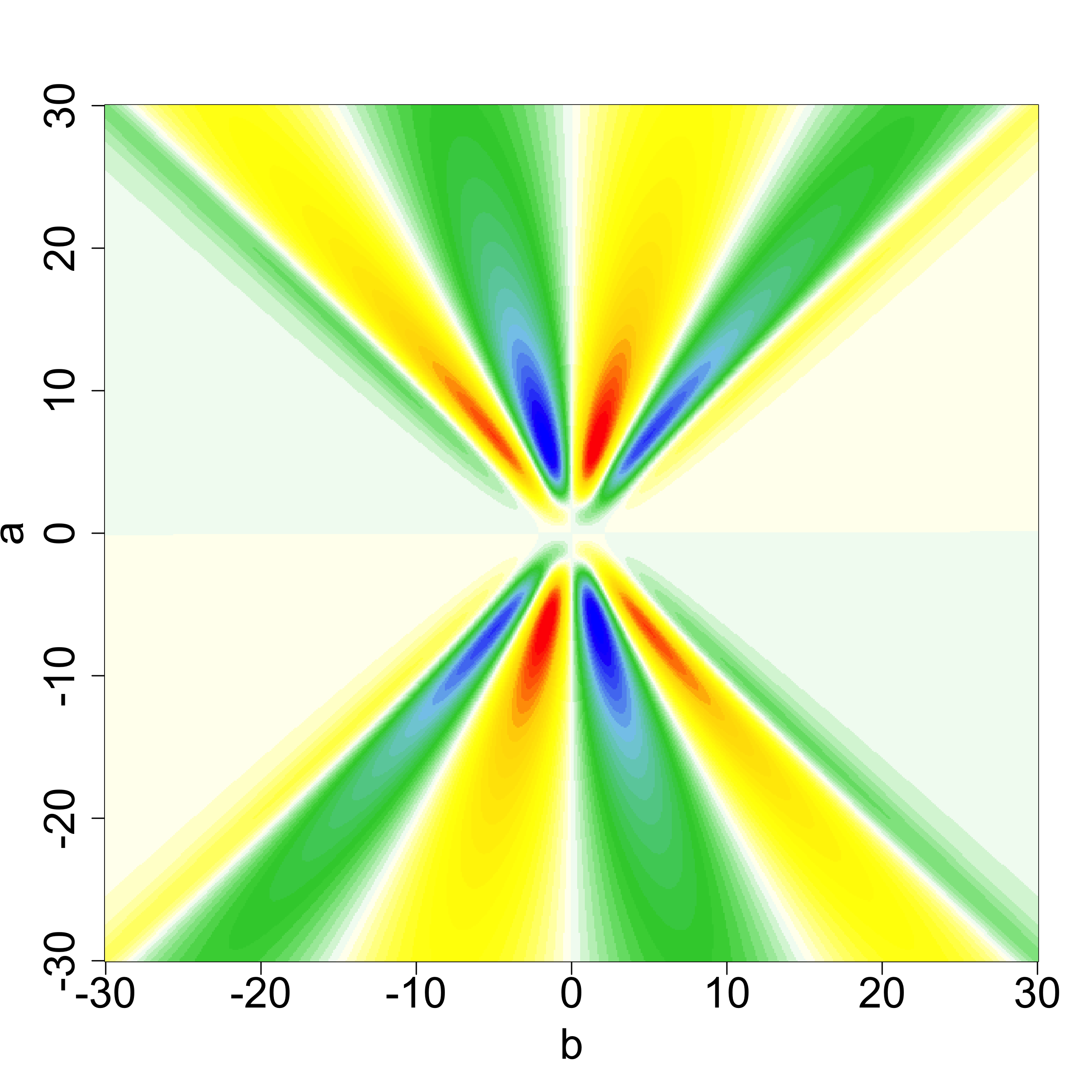}} &
  \raisebox{-.5\height}{\includegraphics[width=.3\linewidth]{./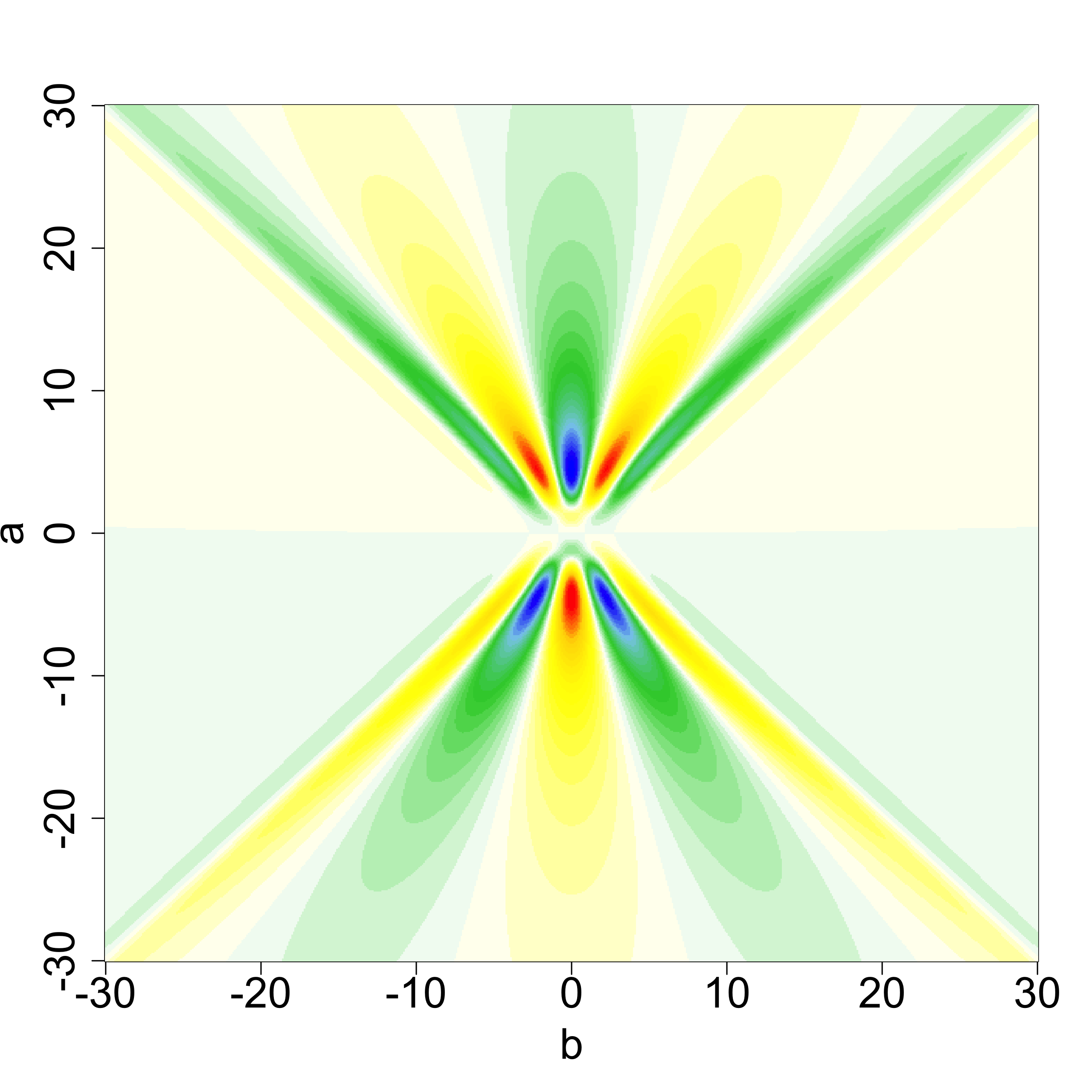}} &
  \raisebox{-.5\height}{\includegraphics[width=.3\linewidth]{./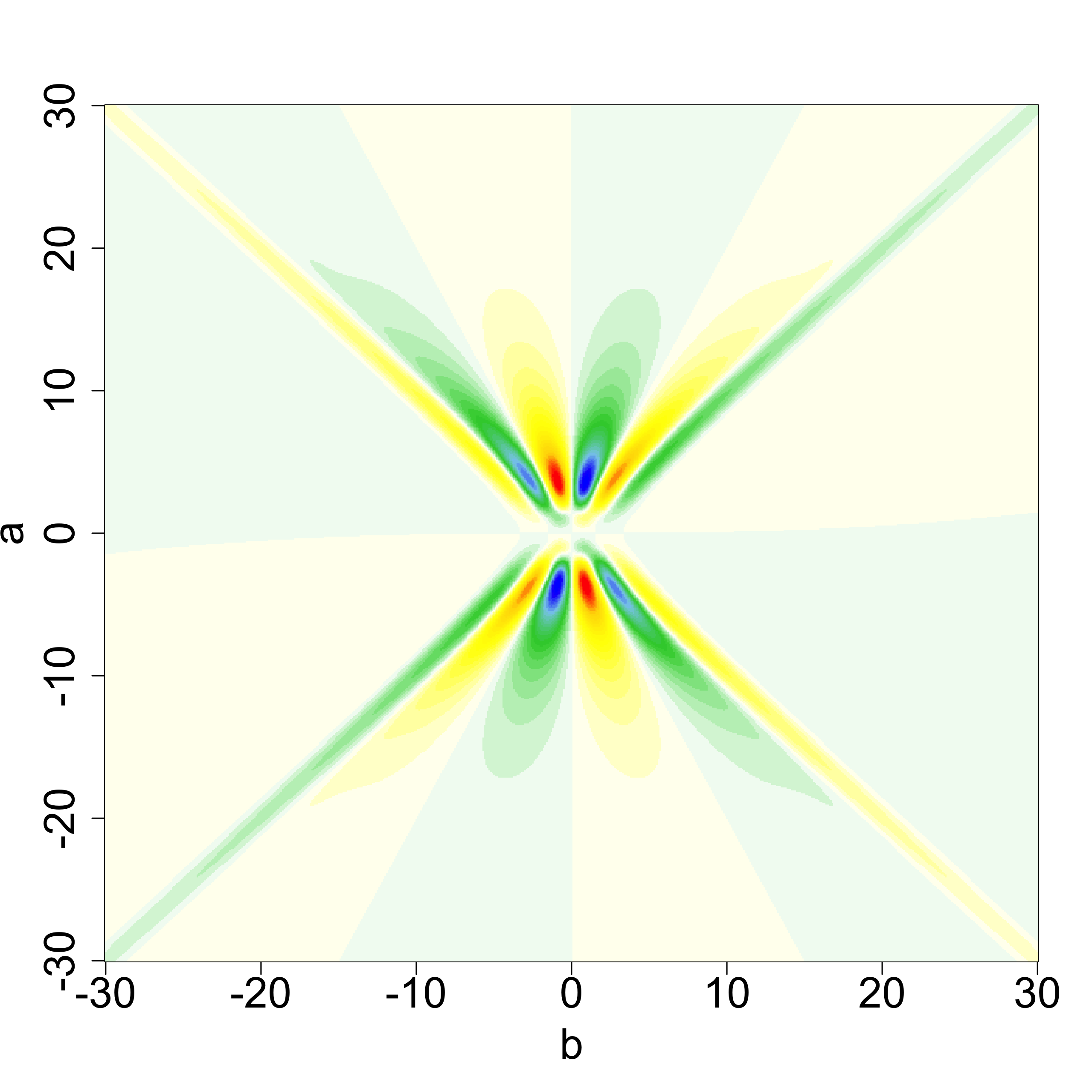}} &
  \raisebox{-.5\height}{\includegraphics[width=.016\linewidth]{./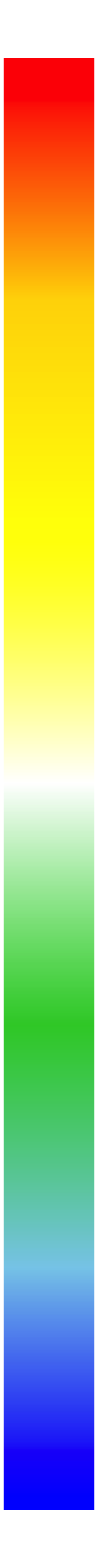}} \\
\end{tabular}
  \caption{Ridgelet transform $\rid_\psi f(a,b)$ of $f(x) = \sin 2 \pi x$ defined on $[-1,1]$ with respect to $\psi$.} \label{fig:sin_Tab}
\end{figure}

\begin{figure}[h]
\centering
\begin{tabular}{cccc}
  & $ \psi = \bp \gauss $ & $ \psi = \bp \gauss' $ & $ \psi = \bp \gauss'' $ \\
  \rotatebox{90}{$\eta=\sig'$} &
%  \raisebox{-.5\height}{\includegraphics[width=.27\textwidth]{./pics/sigs/sin11_equi_Hgs_Hg1_dsigmoid.png}} &
  \raisebox{-.5\height}{\includegraphics[width=.27\textwidth]{./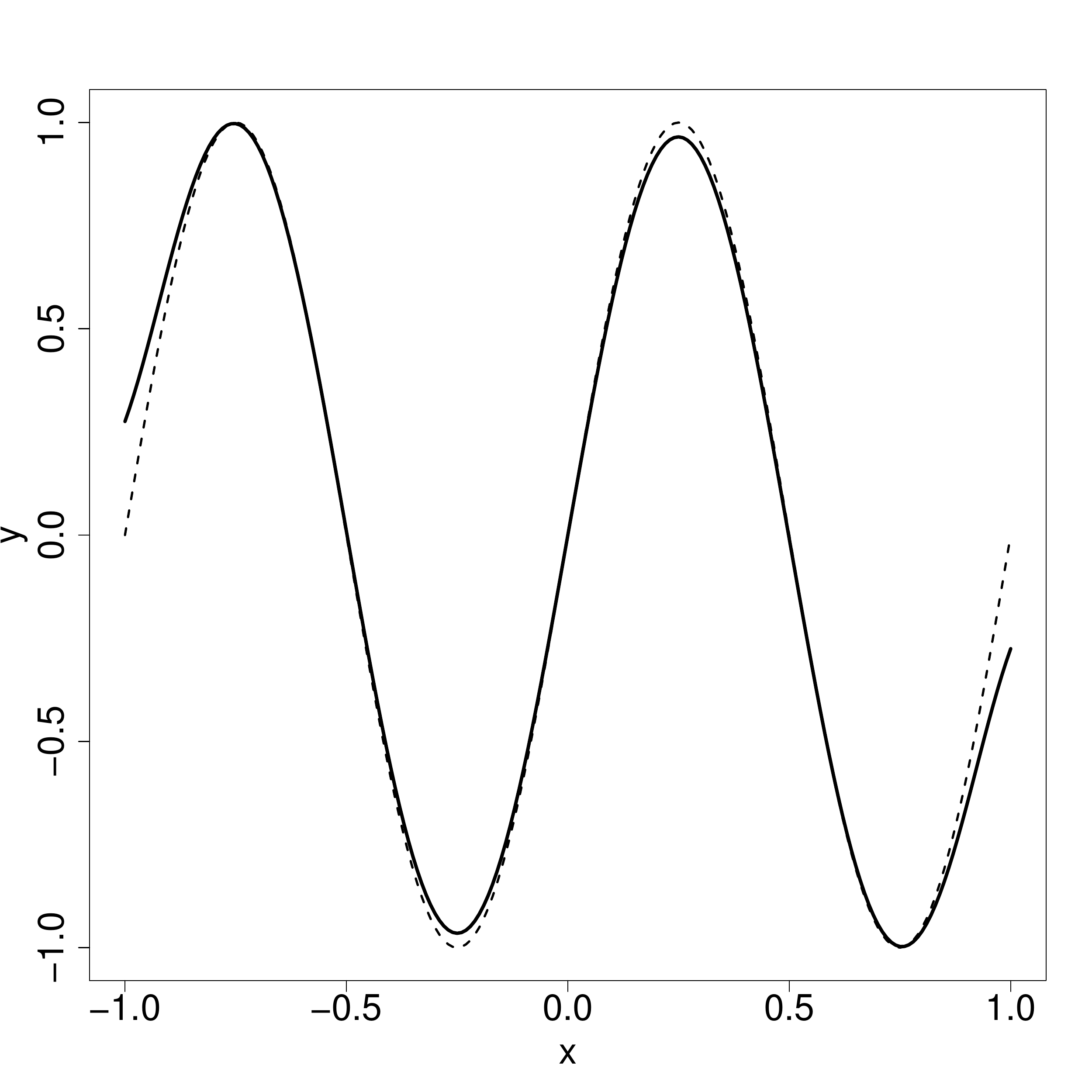}} &
  \raisebox{-.5\height}{\includegraphics[width=.27\textwidth]{./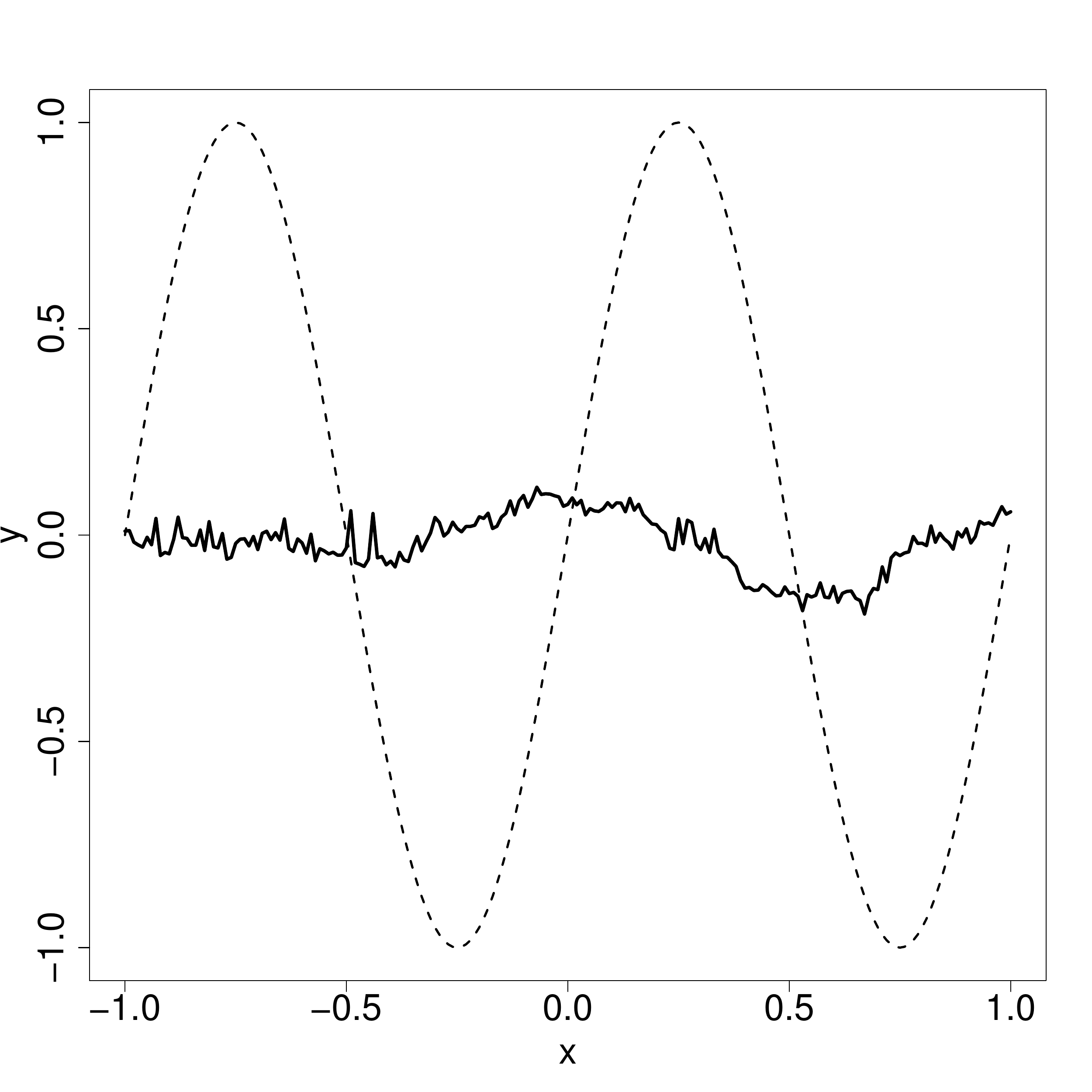}} &
  \raisebox{-.5\height}{\includegraphics[width=.27\textwidth]{./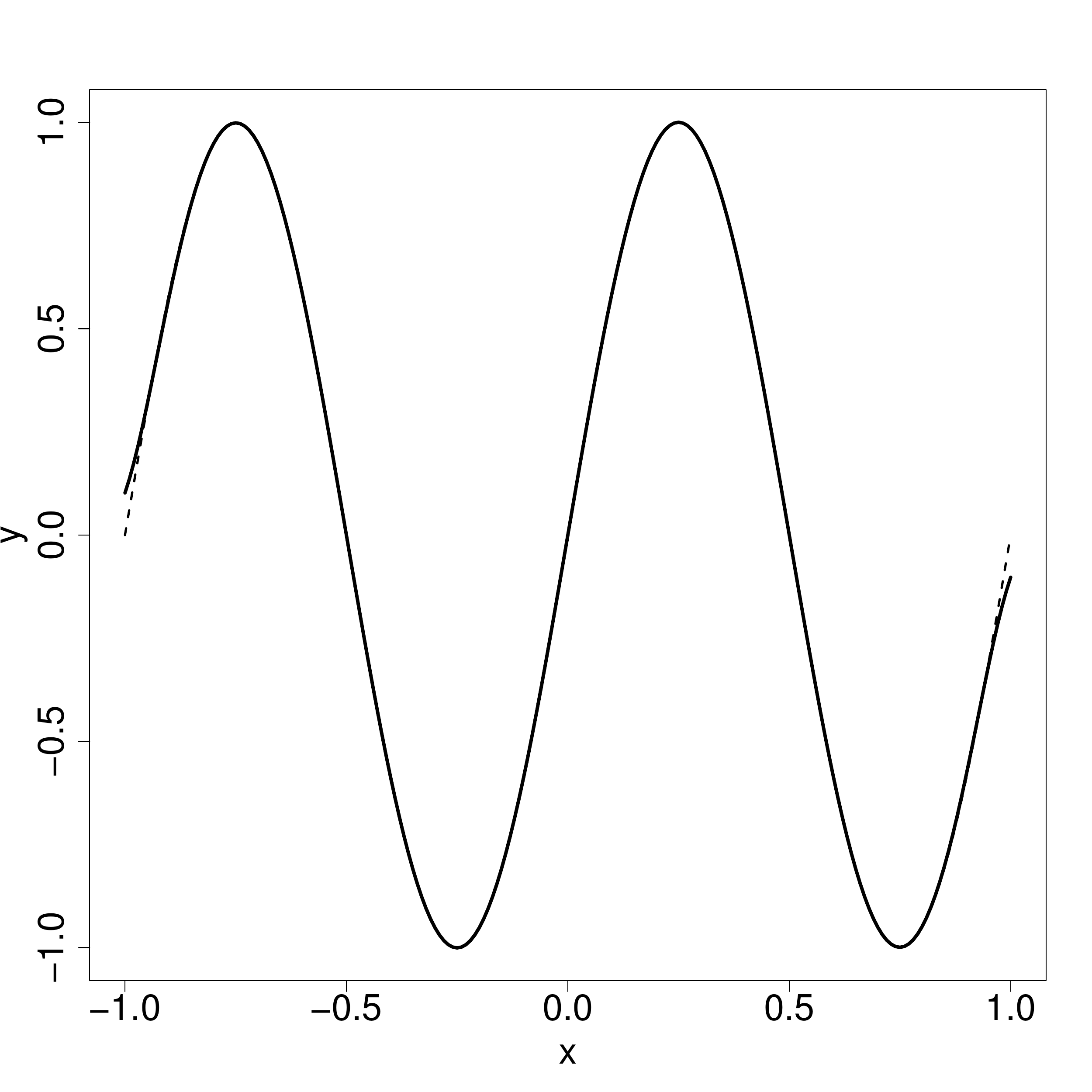}} \\
  \rotatebox{90}{$\eta=\sig$} &
  \raisebox{-.5\height}{\includegraphics[width=.27\textwidth]{./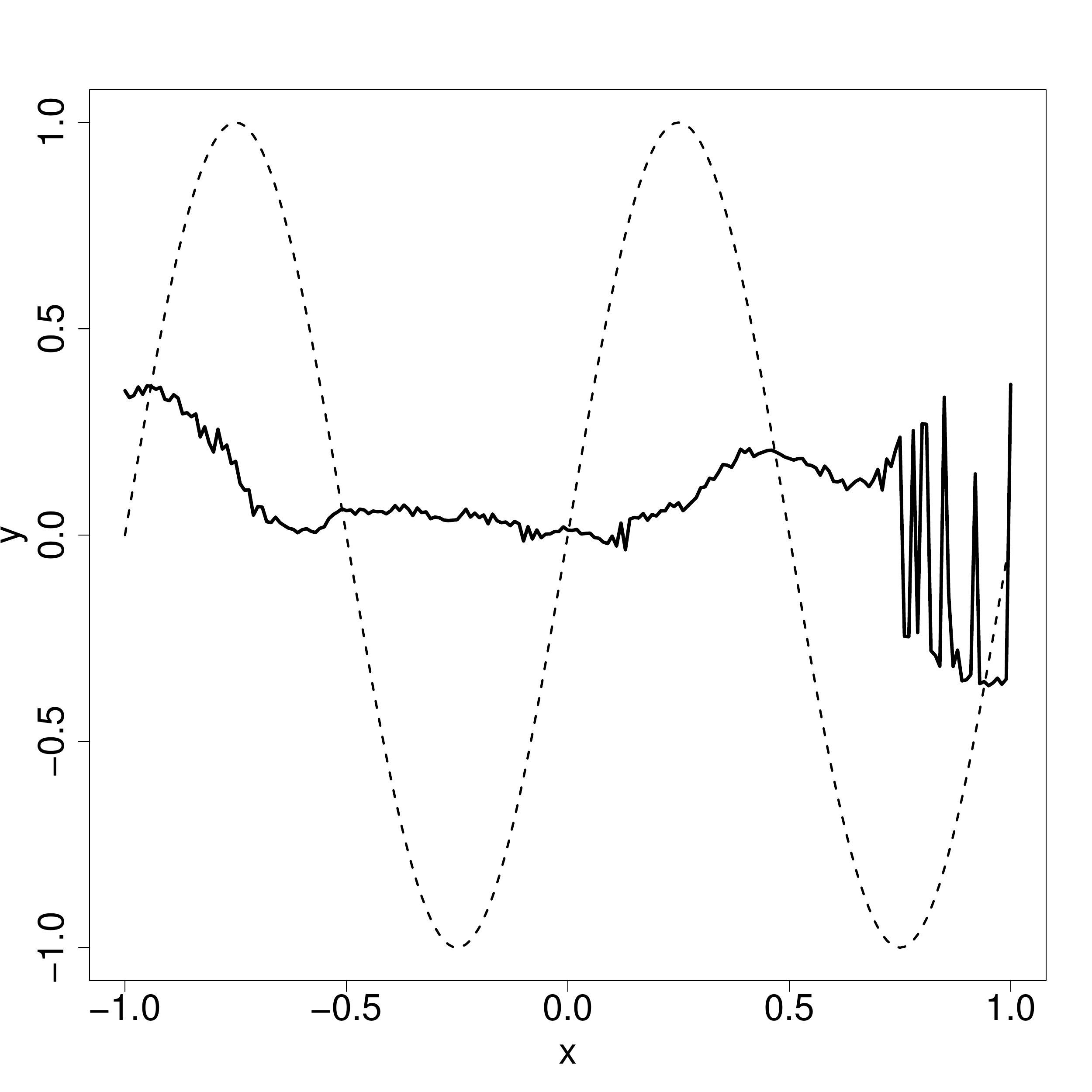}} &
  \raisebox{-.5\height}{\includegraphics[width=.27\textwidth]{./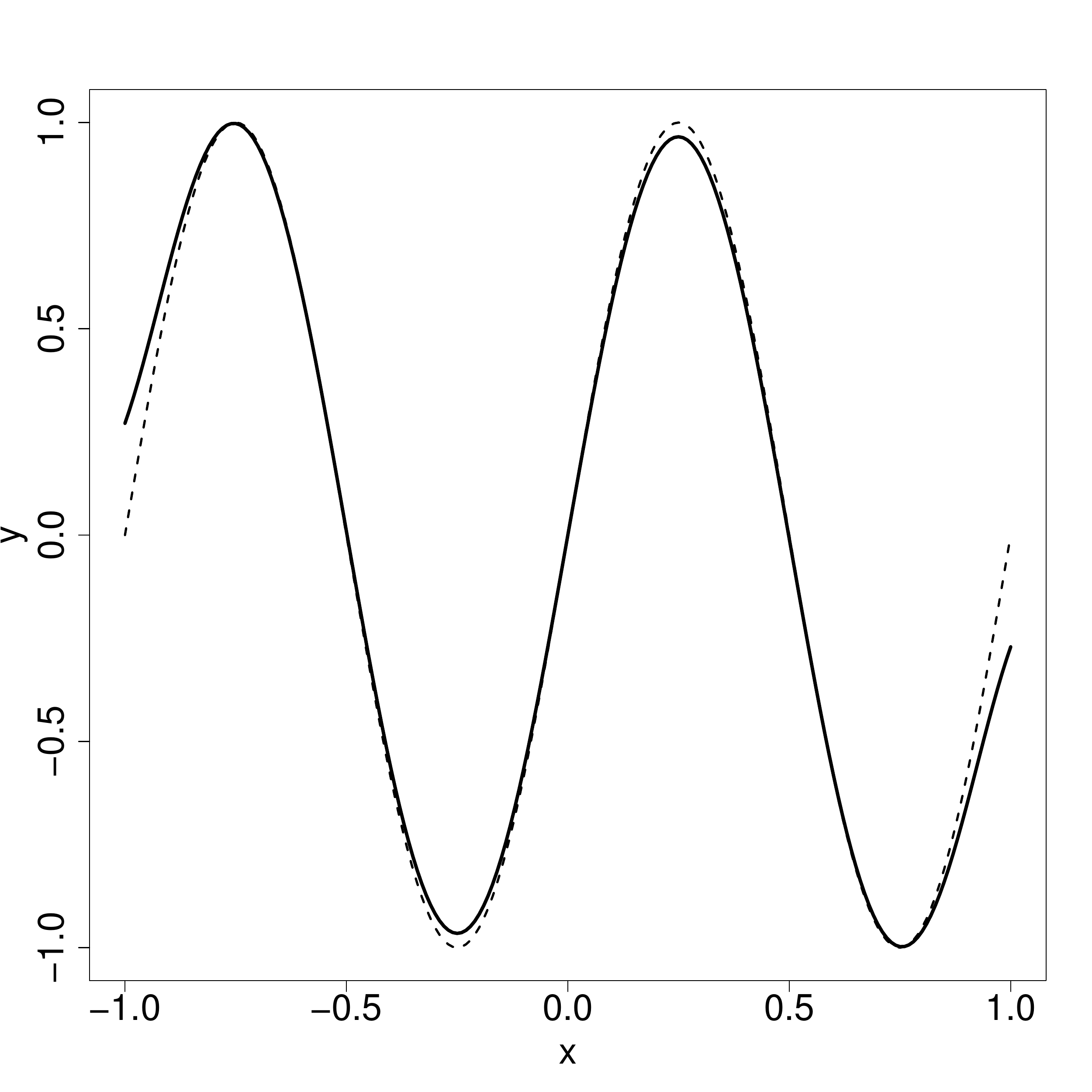}} &
  \raisebox{-.5\height}{\includegraphics[width=.27\textwidth]{./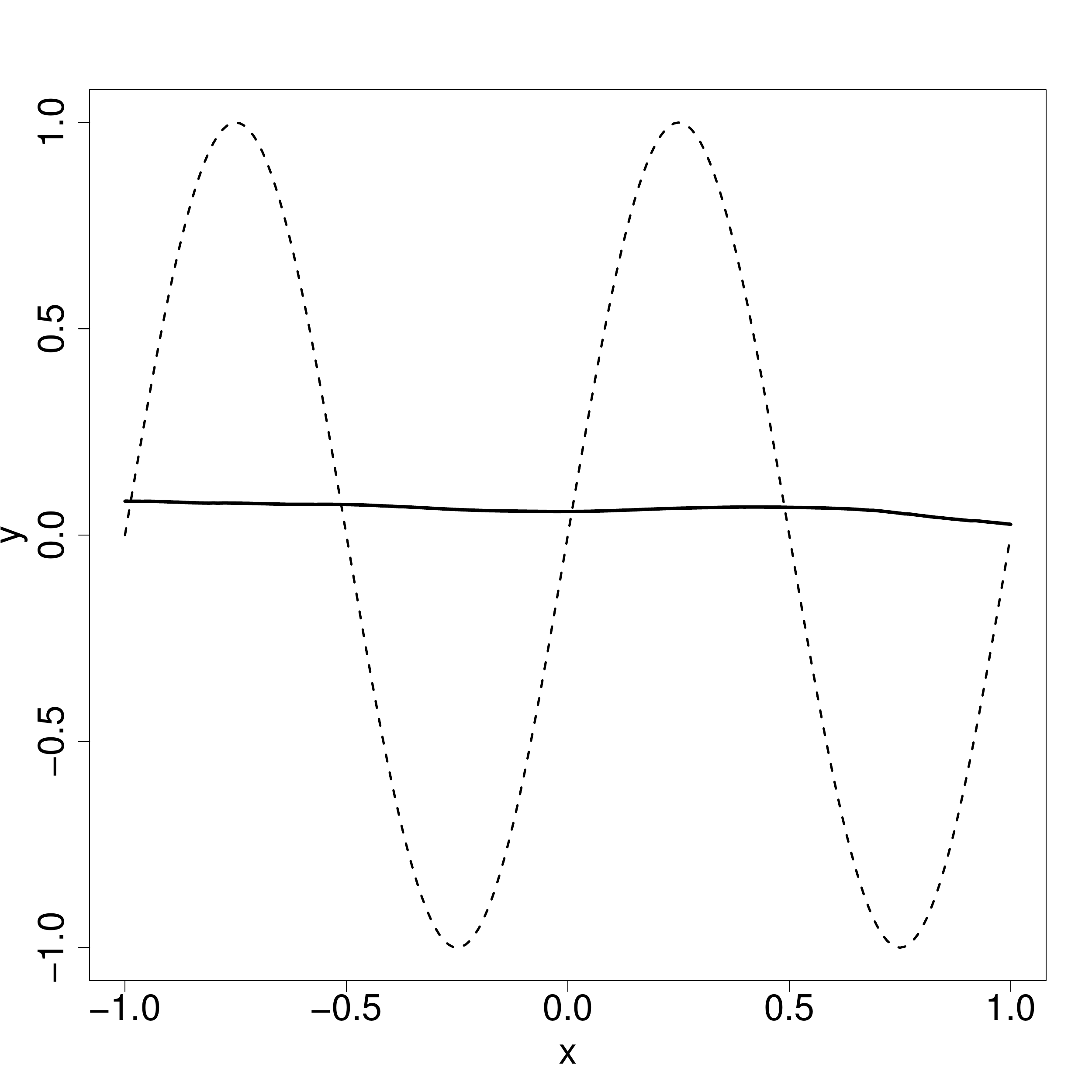}} \\
  \rotatebox{90}{$\eta=\sps$} &
  \raisebox{-.5\height}{\includegraphics[width=.27\textwidth]{./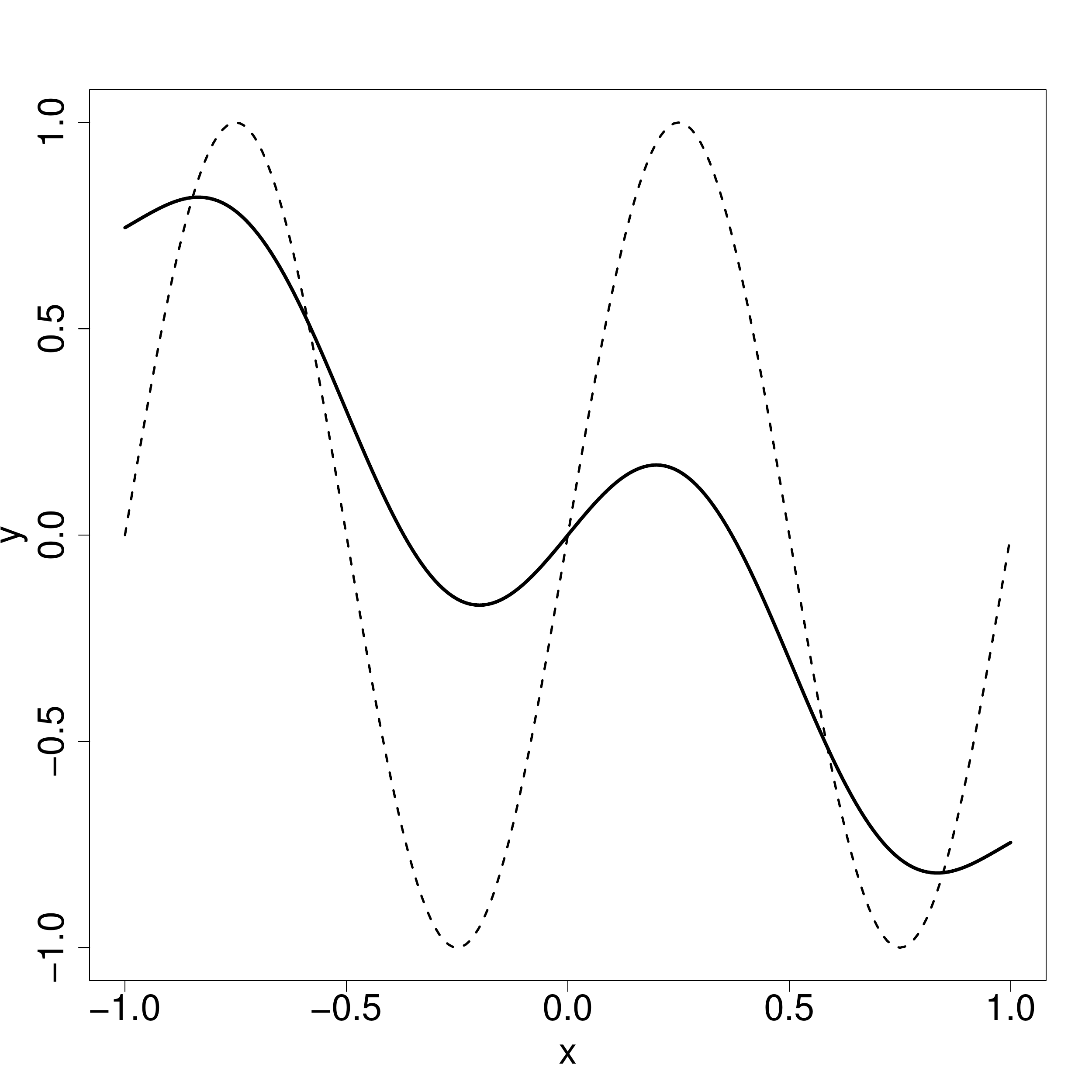}} &
  \raisebox{-.5\height}{\includegraphics[width=.27\textwidth]{./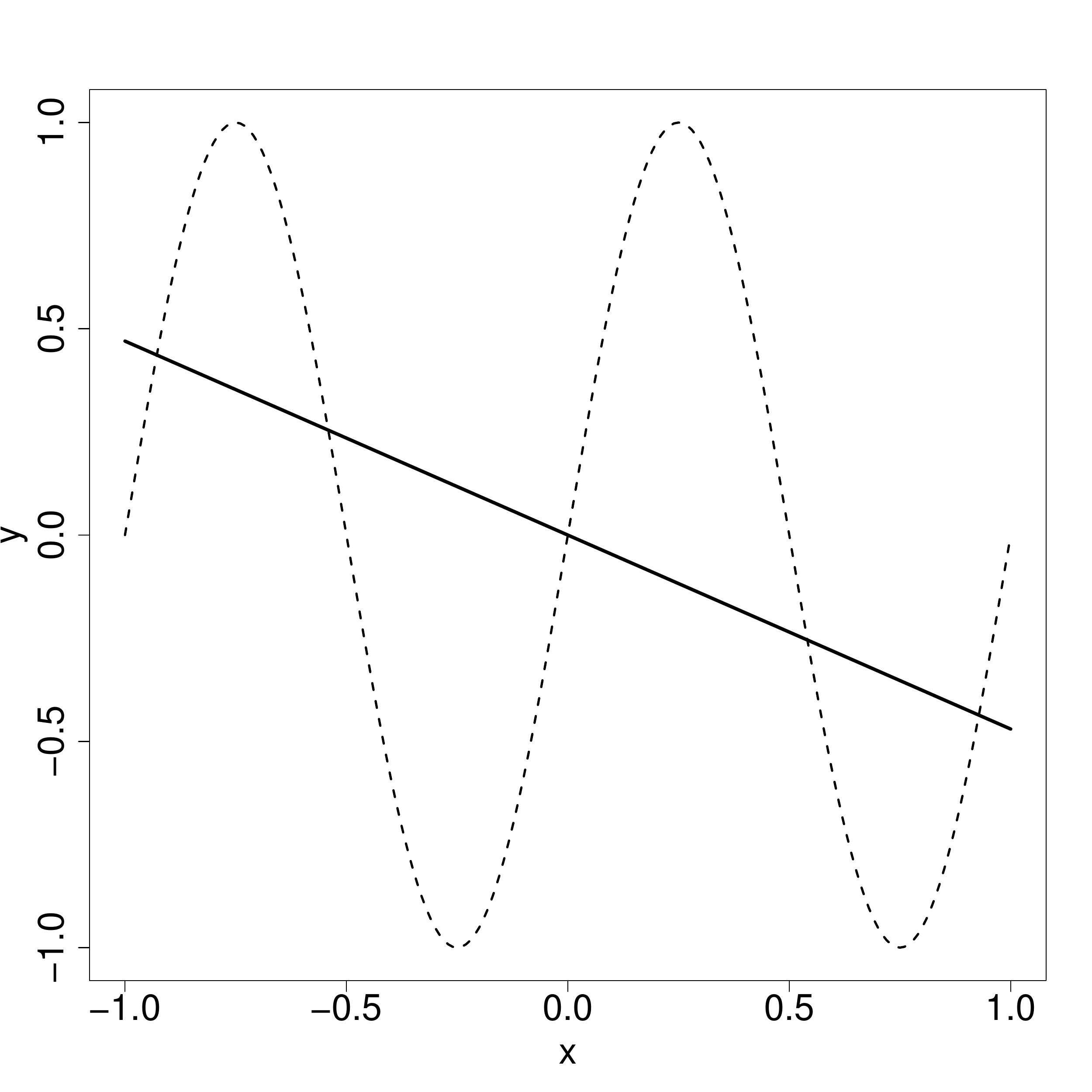}} &
  \raisebox{-.5\height}{\includegraphics[width=.27\textwidth]{./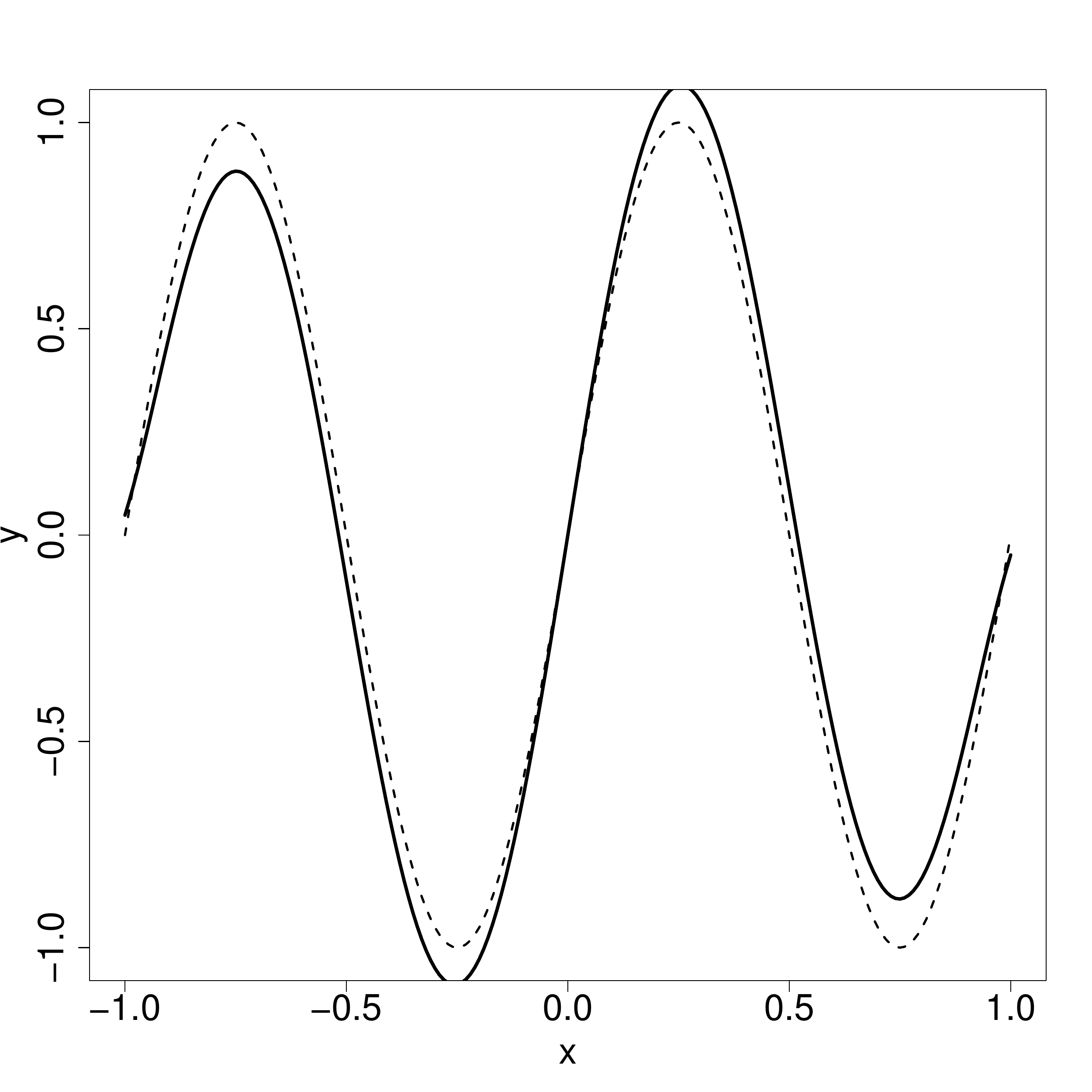}} \\
\end{tabular}
  \caption{Reconstruction with the derivative of sigmoidal function $\sig'$, sigmoidal function $\sig$, and softplus $\sps$.
The solid line is a plot of the reconstruction result; the dotted line plots the original signal.
	}
  \label{fig:sin_sigs}
\end{figure}

\begin{figure}[h!] %% truncs
\centering
\begin{tabular}{cccc}
  & $ \psi = \bp \gauss $ & $ \psi = \bp \gauss' $ & $ \psi = \bp \gauss'' $ \\
  \rotatebox{90}{$\eta = \delta$} &
  \raisebox{-.5\height}{\includegraphics[width=.27\textwidth]{./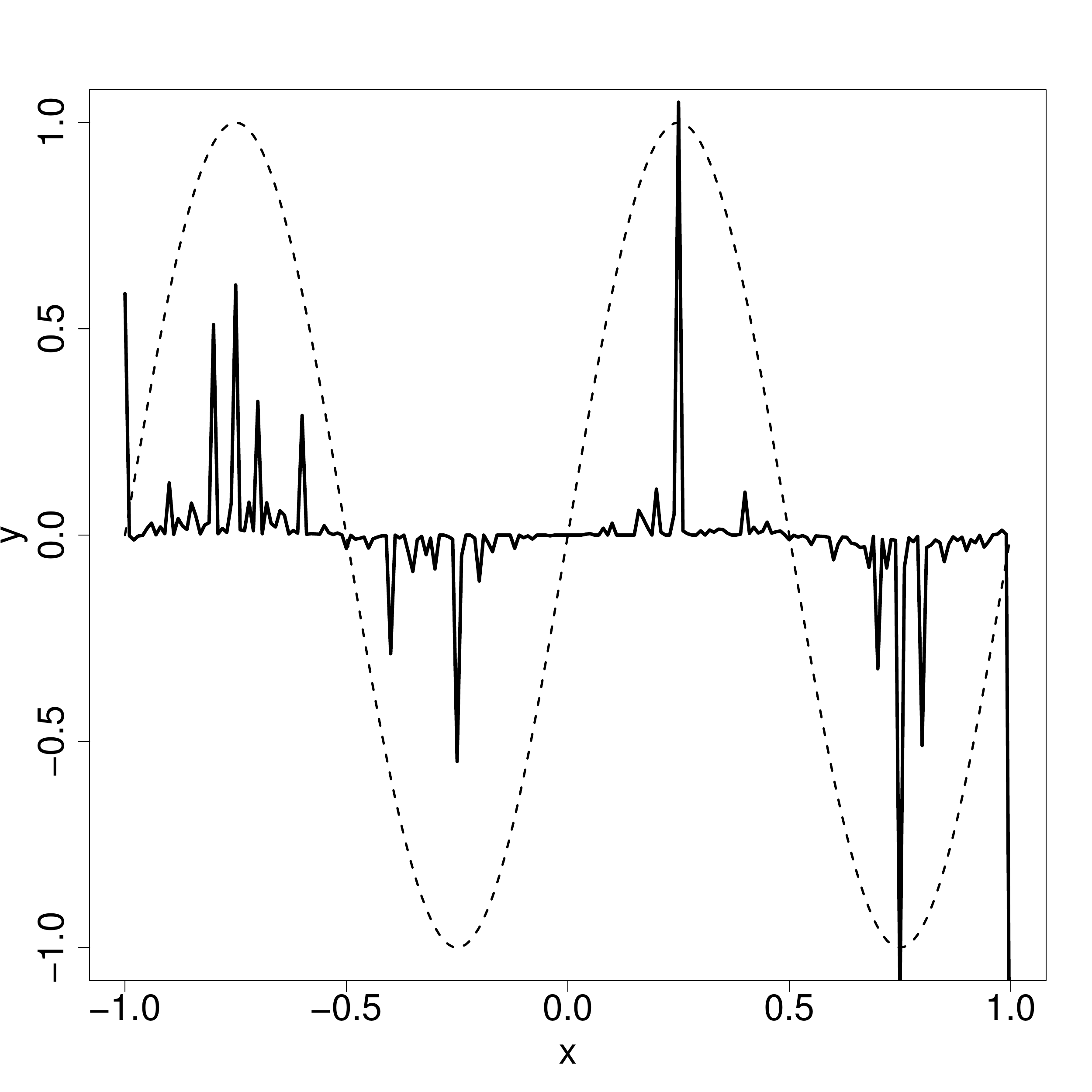}} &
  \raisebox{-.5\height}{\includegraphics[width=.27\textwidth]{./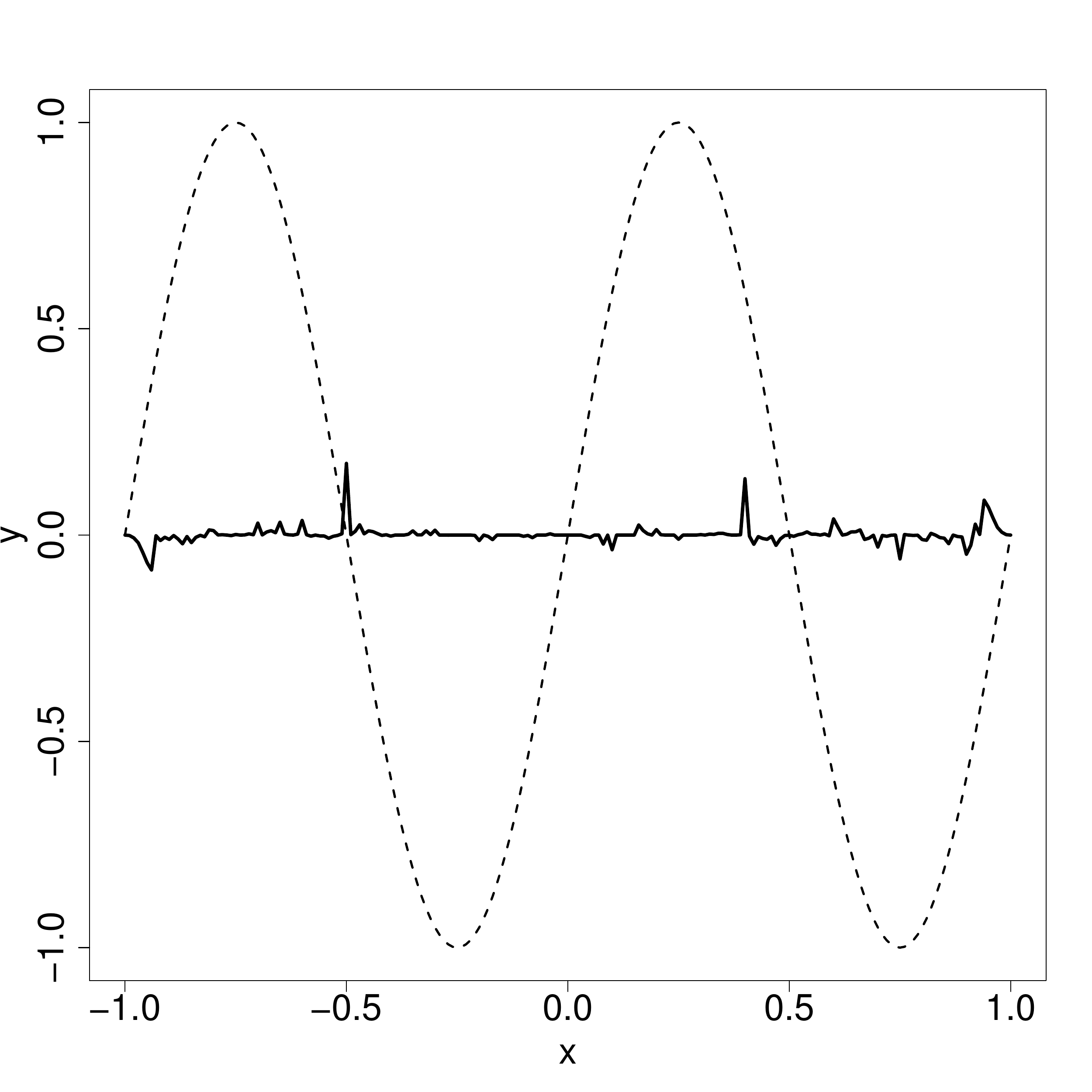}} &
  \raisebox{-.5\height}{\includegraphics[width=.27\textwidth]{./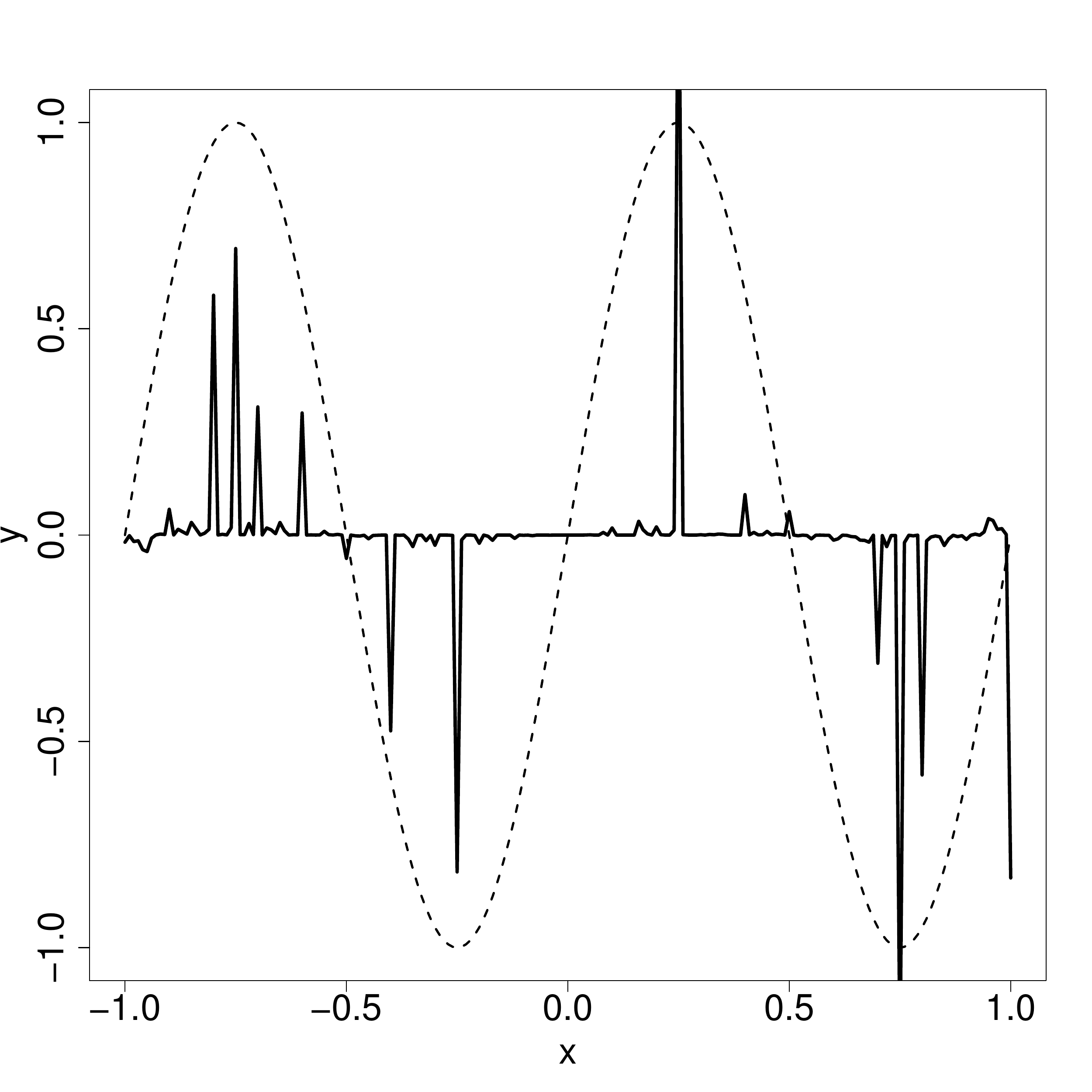}} \\
  \rotatebox{90}{$\eta=z_+^0$} &
  \raisebox{-.5\height}{\includegraphics[width=.27\textwidth]{./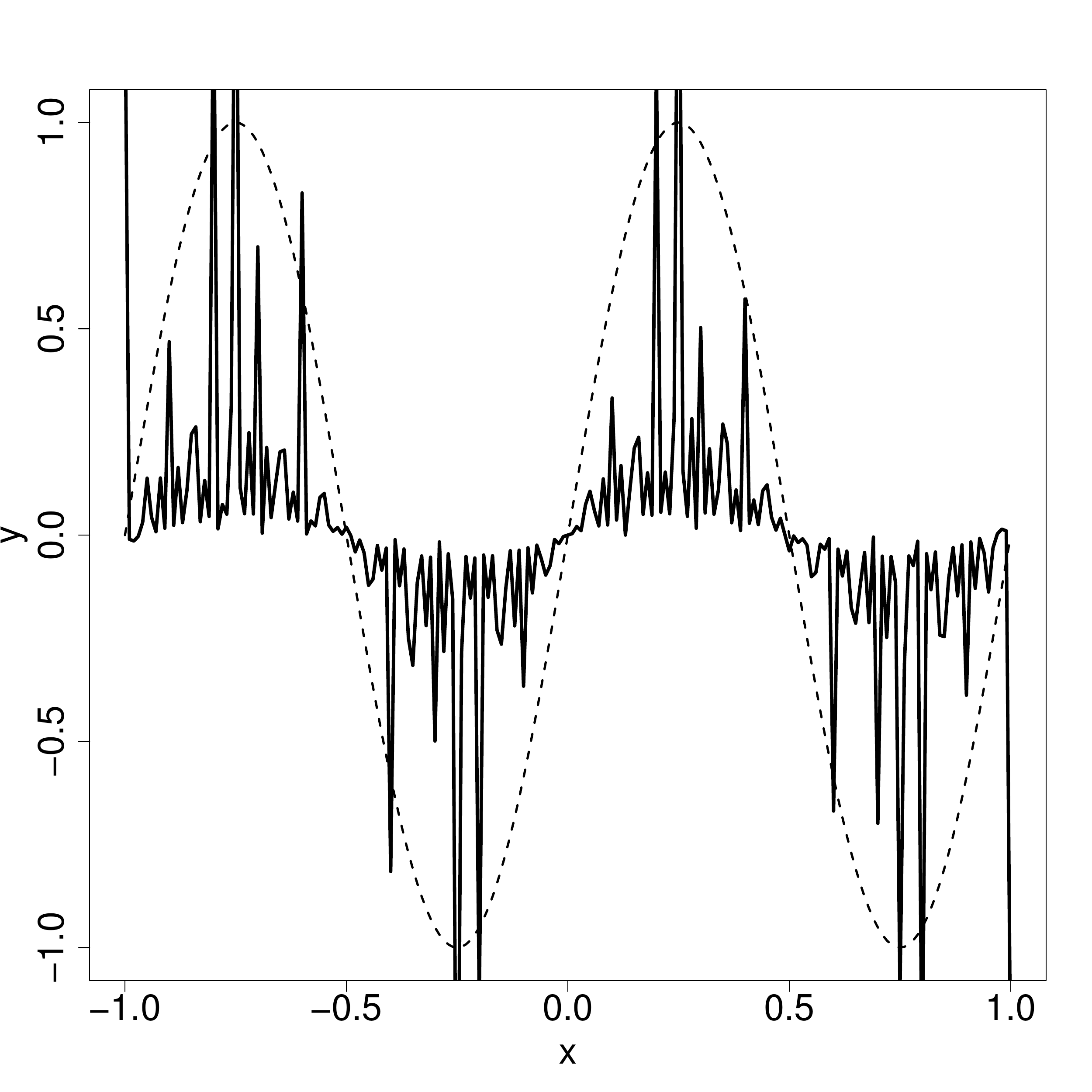}} &
  \raisebox{-.5\height}{\includegraphics[width=.27\textwidth]{./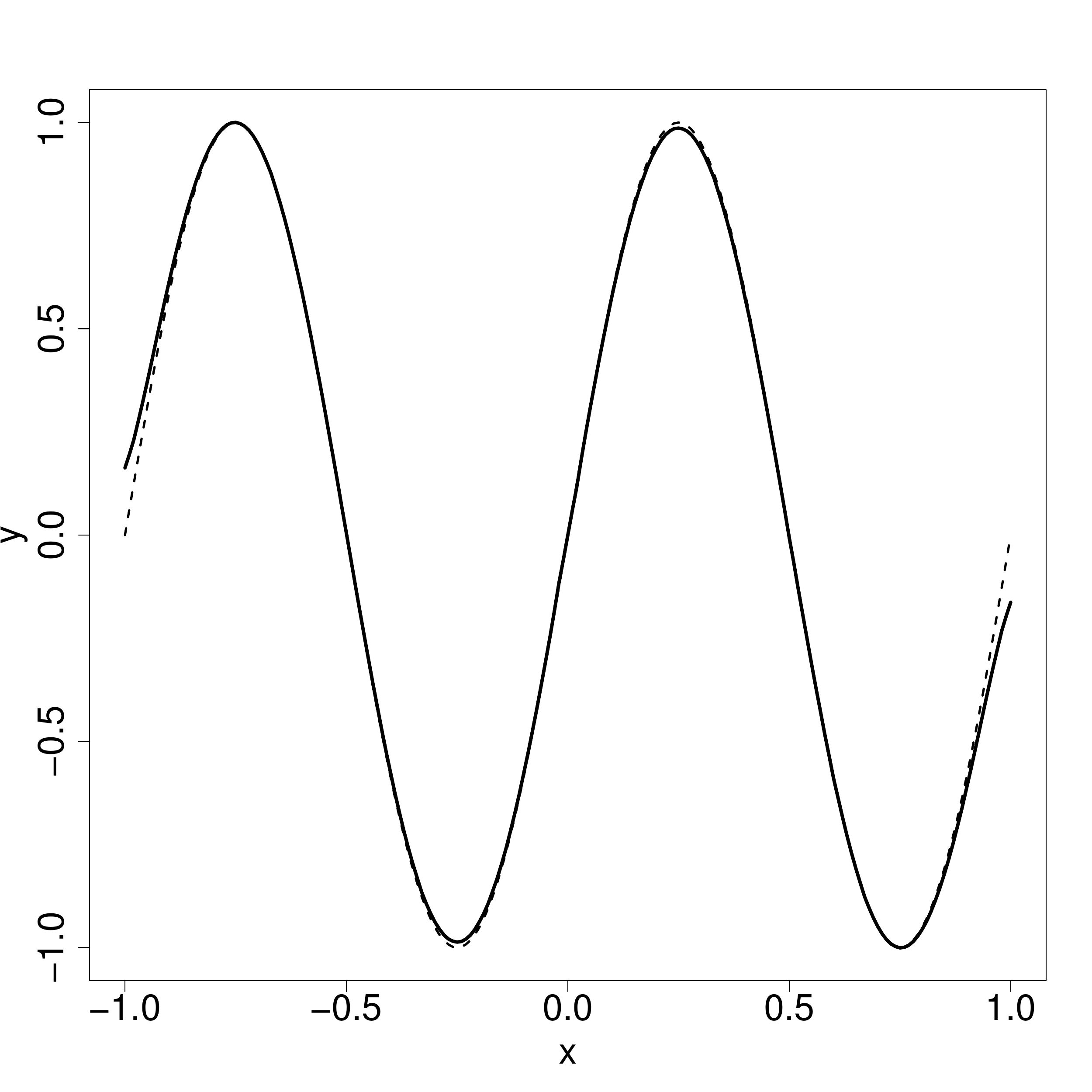}} &
  \raisebox{-.5\height}{\includegraphics[width=.27\textwidth]{./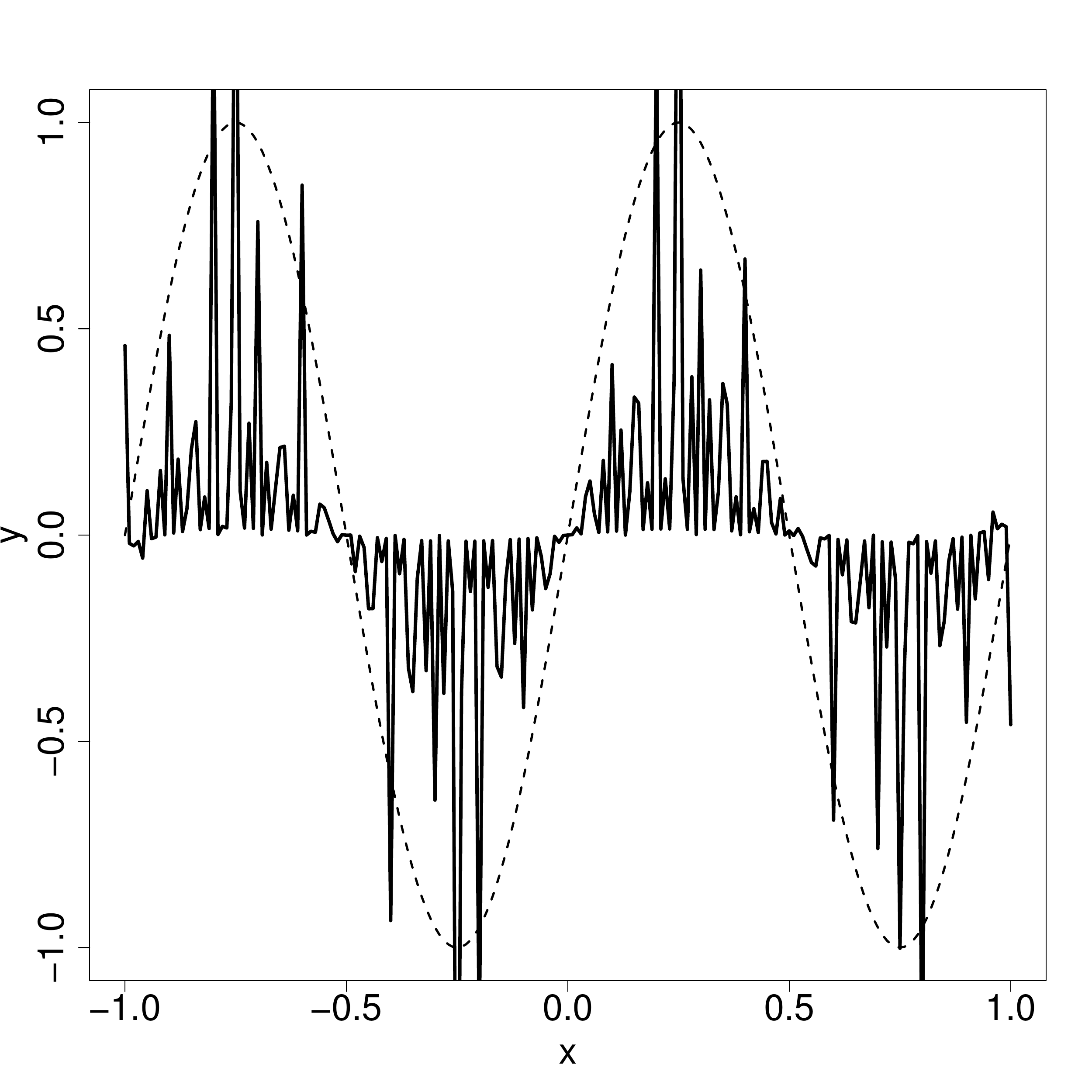}} \\
  \rotatebox{90}{$\eta=z_+$} &
  \raisebox{-.5\height}{\includegraphics[width=.27\textwidth]{./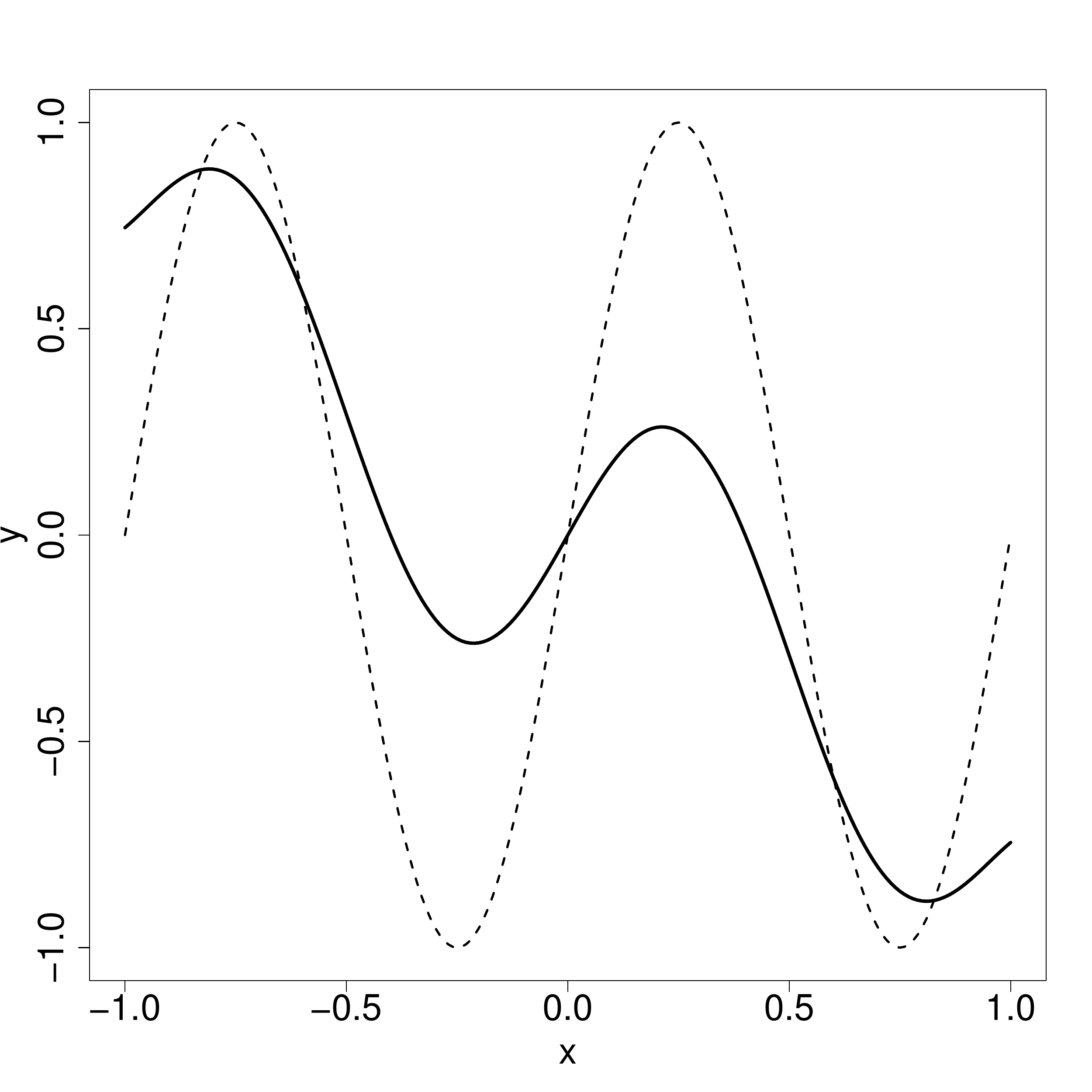}} &
  \raisebox{-.5\height}{\includegraphics[width=.27\textwidth]{./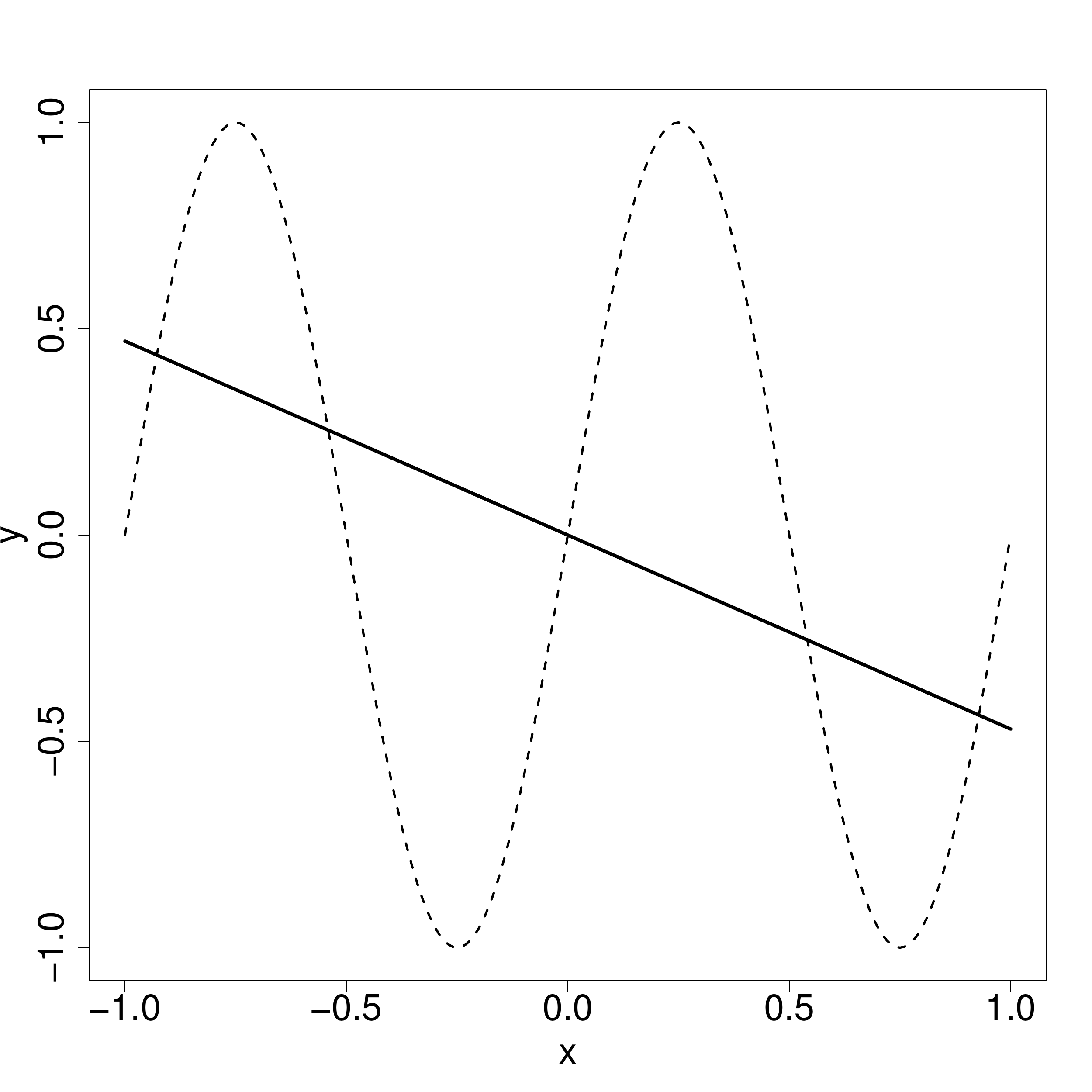}} &
  \raisebox{-.5\height}{\includegraphics[width=.27\textwidth]{./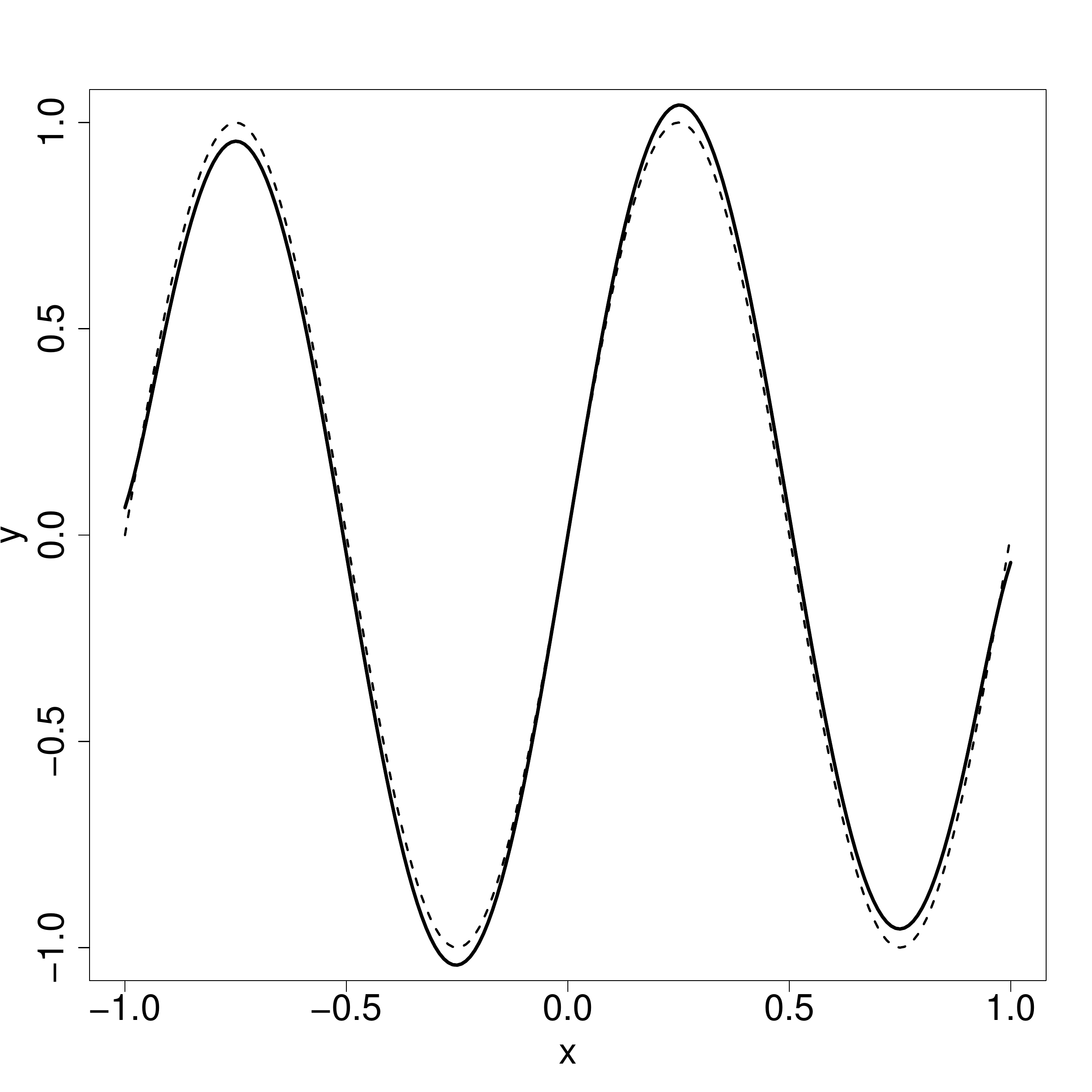}} \\
\end{tabular}
  \caption{Reconstruction with truncated power functions --- Dirac's $\delta$, unit step $z_+^0$, and ReLU $z_+$.
The solid line is a plot of the reconstruction result; the dotted line plots the original signal.
}
  \label{fig:sin_truncs}
\end{figure}
\begin{figure}[h!] %% poly
\centering
\begin{tabular}{cccc}
  & $ \psi = \bp \gauss $ & $ \psi = \bp \gauss' $ & $ \psi = \bp \gauss'' $ \\
  \rotatebox{90}{$\eta(z)=z$} &
  \raisebox{-.5\height}{\includegraphics[width=.27\textwidth]{./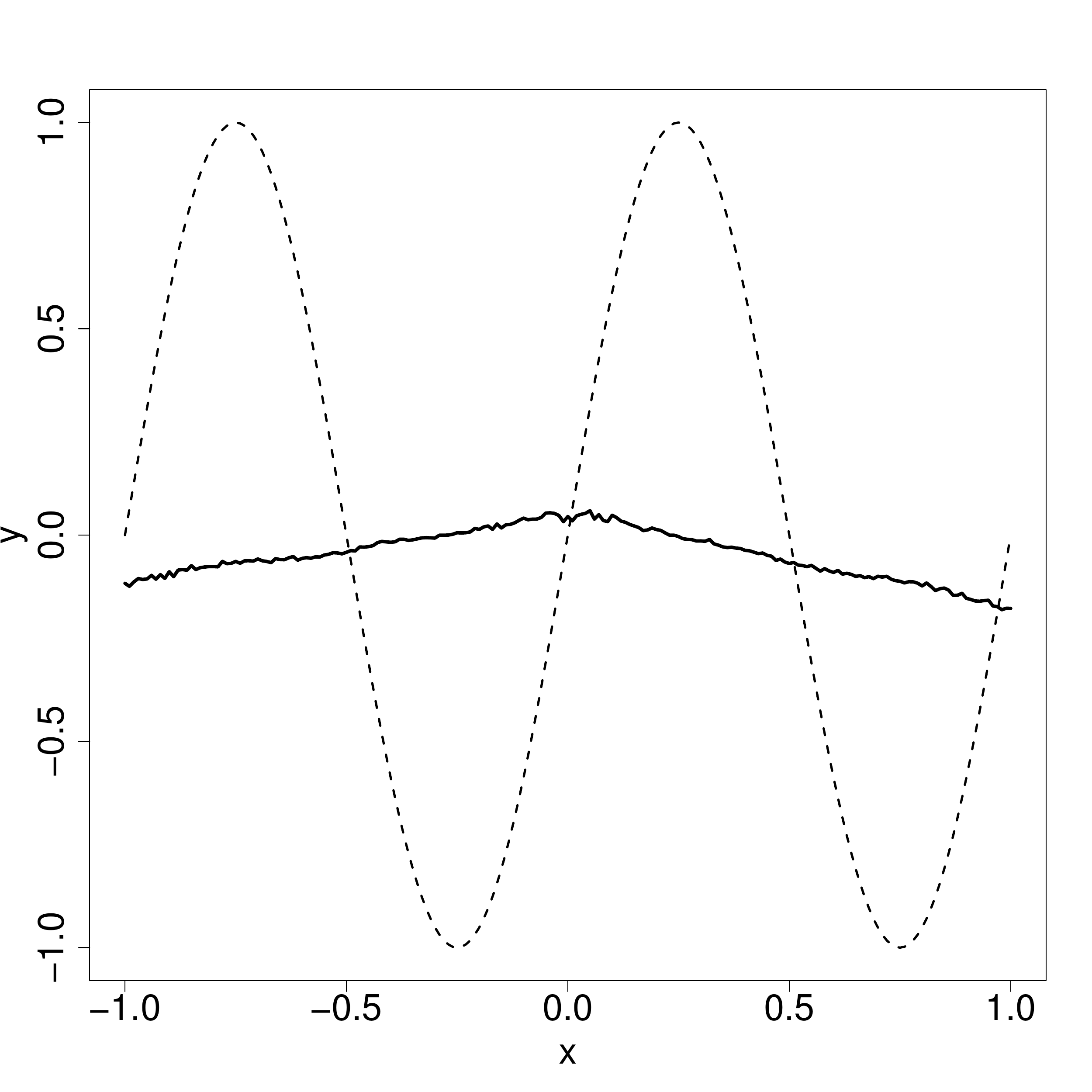}} &
  \raisebox{-.5\height}{\includegraphics[width=.27\textwidth]{./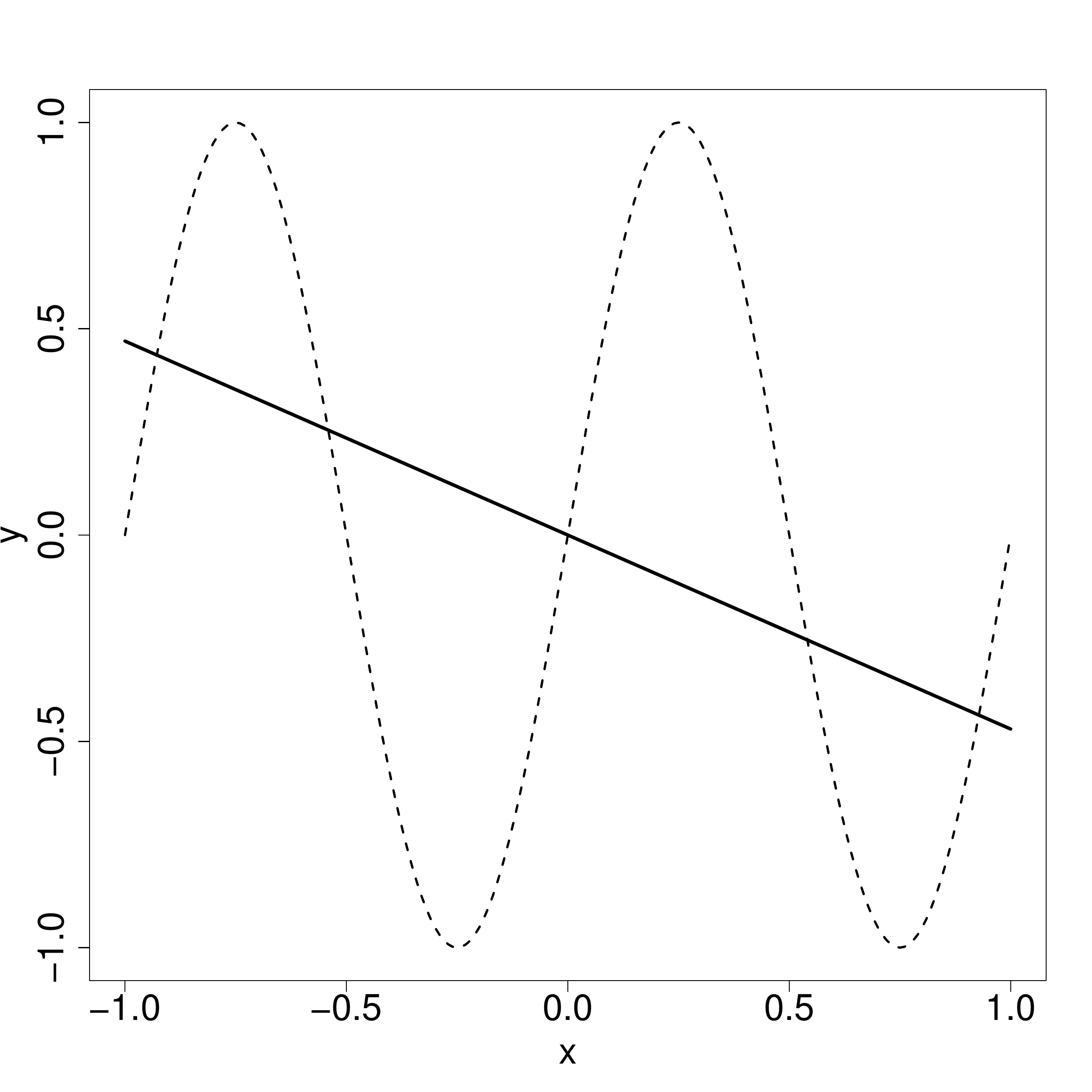}} &
  \raisebox{-.5\height}{\includegraphics[width=.27\textwidth]{./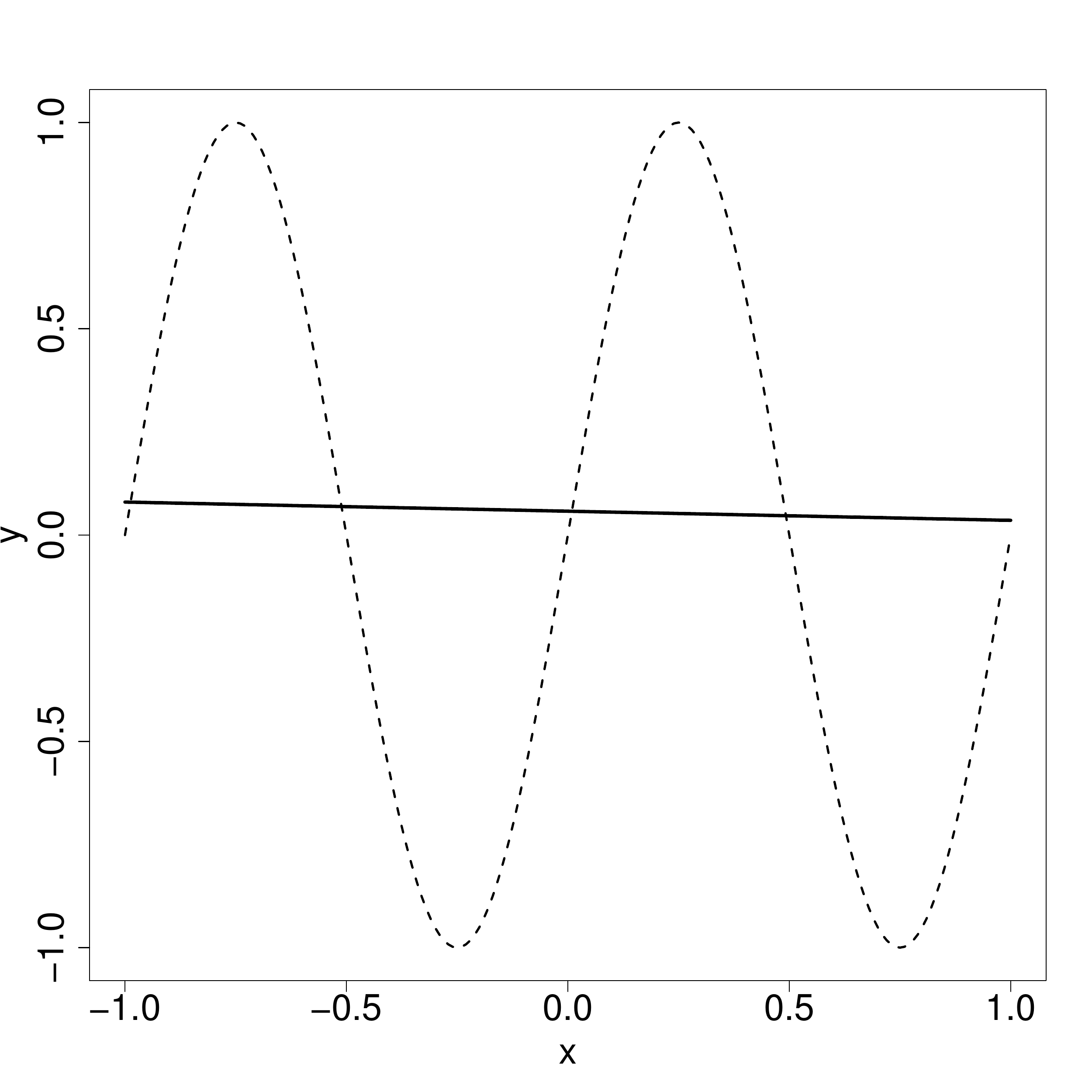}} \\
\end{tabular}
  \caption{Reconstruction with linear function $\eta(z) = z$.
The solid line is a plot the reconstruction result; the dotted line plots the original signal.
}
  \label{fig:sin_linear}
\end{figure}

\reffig{fig:sin_Tab} depicts the ridgelet transform $\rid_\psi f(a,b)$.
\rsho{As the order $\ell$ of $\psi = \bp \gauss^{(\ell)}$ increases, the localization of $\rid_\psi f$ increases.
As shown in \reffig{fig:sin_sigs}, every $\rid_\psi f$ can be reconstructed to $f$ with some admissible activation function $\eta$.
It is somewhat intriguing that the case $\psi = \bp \gauss''$ can be reconstructed with two different activation functions.}

Figures \ref{fig:sin_sigs}, \ref{fig:sin_truncs}, and \ref{fig:sin_linear} tile the results of reconstruction with sigmoidal functions, truncated power functions, and a linear function.
The solid line is a plot of the reconstruction result; the dotted line draws the original signal.
In each of the figures, the theoretical diagnoses and experimental results are almost consistent and reasonable.

In \reffig{fig:sin_sigs}, at the bottom left, the reconstruction signal with the softplus seems incompletely reconstructed,
\rsho{in spite of \reftab{tab:admissible} indicating '$\infty$'.
Recall that $\widehat{\sig^{(-1)}}(\fz)$ has a pole $\fz^{-2}$;
thus, we can understand this cell in terms of \rsho{$\sps * \bp \gauss$} working as an {\em integrator}, that is, a {\em low-pass filter}.}

In \reffig{fig:sin_truncs}, in the top row, all the reconstructions with Dirac's $\delta$ fail.
These results seem to contradict the theory.
However, it simply reflects the \rsho{implementation} difficulty of realizing Dirac's $\delta$,
because $\delta(z)$ is a ``function'' that is almost constantly zero, except for the origin.
Nevertheless, $z = ax-b$ rarely happens to be exactly zero, provided $a, b$, and $x$ are discretized.
This is the reason why this row fails.
At the bottom left, the ReLU seems to lack sharpness for reconstruction. Here we can again understand that \rsho{$z_+ * \bp \gauss$} worked as a low-pass filter.
It is worth noting that the unit step function and the ReLU provide a sharper reconstruction than the sigmoidal function and the softplus.

In \reffig{fig:sin_linear}, all the reconstructions with a linear function fail.
This is consistent with the theory that polynomials cannot be admissible as their Fourier transforms are singular at the origin.

%\newpage
\clearpage
\subsection{Shepp-Logan phantom}
We next studied a gray-scale image {\em Shepp-Logan phantom} \cite{LS.phantom}. % shown in \reffig{fig:phantom_input}.
The ridgelet functions $\psi = \bp^2 \psi_0$ were chosen from the $\sth{\ell}$ derivatives of the Gaussian $\psi_0 = \gauss^{(\ell)}, \ (\ell=0,1,2)$.
The activation functions $\eta$ were chosen from the RBF $\rbf$ (instead of Dirac's $\delta$), the unit step function $z_+^0$, and the ReLU $z_+$.
% \begin{figure}[h]
%   \centering
%   \includegraphics[width=5cm]{../exp/phantom/phantom_input.png}
%   \caption{Shepp-Logan phantom $(256 \times 256)$}
%   \label{fig:phantom_input}
% \end{figure}

The original image was composed of $256 \times 256$ pixels. We treated it as a two-dimensional signal $f(\xx)$ defined on $[-1, 1]^2$.
We computed the reconstruction formula
\begin{align}
\int_{\RR} \int_{\RR^2} \rid_\psi f(\ba,b) \eta(\ba \cdot \xx-b) \frac{\dd \ba \dd b}{\rsho{\| \ba \|}},
\end{align}
by discretizing $(\ba,b) \in [ -300, 300 ]^2 \times [-30, 30]$ by $\Delta \ba = (1,1)$ and $\Delta b = 1$.

\reffig{fig:phantom} lists the results of the reconstruction.
As observed in the one-dimensional case, the results are fairly consistent with the theory.
Again, at the bottom left, the reconstructed image seems dim. Our understanding is that it was caused by low-pass filtering.

\begin{figure}[h!]
\centering
\begin{tabular}{cccc}
  & $ \psi = \bp^2 \gauss $ & $ \psi  = \bp^2 \gauss' $ & $ \psi = \bp^2 \gauss'' $ \\
  \rotatebox{90}{$\eta = \rbf$} &
  \raisebox{-.5\height}{\includegraphics[width=.28\linewidth]{./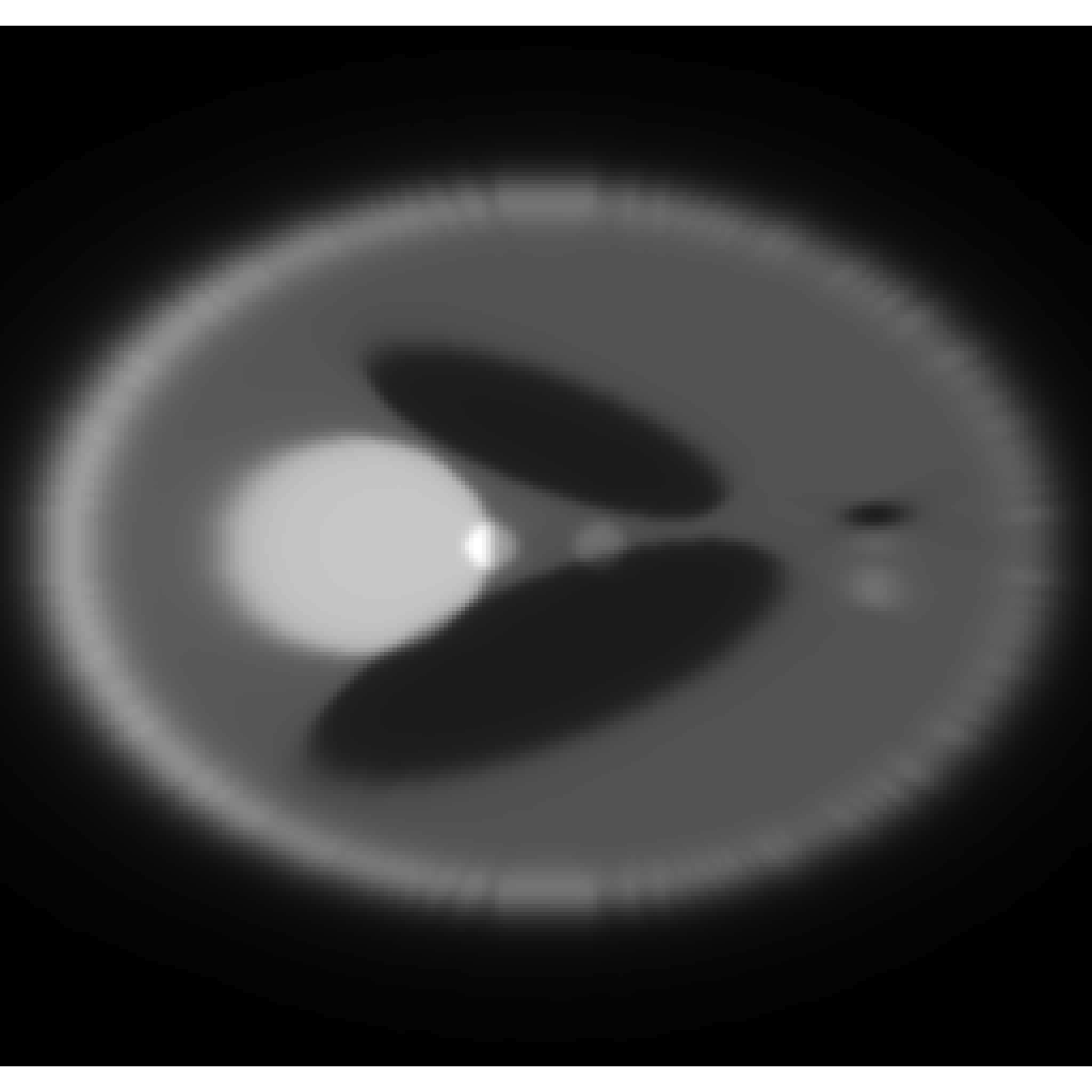}} &
  \raisebox{-.5\height}{\includegraphics[width=.28\linewidth]{./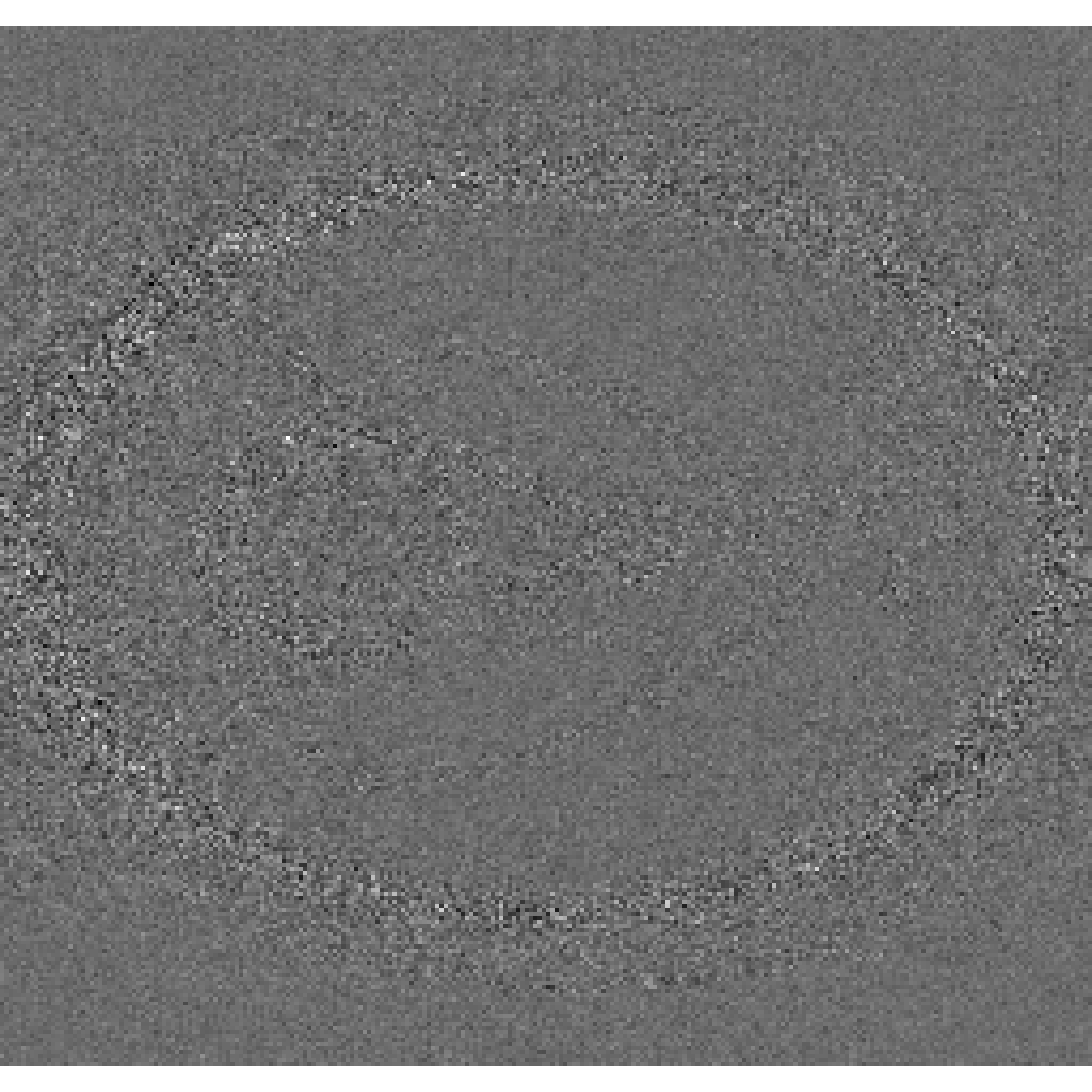}} &
  \raisebox{-.5\height}{\includegraphics[width=.28\linewidth]{./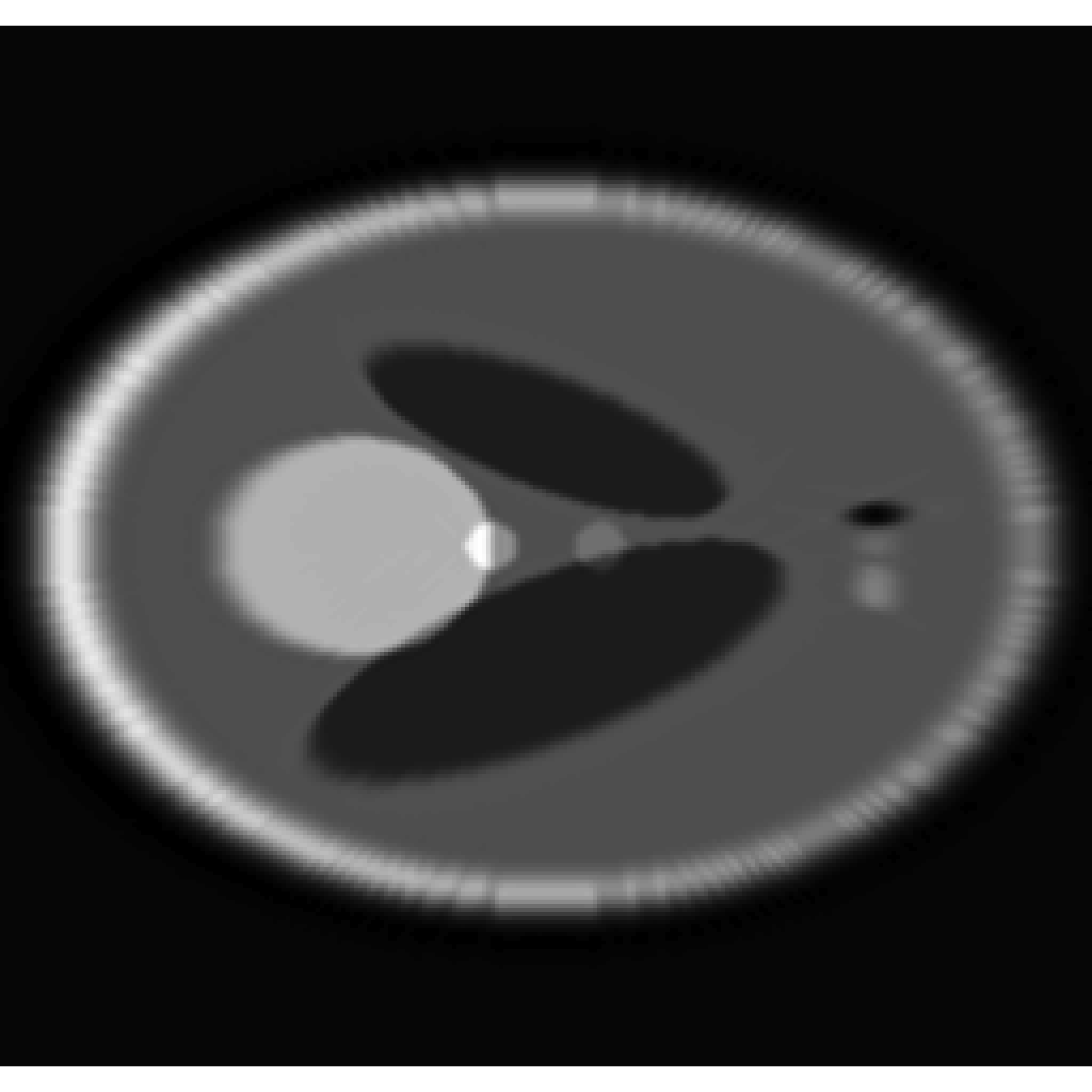}} \\
  \rotatebox{90}{$\eta = z_+^0$} &
  \raisebox{-.5\height}{\includegraphics[width=.28\textwidth]{./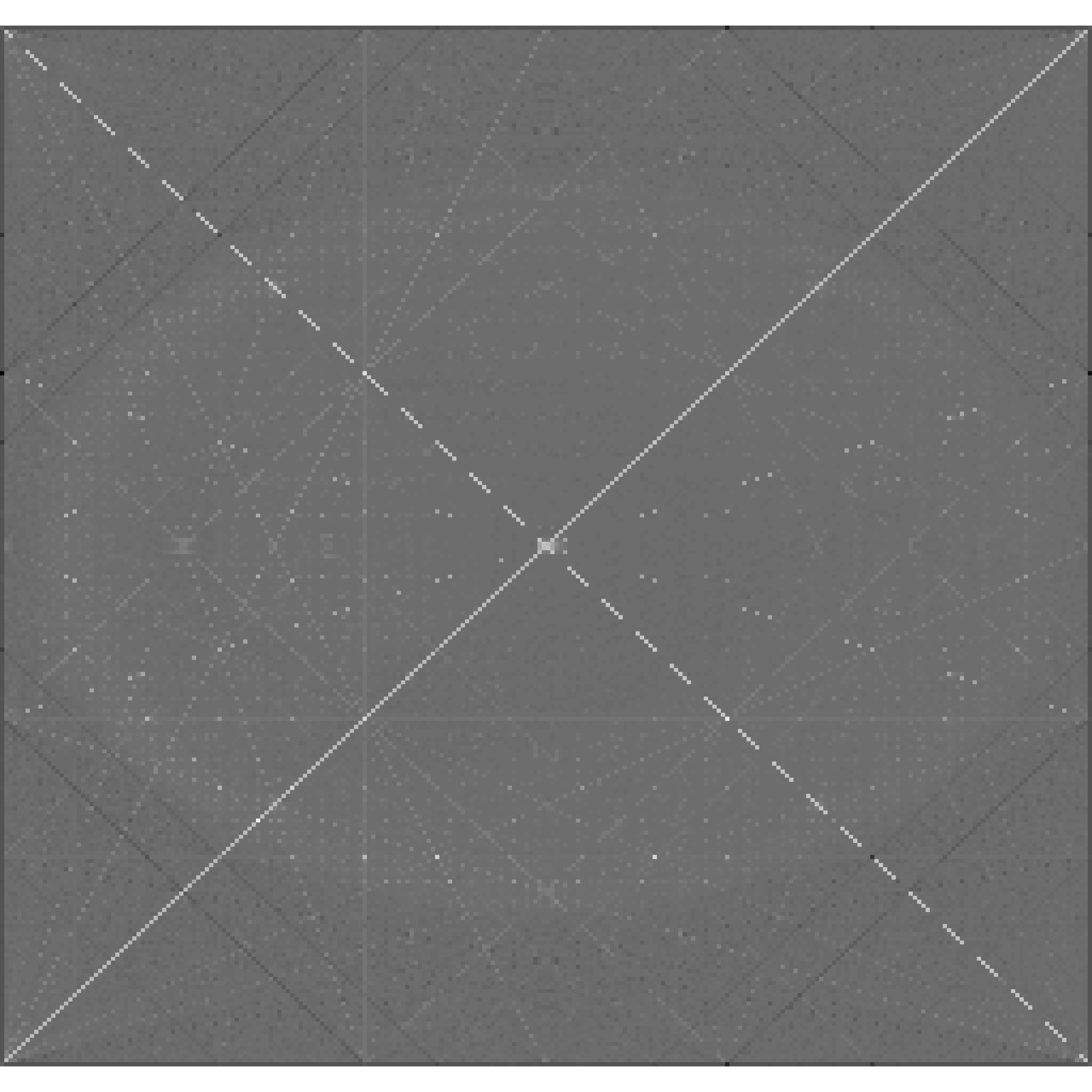}} &
  \raisebox{-.5\height}{\includegraphics[width=.28\textwidth]{./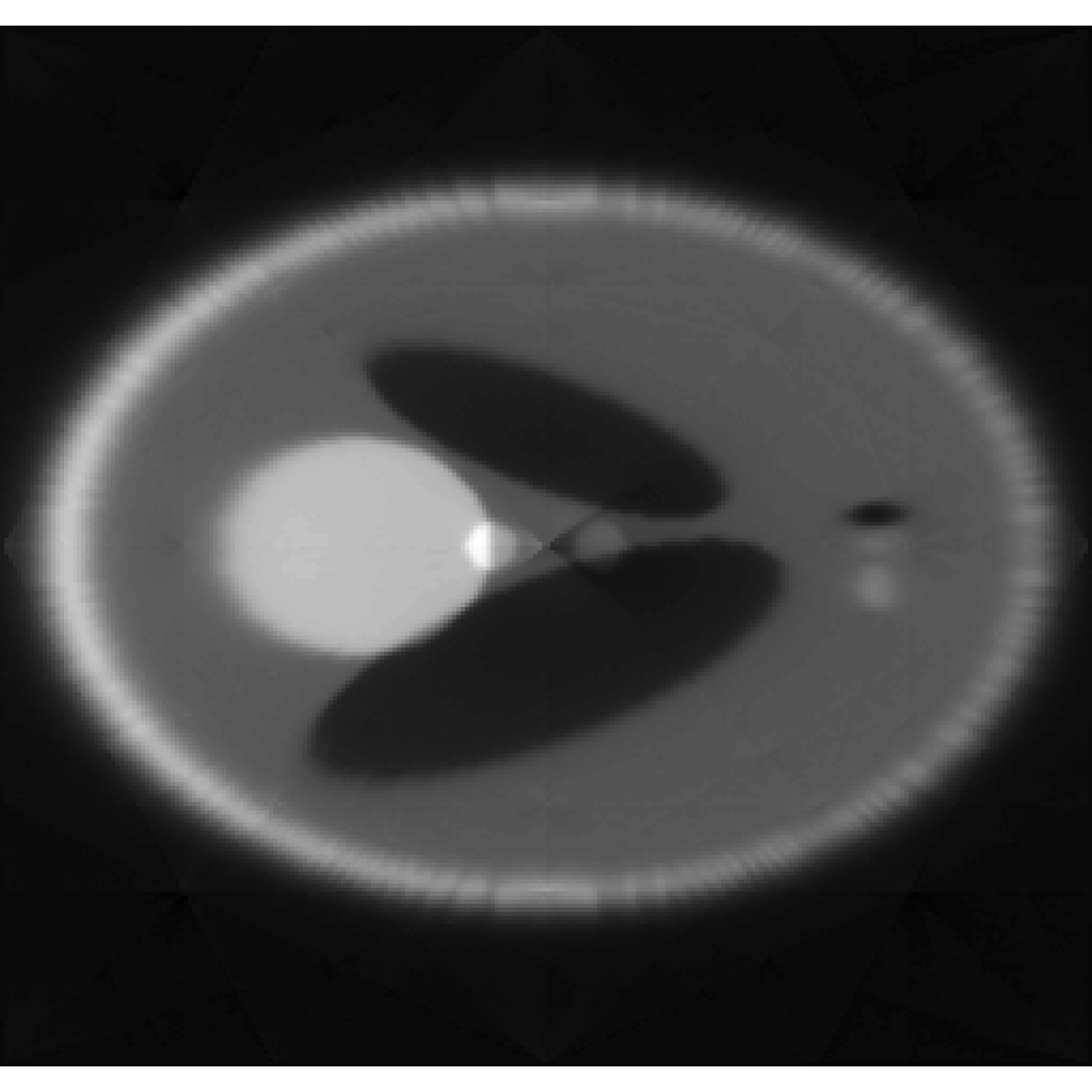}} &
  \raisebox{-.5\height}{\includegraphics[width=.28\textwidth]{./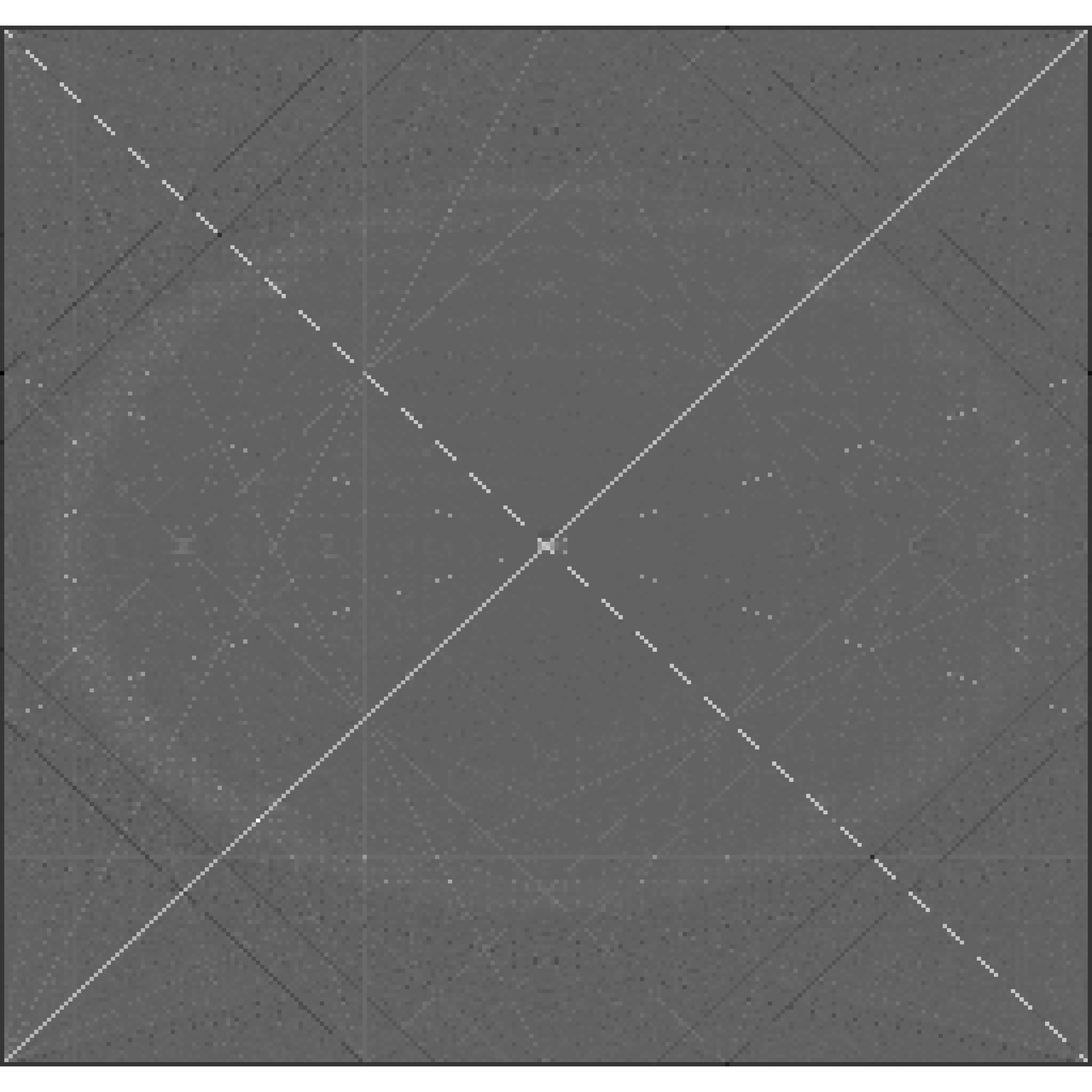}} \\
  \rotatebox{90}{$\eta = z_+$} &
  \raisebox{-.5\height}{\includegraphics[width=.28\textwidth]{./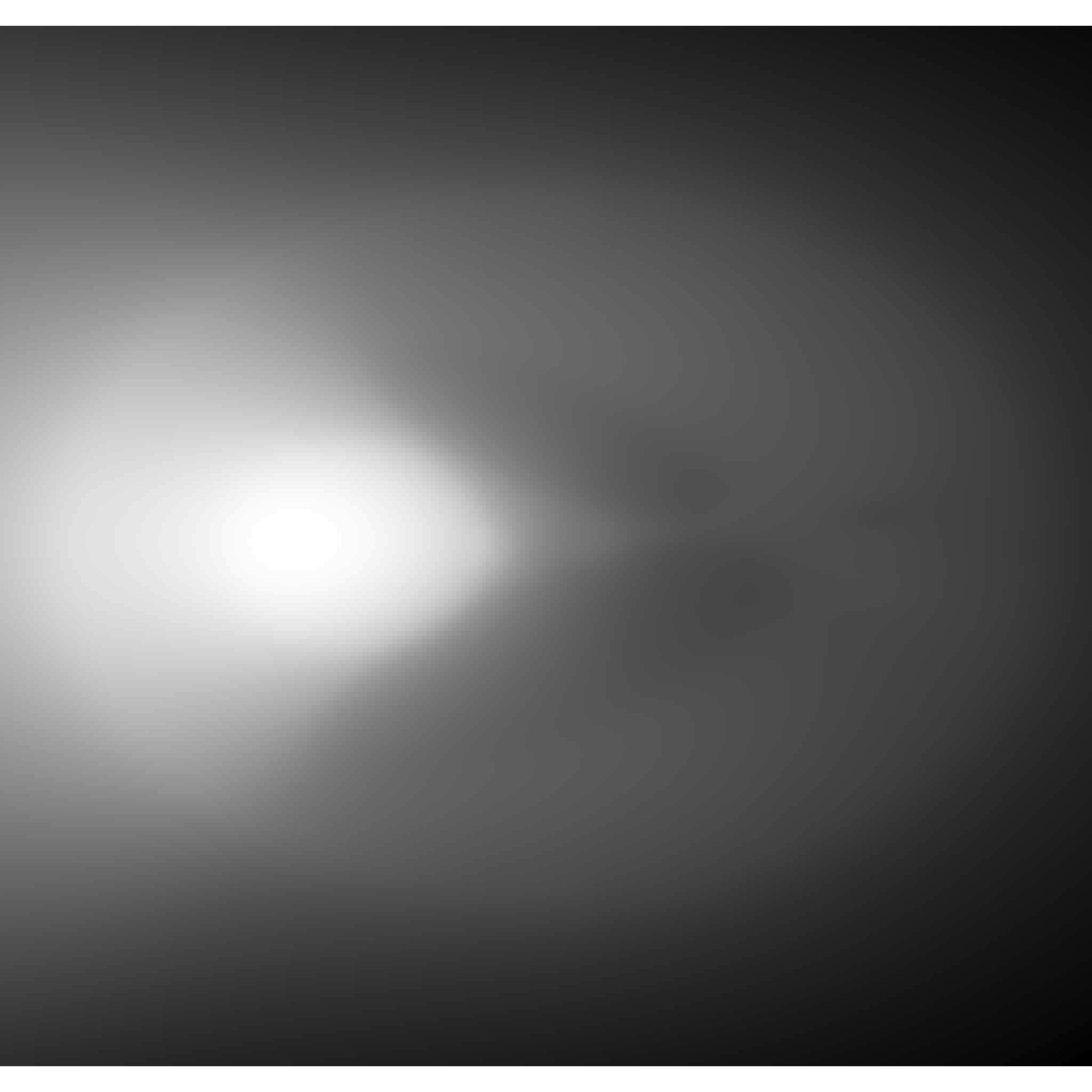}} &
  \raisebox{-.5\height}{\includegraphics[width=.28\textwidth]{./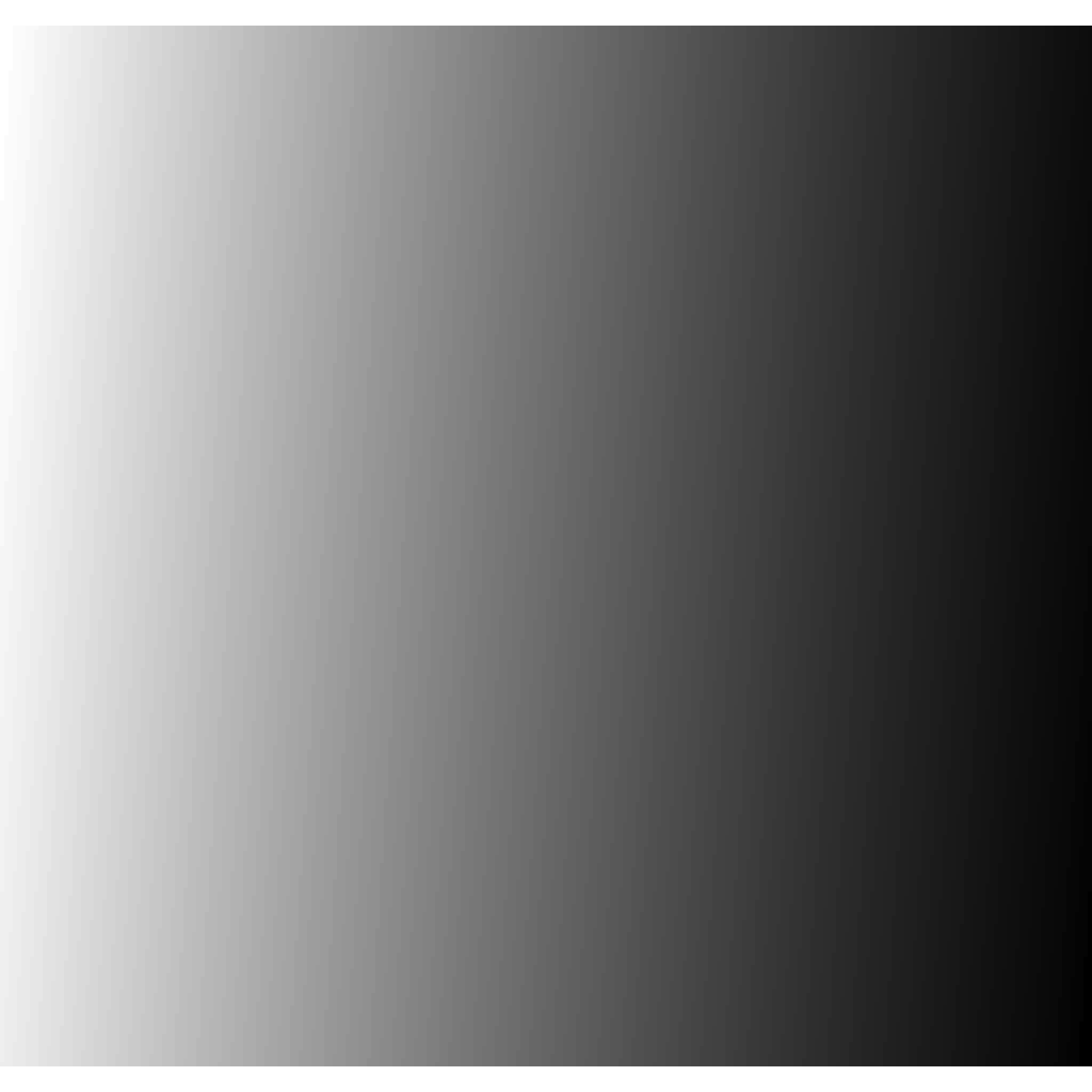}} &
  \raisebox{-.5\height}{\includegraphics[width=.28\textwidth]{./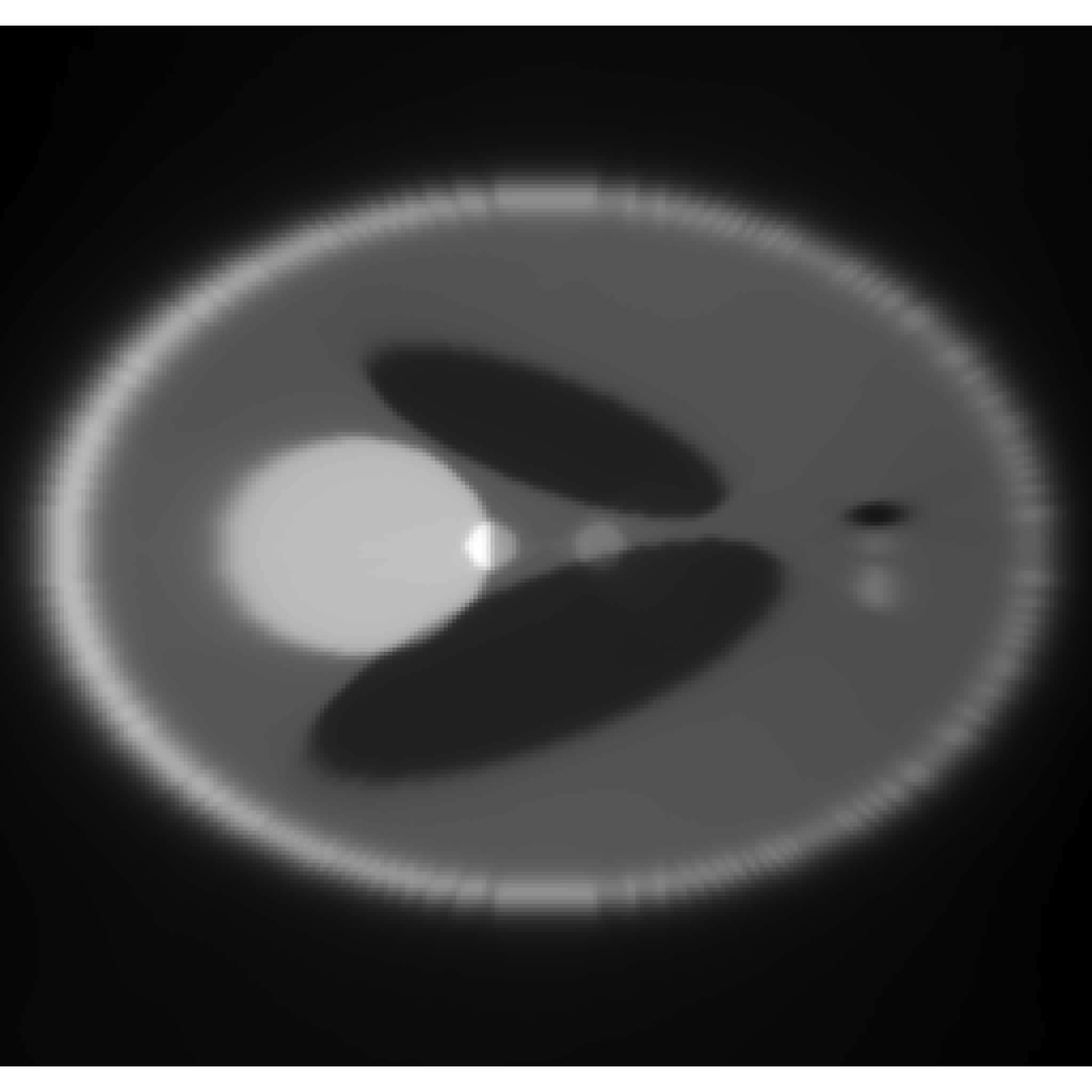}} \\
\end{tabular}
  \caption{Reconstruction with RBF $\rbf$, unit step $z_+^0$, and ReLU $z_+$.}
  \label{fig:phantom}
\end{figure}

\clearpage
\section{Concluding Remarks}
We have shown that \rtwo{neural networks with unbounded non-polynomial activation functions have} the universal approximation property.
Because the integral representation of the neural network coincides with the dual ridgelet transform,
our goal reduces to \rtwo{constructing} the ridgelet transform with respect to distributions.
Our results cover a wide range of activation functions: not only the traditional RBF, sigmoidal function, and unit step function,
but also truncated power functions $z_+^k$, which contain the ReLU and even Dirac's $\delta$.
In particular, we concluded that a neural network can approximate \rsho{$L^1 \cap C^0$ functions in the pointwise sense,
and $L^2$ functions in the $L^2$ sense,}
 when its activation ``function'' is a Lizorkin distribution ($\spLizD$) that is admissible.
The Lizorkin distribution is a tempered distribution ($\spTD$) that is not a polynomial.
As an important consequence,
 what a neural network learns is a ridgelet transform of the target function $f$.
In other words, during backpropagation the network indirectly searches for an admissible ridgelet function, \rtwo{by constructing} a backprojection filter.

Using the weak form expression of the ridgelet transform, we extensively defined the ridgelet transform with respect to distributions.
\refthm{thm:existence} guarantees the existence of the ridgelet transform with respect to distributions.
\reftab{tab:weakridge} suggests that for the convolution of distributions to converge, 
the class $\spX$ of domain and the class $\spZ$ of ridgelets should be balanced.
\rsho{\refprop{prop:conti.L1} states that $\rid_\psi : L^1(\RR^m) \to \spTD(\YY^{m+1})$ is a bounded linear operator.}
\refthm{thm:dual} states that the dual ridgelet transform coincides with a dual operator.
Provided the reconstruction formula holds, that is, when the ridgelets are admissible,
the ridgelet transform is injective and the dual ridgelet transform is surjective.

For an unbounded $\eta \in \spZ(\RR)$ to be admissible, it cannot be a polynomial and it can be associated with a backprojection filter.
If $\eta \in \spZ(\RR)$ is a polynomial then the product of distributions in the admissibility condition should be indeterminate.
Therefore, $\spZ(\RR)$ \rtwo{excludes} polynomials.
\rsho{\refthm{thm:eq.ac}} rephrases the admissibility condition \rsho{in the real domain.} 
As a direct consequence, \refcor{cor:const.ap} gives a constructive sufficiently admissible condition.

After investigating the construction of the admissibility condition, we showed that formulas can be reconstructed \rsho{on $L^1(\RR^m)$} in two ways.
\refthm{thm:formula} uses the Fourier slice theorem.
\refthm{thm:formula.radon} uses approximations to the identity and reduces to the inversion formula of the Radon transform.
\rsho{\refthm{thm:formula.radon} as well as \refcor{cor:radon.d} suggest} that the admissibility condition requires $(\psi,\eta)$ to construct a backprojection filter.

\rsho{In addition, we have extended the ridgelet transform on $L^1(\RR^m)$ to $L^2(\RR^m)$.}
\refthm{thm:parseval} states that Parseval's relation, which is a weak version of the reconstruction formula, holds on $L^1 \cap L^2(\RR^m)$.
\refthm{thm:L2} follows the bounded extension of $\rid_\psi$ from $L^1 \cap L^2(\RR^m)$ to $L^2(\RR^m)$.
\refthm{thm:formula.L2} gives the reconstruction formula in $L^2(\RR^m)$.

By showing \rtwo{that} $z_+^k$ and other activation functions belong to $\spLizD$, and \rtwo{that} they are admissible with some derivatives of the Gaussian, we \rtwo{proved} the universal approximation property of a neural network with an unbounded activation function.
Numerical examples were consistent \rtwo{with} our theoretical diagnoses on the admissibility.
In addition, we found that some non-admissible combinations worked as a low-pass filter; for example, $(\psi,\eta) = (\bp^{m}[\mathrm{Gaussian}], \mathrm{ReLU})$ and $(\psi,\eta) = (\bp^{m}[\mathrm{Gaussian}], \mathrm{softplus} $).

We plan to perform the following interesting investigations in future.
\begin{enumerate}
\item Given an activation function $\eta \in \spLizD(\RR)$, which is the ``best'' ridgelet function $\psi \in \spSch(\RR)$?
\begin{itemize}
\item[]
In fact, for a given activation function $\eta$, we have plenty of choices. By \refcor{cor:const.ap}, all elements of 
\begin{gather}
\spany_\eta := \left\{ \bp^m \psi_0 \ \middle| \ \psi_0 \in \spSch(\RR) \mbox{ such that } \dual{ \widehat{\eta},\widehat{\psi_0} } \mbox{ is finite and nonzero. }\right\},
\end{gather}
are admissible with $\eta$.
\end{itemize}
\item How are ridgelet functions related to {\em deep} neural networks?
\begin{itemize}
\item[]
Because ridgelet analysis is so fruitful, we aim to develop ``deep'' ridgelet analysis.
One of the essential leaps from shallow to deep is that the network output expands from scalar to vector
because a deep structure is a cascade of multi-input multi-output layers.
In this regard, we expect \rsho{\refcor{cor:radon.d}} to play a key role.
By using the intertwining relations, we can ``cascade'' the reconstruction operators as below
\begin{gather}
\drid_\eta \rid_\psi \drid_\eta \rid_\psi = \drad \bp^{k-1}\rad (- \Delta)^{m-1 -\frac{k+\ell}{2}}\drad \bp^{\ell-1}\rad. \quad (0 \leq k,\ell \leq m)
\end{gather}
This equation suggests that the cascade of ridgelet transforms coincides with a composite of backprojection filtering in the Radon domain and differentiation in the real domain.
We conjecture that this point of view can be expected to facilitate analysis of the deep structure.
\end{itemize}
\end{enumerate}

\subsection*{Acknowledge}
\rsho{The authors would like to thank the anonymous reviewers for fruitful comments and suggestions to improve the quality of the paper.}
\rsho{The authors would like to express their appreciation toward Dr.~Hideitsu Hino for his kind support with writing the paper.}
\rsho{This work was supported by JSPS KAKENHI Grand Number 15J07517.}

\appendix

\section{Proof of \refthm{thm:existence}} \label{app:proof.existence}
A ridgelet transform $\rid_\psi f(\bu,\alpha,\beta)$ is the convolution of a Radon transform $\rad f(\bu,p)$ and a dilated distribution $\psi_\alpha(p)$ in the sense of a Schwartz distribution.
That is,
\begin{align}
f(\xx) \mapsto \rad f(\bu,p) \mapsto \left(\rad f(\bu, \cdot ) * \overline{\refl{\psi_\alpha}}\right) (\beta) = \rid_\psi f(\bu, \alpha, \beta).
\end{align}
\rsho{We verify that the ridgelet transform is well defined in a stepwise manner.}
\rsho{Provided there is no danger of confusion, in the following steps we denote by $\spX$ the classes $\spcomp, \spED, \spSch, \spCD, L^1$, or $\spLD{1}$.}

\subhead{Step 1}{Class $\spX(\Sph^{m-1} \times \RR)$ of $\rad f(\bu,p)$}
Hertle's results found \cite[Th 4.6, Cor 4.8]{Hertle1983} that the Radon transform is the continuous injection
\begin{align}
\rad : \spX(\RR^m) \hookrightarrow \spX(\Sph^{m-1} \times \RR),
\end{align}
\rsho{where $\spX = \spcomp, \spED, \spSch, \spCD, L^1$, or $\spLD{1}$;
if $f \in \spX(\RR^m)$ then $\rad f \in \spX(\Sph^{m-1} \times \RR)$,
which determines the second column.}
Our possible choice of the domain $\spX$ is restricted to them.

\subhead{Step 2}{Class $\spB(\RR)$ of $\rid_\psi f (\bu, \alpha, \beta)$ with respect to $\beta$}
\rsho{
Fix $\alpha > 0$.
Recall that $\rid_\psi f(\bu, \alpha, \beta) = \left( \rad f (\bu, \cdot ) * \overline{\widetilde{\psi_\alpha}}\right)(\beta)$ in the sense of Schwartz distributions.
By the nuclearity of $\spX$ \cite[\S~51]{Treves.new}, the kernel theorem
\begin{align}
\rsho{ \spX(\Sph^{m-1} \times \RR) \cong \spX(\Sph^{m-1}) \ttprod \spX(\RR),} \label{eq:kernel}
\end{align}
holds.
Therefore, we can omit $\bu \in \Sph^{m-1}$ in the considerations for $(\alpha, \beta) \in \Half$.
According to Schwartz's results shown in \reftab{tab:conv},
for the convolution $g * \psi$ of $g \in \spX(\RR)$ and $\psi \in \spZ(\RR)$ to converge in $\spB(\RR)$,
we can assign the largest possible class $\spZ$ for each $\spX$ as in the third column. Note that for $\spX = L^1$ we even assumed the continuity $\spZ=L^p \cap C^0$, which is technically required in Step 3.
Obviously for $\spZ = \spSD, \spTD, L^p \cap C^0$, or $\spLD{p}$, if $\psi \in \spZ(\RR)$ then $\psi_\alpha \in \spZ(\RR)$.
Therefore, we can determine the fourth column by evaluating $\spX * \spZ$ according to \reftab{tab:conv}.
 }

\subhead{Step 3}{
Class \rsho{$\spA(\Half)$} of $\rid_\psi f (\bu, \alpha, \beta)$ with respect to \rsho{$(\alpha, \beta)$}}
Fix $\bu_0 \in \Sph^{m-1}$ and assume $f \in \spX(\RR^m)$.
Write $g(p) := \rad f(\bu_0, p)$ and
\begin{align}
\wav[\psi; g](\alpha, \beta) := \int_\RR g(\alpha z + \beta) \overline{\psi(z)} \dd z,
\end{align}
then $\rid_\psi f(\bu_0, \alpha, \beta) = \wav[\psi;g](\alpha, \beta)$ for every $(\alpha, \beta) \in \Half$.
By the kernel theorem, $g \in \spX(\RR)$.

\subhead{Case 3a}{$(\spX = \spcomp \mbox{ and } \spZ = \spSD \mbox{ then } \spB = \spsmooth \mbox{ and } \spA = \spsmooth)$}
We begin by considering the case in the first row.
Observe that 
\begin{gather}
\partial_\alpha \wav[\psi;g](\alpha, \beta)
= \partial_\alpha \int_\RR g(\alpha z + \beta) \overline{\psi(z)} \dd z 
= \int_\RR g'(\alpha z + \beta) \overline{ z \cdot \psi(z)} \dd z 
= \wav[z \cdot \psi; g'](\alpha, \beta), \label{eq:wavda}\\
\partial_\beta \wav[\psi; g](\alpha, \beta)
= \partial_\beta \int_\RR g(\alpha z + \beta) \overline{\psi(z)} \dd z 
= \int_\RR g'(\alpha z + \beta) \overline{\psi(z)} \dd z
= \wav[\psi; g'](\alpha, \beta), 
\label{eq:wavdb}\\
\intertext{and thus that for every $k,\ell \in \NN_0$,}
\partial^k_\alpha \partial_\beta^\ell \wav[\psi; g](\alpha, \beta) = \wav[z^k \cdot \psi; g^{(k+\ell)}](\alpha,\beta).
 \label{eq:wavdab}
\end{gather}
Obviously if $g \in \spcomp(\RR)$ and $\psi \in \spSD(\RR)$ then $g^{(k+\ell)} \in \spcomp(\RR)$ and $z^k \cdot \psi \in \spSD(\RR)$, respectively,
and thus $\partial_\alpha^k \partial_\beta^\ell \wav[\psi; g](\alpha, \beta)$
exists at every $(\alpha, \beta) \in \Half$.
Therefore, we can conclude that if $g \in \spcomp(\RR)$ and $\psi \in \spSD(\RR)$ then $\wav[\psi; g] \in \spsmooth(\Half)$.

\subhead{Case 3b}{$(\spX = \spED \mbox{ and } \spZ = \spSD \mbox{ then } \spB = \spSD \mbox{ and } \spA = \spSD)$}
Let $g \in \spED(\RR)$ and $\psi \in \spSD(\RR)$. We show that $\wav[\psi; g] \in \spSD(\Half)$, that is,
for every compact set $\KK \subset \Half$, there exists $N \in \NN_0$ such that
\begin{align}
\Bigg| \int_\KK \TT(\alpha,\beta) \wav[\psi;g](\alpha,\beta) \frac{\dd \alpha \dd \beta}{\alpha} \Bigg| \lesssim \sum_{k,\ell \leq N} \sup_{(\alpha,\beta) \in \Half} | \partial_\alpha^k \partial_\beta^\ell \TT (\alpha,\beta) |, \quad \forall \TT \in \spcomp(\KK).
\end{align}
Fix an arbitrary compact set $\KK \subset \Half$ and a smooth function $\TT \in \spcomp(\KK)$, which is supported in $\KK$.
Take two compact sets $\mathrm{A} \subset \RR_+$ and $\mathrm{B} \subset \RR$ such that $\KK \subset \mathrm{A} \times \mathrm{B}$.
By the assumption that $g \in \spED(\RR)$ and $\psi \in \spSD(\RR)$, there exist $k, \ell \in \NN_0$ such that
\begin{gather}
\Bigg| \int_\RR u(z) g(z) \dd z \Bigg| \lesssim \sup_{z \in \supp g} | u^{(k)}(z) |, \quad \forall u \in \spsmooth(\RR) \label{eq:case3bg} \\
\Bigg| \int_\RR v(z) \overline{\psi(z)}\dd z \Bigg| \lesssim \sup_{z \in \RR} | v^{(\ell)}(z) |, \quad \forall v \in \spcomp(B). \label{eq:case3bpsi}
\end{gather}
Observe that for every fixed $\alpha$, $\TT(\alpha, \cdot) * \widetilde{g} \in \spSD(\RR)$. Then, by applying \refeq{eq:case3bg} and \refeq{eq:case3bpsi} incrementally,
\begin{align}
\Bigg| \int_\RR \TT(\alpha,\beta) \int_\RR g(\alpha z + \beta) \overline{\psi(z)} \dd z \frac{\dd \alpha \dd \beta}{\alpha} \Bigg|
&\leq \int_0^\infty \Bigg| \int_{\RR} \int_\RR \TT(\alpha,\beta - \alpha z) \overline{\psi(z)} \dd z \cdot g(\beta) \dd \beta \Bigg| \frac{\dd \alpha}{\alpha} \\
&\lesssim \int_0^\infty \sup_{\beta \in \supp g} \Bigg| \int_\RR \partial_\beta^k \TT(\alpha,\beta - \alpha z) \psi (z) \dd z \Bigg| \frac{\dd \alpha}{\alpha} \\
&\lesssim \int_0^\infty \sup_{\beta \in \supp g} \sup_z \Big| \partial_\beta^{k+\ell} \TT(\alpha,\beta - \alpha z) \Big| \alpha^{\ell-1} \dd \alpha \\
&= \int_\mathrm{A} \sup_{\beta \in \mathrm{B}} \Big| \partial_\beta^{k+\ell} \TT(\alpha,\beta) \Big| \alpha^{\ell-1} \dd \alpha \\
& \leq \sup_{(\alpha,\beta) \in \KK} \Big| \partial_\beta^{k+\ell} \TT(\alpha,\beta) \Big| \cdot \int_\mathrm{A} \alpha^{\ell-1} \dd \alpha,
\end{align}
where the third inequality follows by 
repeatedly applying $\partial_z [\TT (\alpha, \beta - \alpha z)] = (-\alpha) \partial_\beta \TT(\alpha,\beta - \alpha z)$;
the fourth inequality follows by the compactness of the support of $\TT$. Thus, we conclude that $\wav[\psi;g] \in \spSD(\Half)$.

\subhead{Case 3c}{$(\spX = \spSch \mbox{ and } \spZ = \spTD \mbox{ then } \spB = \spM \mbox{ and } \spA = \spM)$}
Let $g \in \spSch(\RR)$ and $\psi \in \spTD(\RR)$.
Recall the case when $\spX = \spcomp$.
Obviously, for every $k,\ell \in \NN_0$,
$g^{(k+\ell)} \in \spSch(\RR)$ and $z^k \cdot \psi \in \spTD(\RR)$, respectively,
which implies $\wav[\psi; g] \in \spsmooth(\Half)$.
Now we even show that $\wav[\psi; g] \in \spM(\Half)$, that is, for every $k,\ell \in \NN_0$ there exist $s,t \in \NN_0$ such that
\begin{align}
\big| \partial_\alpha^k \partial_\beta^\ell \wav[\psi; g](\alpha,\beta) \big| \lesssim (\alpha + 1/\alpha)^s (1+\beta^2)^{t/2}.
\end{align}
Recall that by \eqref{eq:wavdab}, we can regard $\partial^k_\alpha \partial_\beta^\ell \wav[\psi; g](\alpha, \beta)$ as $\partial^0_\alpha \partial_\beta^0 \wav[\psi_0; g_0](\alpha,\beta)$,
by setting $g_0 := g^{(k+\ell)}\in \spSch(\RR)$ and $\psi_0 := z^k \cdot \psi \in \spTD(\RR)$.
Henceforth we focus on the case when $k=\ell=0$.
Since $\psi \in \spTD(\RR)$, there exists $N \in \NN_0$ such that
\begin{align}
\Bigg| \int_\RR u(z)\overline{\psi(z)} \dd z \Bigg| \lesssim \sum_{s,t \leq N} \sup_{z \in \RR}| z^s u^{(t)}(z) |, \quad \forall u \in \spSch(\RR).
\end{align}
By substituting $u(z) \gets g(\alpha z + \beta)$, we have
\begin{align}
\Bigg| \int_\RR g(\alpha z + \beta) \overline{\psi(z)} \dd z \Bigg|
& \lesssim \sum_{s,t \leq N} \sup_{z \in \RR}| z^s \partial_z^t g(\alpha z + \beta) | \\
& = \sum_{s,t \leq N} \sup_{p \in \RR} \Bigg| \left( \frac{p - \beta}{\alpha} \right)^s \alpha^t g^{(t)}(p) \Bigg| \\
& \lesssim \sum_{s,t \leq N} \alpha^{t-s} \beta^s \sup_{p \in \RR} | p^{s} g^{(t)}(p) | \\
& \lesssim \left( \alpha + 1/\alpha \right)^N (1+\beta^2)^{N/2},
\end{align}
where the second equation follows by substituting $p \gets \alpha z + \beta$;
the fourth inequality follows because every $\sup_p | p^s g^{t} (p)|$ is finite by assumption that $g \in \spSch(\RR)$. Therefore, we can conclude that if $g \in \spSch(\RR)$ and $\psi \in \spTD(\RR)$ then $\wav[\psi; g] \in \spM(\Half)$.

\subhead{Case 3d}{$(\spX = \spCD \mbox{ and } \spZ = \spTD \mbox{ then } \spB = \spTD \mbox{ and } \spA = \spTD)$}
Let $g \in \spCD(\RR)$ and $\psi \in \spTD(\RR)$. We show that $\wav[\psi;g] \in \spTD(\Half)$, that is,
there exists $N \in \NN_0$ depending only on $\psi$ and $g$ such that
\begin{align}
\Bigg| \int_\Half \TT(\alpha,\beta) \wav[\psi; g](\alpha,\beta) \frac{\dd \alpha \dd \beta}{\alpha} \Bigg|
\lesssim
\sum_{s,t,k,\ell \leq N}
\sup_{\alpha, \beta \in \Half} \big| \hsch_{s,t}^{k,\ell} \TT (\alpha,\beta)\big|, \quad 
\forall \TT \in \spSch(\Half) \label{eq:case3dtd}
\end{align}
where we defined
\begin{align}
\hsch_{s,t}^{k,\ell} \TT(\alpha,\beta) := \left( \alpha + 1/\alpha \right)^s (1+\beta^2)^{t/2} \partial_\alpha^k \partial_\beta^\ell \TT(\alpha,\beta).
\end{align}
Fix an arbitrary $\TT \in \spSch(\Half)$.
By the assumption that $\psi \in \spTD(\RR)$, there exist $s,t \in \NN_0$ such that
\begin{align}
\Bigg| \int_\RR u(z) \overline{\psi(z)}\dd z \Bigg| \lesssim \sup_{z} | z^t u^{(s)}(z) |, \quad \forall u \in \spSch(\RR).
\end{align}
Observe that for every fixed $\alpha$, $\TT(\alpha, \cdot) * \widetilde{g} \in \spSch(\RR)$. Then we can provide an estimate as below.
\begin{align}
\Bigg| \int_\Half \TT(\alpha,\beta) \int_\RR g(\alpha z + \beta) \overline{\psi(z)} \dd z \frac{\dd \alpha \dd \beta}{\alpha} \Bigg|
&\leq \int_0^\infty \Bigg| \int_{\RR} \int_\RR \TT(\alpha,\beta) g(\alpha z + \beta) \dd \beta \cdot \overline{\psi(z)} \dd z \Bigg| \frac{\dd \alpha}{\alpha} \\
&\lesssim \int_{\RR} \sup_z \Bigg| z^t \int_\RR \hsch_{s,0}^{0,0} \TT(\alpha,\beta) g^{(s)}(\alpha z + \beta) \dd \beta \Bigg| \frac{\dd \alpha}{\alpha} \\
&\lesssim  \int_{\RR} \sup_p \Bigg| p^t \int_\RR \hsch_{s+t,0}^{0,0} \TT(\alpha,\beta) g^{(s)}(p + \beta) \dd \beta \Bigg| \frac{\dd \alpha}{\alpha} \\
&\leq \int_{\RR} \int_\RR \sup_p \big|p^t g^{(s)}(p + \beta) \big| \big| \hsch_{s+t,0}^{0,0} \TT(\alpha,\beta) \big| \frac{\dd \beta \dd \alpha}{\alpha}\\
&\lesssim  \int_{\RR} \int_\RR \sup_p \big|(1+|p + \beta|^2)^{t/2} g^{(s)}(p + \beta) \big| \big| \hsch_{s+t,t}^{0,0} \TT(\alpha,\beta) \big| \frac{\dd \beta \dd \alpha}{\alpha}\\
&\lesssim  \int_{\Half} \big| \hsch_{s+t,t}^{0,0} \TT(\alpha,\beta) \big| \frac{\dd \beta \dd \alpha}{\alpha} \\
&\leq  \sup_{(\alpha, \beta) \in \Half} \big| \hsch_{s+t+\eps,t+\delta}^{0,0} \TT(\alpha,\beta) \big| \int_\Half \left( \alpha + 1/\alpha \right)^{-\eps} (1+\beta^2)^{-\delta/2} \frac{\dd \beta \dd \alpha}{\alpha},
\end{align}
where the second inequality follows by 
repeatedly applying $\partial_z [g(\alpha z + \beta)] = \alpha \cdot g'(\alpha z+\beta)$ and $\alpha \lesssim \alpha + 1/\alpha$;
the third inequality follows by changing the variable $p \gets \alpha z$ and applying $(\alpha + 1/\alpha)^s  \cdot \alpha^{-t} \lesssim (\alpha + 1/\alpha)^{s+t}$; the fifth inequality follows by applying $|p| \lesssim (1+p^2)^{1/2}$ and Peetre's inequality $1+p^2 \lesssim (1+\beta^2)(1+ |p+\beta|^2)$; the sixth inequality follows by the assumption that $(1+p^2)^{t/2} g(p)$ is bounded for any $t$; the last inequality follows by H\"older's inequality and the integral is convergent when $\eps > 0$ and $\delta > 1$.

\subhead{Case 3e}{$(\spX = L^1 \mbox{ and } \spZ = L^p \cap C^0 \mbox{ then } \spB = L^p \cap C^0 \mbox{ and } \spA = \spTD)$}
Let $g \in L^1(\RR)$ and $\psi \in L^p \cap C^0(\RR)$.
We show that $\wav[g;\psi] \in \spTD(\Half)$, that is, it has at most polynomial growth at infinity.
Because $\psi$ is continuous, $g * \psi$ is continuous.
By Lusin's theorem, there exists a continuous function $g^\star$ such that $g^\star(x) = g(x)$ for almost every $x \in \RR$;
thus, by the continuity of $g * \psi$,
\begin{align}
g^\star * \psi(x) = g * \psi(x), \quad \mbox{for every } x \in \RR.
\end{align}
By the continuity and the integrability of $g^\star$ and $\psi$, there exist $s, t \in \RR$ such that
\begin{gather}
|g^\star (x)| \lesssim (1+x^2)^{-s/2}, \quad s > 1,\\
|\psi (x)| \lesssim (1+x^2)^{-t/2}, \quad tp > 1
\end{gather}
Therefore,
\begin{align}
\Bigg| \int_\RR g(x) \overline{\psi\left(\frac{x-\beta}{\alpha}\right)}\frac{1}{\alpha} \dd x \Bigg|
& \lesssim \Bigg| \int_\RR (1+x^2)^{-s/2}  \left( 1 + \left( \frac{x - \beta}{\alpha} \right)^2 \right)^{-t/2}  \dd x \Bigg| \alpha^{-1} \\
& \lesssim \Bigg| \int_\RR (1+x^2)^{-s/2}  \left( 1 + \left( x - \beta \right)^2 \right)^{-t/2} \dd x \Bigg| (1 + \alpha^2)^{t/2} \alpha^{-1} \\
& \lesssim (1+\beta^2)^{- \min \left( s, t \right)/2 } (\alpha + 1/\alpha)^{t-1},
\end{align}
which means $\wav[\psi; g]$ is a locally integrable function that grows at most polynomially at infinity.

Note that if $(t -1)p < m-1$ then $\wav[\psi;g] \in L^p(\Half; \alpha^{-m} \dd \alpha \dd \beta)$,
because $| \wav[\psi;g](\alpha,\beta) |^p$ behaves as $\beta^{-\min(s,t)} \alpha^{(t - 1)p}$ at infinity.

\subhead{Case 3f}{$(\spX = \spLD{1} \mbox{ and } \spZ = \spLD{p} \mbox{ then } \spB = \spLD{p} \mbox{ and } \spA = \spTD)$}
Let $g \in \spLD{1}(\RR)$ and $\psi \in \spLD{p}$.
We estimate \refeq{eq:case3dtd}.
Fix an arbitrary $\TT \in \spSch(\Half)$.
By the assumption that $\psi \in \spLD{p}(\RR)$, for every fixed $\alpha$, 
 $\TT(\alpha, \cdot) * \psi \in \spsmooth \cap L^p(\RR)$.
Therefore, we can take $\psi^\star \in \spsmooth \cap L^p(\RR)$ such that 
$\psi^\star = \psi$ a.e. and $\TT(\alpha,\cdot) * \psi^\star = \TT(\alpha,\cdot) * \psi$.
In the same way, we can take $g^\star \in \spsmooth \cap L^1(\RR)$ such that $g^\star = g$ almost everywhere and $\TT(\alpha,\cdot) * g^\star = \TT(\alpha,\cdot) * g$.
Therefore, this case reduces to show that if $\spX = L^1$ and $\spZ = L^p$ then $\spA = \spTD$. This coincides with case 3e.

\subhead{Step 4}{Class $\spY(\YY^{m+1})$ of $\rid_\psi f(\bu, \alpha,\beta)$}
The last column $(\spY)$ is obtained by applying $\spY(\YY^{m+1}) = \spX(\Sph^{m-1}) \ttprod \spA(\Half)$.
Recall that for $\Sph^{m-1}$, as it is compact, $\spcomp = \spSch = \spM = \spsmooth$ and $\spED = \spCD = \spTD = \spLD{p} = \spSD$.
Therefore, we have $\spY$ as in the last column of \reftab{tab:weakridge}.

\section{Proof of \refthm{thm:eq.ac}} \label{app:proof.eq.ac}

Let $(\psi, \eta) \in \spSch(\RR) \times \spTD(\RR)$.
Assume that $\widehat{\eta}$ is singular at $0$. That is, there exists $k \in \NN_0$ such that
\begin{align}
\widehat{\eta}(\zeta) = \sum_{j=0}^k c_j \delta^{(j)}(\zeta), \quad \zeta \in \{ 0 \}.
\end{align}
Assume there exists a neighborhood $\neighbour$ of $0$ such that $\widehat{\eta} \in C^0(\neighbour \setminus \{ 0 \})$.
Note that the continuity implies local integrability.
We show that $\psi$ and $\eta$ are admissible
if and only if there exists $u \in \spM(\RR)$ such that
\begin{gather}
\bp^m u = \overline{\refl{\psi}} * \left( \eta - \sum_{j=0}^k c_j z^j \right), \quad \mbox{ and } \quad 
\int_{\RR \setminus \{ 0 \}} \widehat{u}(\zeta) \dd \zeta \neq 0.
\end{gather}
Recall that the Fourier transform $\spM(\RR) \to \spCD(\RR)$ is bijective. Thus, the action of $\widehat{u}$ on the indicator function $\ind_{\RR \setminus \{ 0 \}}(\zeta)$ is always finite.

\subhead{Sufficiency}{}
On $\neighbour \setminus \{ 0 \}$, $\widehat{\eta}$ coincides with a function. Thus the product $\overline{\widehat{\psi}(\zeta)}\widehat{\eta}(\zeta)|\zeta|^{-m}$ is defined in the sense of ordinary functions, and coincides with $\widehat{u}(\zeta)$.
On $\RR \setminus \neighbour$, $|\zeta|^{-m}$ is in $\spM(\RR \setminus \neighbour)$. Thus, the product $\overline{\widehat{\psi}(\zeta)}\widehat{\eta}(\zeta)|\zeta|^{-m}$ is defined in the sense of distributions, which is associative because it contains at most one tempered distribution ($\spSch \cdot \spTD \cdot \spM $), and reduces to $\widehat{u}(\zeta)$.
Therefore, 
\begin{align}
\frac{K_{\psi,\eta}}{(2 \pi)^{m-1}} = \left( \int_{\neighbour \setminus \{ 0 \} }  + \int_{\RR \setminus \neighbour } \right) \widehat{u}(\zeta) \dd \zeta,
\end{align}
which is finite by assumption.

\subhead{Necessity}{}
Write $\neighbour_0 := \neighbour \cap [-1,1]$ and $\neighbour_1 := \RR \setminus \neighbour_0$.
By the assumption that 
$\int_{\neighbour_0  \setminus \{ 0 \}} \overline{\widehat{\psi}(\zeta)}\widehat{\eta}(\zeta) |\zeta|^{-m} \dd \zeta $
is absolutely convergent and $\widehat{\eta}$ is continuous in $\neighbour_0  \setminus \{ 0 \}$, there exists $\eps > 0$ such that
\begin{align}
\Big| \overline{\widehat{\psi}(\zeta)}\widehat{\eta}(\zeta) \Big| \lesssim |\zeta|^{m-1 + \eps}, \quad \zeta \in \neighbour_0  \setminus \{ 0 \}.
\end{align}
Therefore, there exists $v_0 \in L^1( \RR) \cap C^0(\RR \setminus \{ 0 \})$ such that its restriction to $\neighbour_0 \setminus \{ 0 \}$ coincides with
\begin{align}
\overline{\widehat{\psi}(\zeta)}\widehat{\eta}(\zeta) = |\zeta|^{m} v_0(\zeta), \quad \zeta \in \neighbour_0 \setminus \{ 0 \}.
\end{align}
By integrability and continuity, $v_0 \in L^\infty(\RR)$. In particular, both $\lim_{\zeta \to +0} v_0(\zeta)$ and $\lim_{\zeta \to -0} v_0(\zeta)$ are finite.

However, in $\neighbour_1$, $|\zeta|^{-m} \in \spM(\neighbour_1)$. By the construction, $\overline{\widehat{\psi}}\cdot\widehat{\eta} \in \spCD(\RR)$.
Thus, there exists $v_1 \in \spCD(\RR)$ such that
\begin{align}
\overline{\widehat{\psi}(\zeta)} \widehat{\eta}(\zeta) = |\zeta|^{m}  v_1(\zeta), \quad \zeta \in \neighbour_1.
\end{align}
where the equality is in the sense of distribution.

Let
\begin{align}
v := v_0 \cdot \ind_{\neighbour_0} + v_1 \cdot \ind_{\neighbour_1}.
\end{align}
Clearly, $v \in \spCD(\RR)$ because $v_0 \cdot \ind_{\neighbour_0} \in \spED(\RR)$ and $v_1 \cdot \ind_{\neighbour_1} \in \spCD(\RR)$.
Therefore, there exists $u \in \spM(\RR)$ such that $\widehat{u} = v$ and
\begin{align}
\overline{\widehat{\psi}(\zeta)} \widehat{\eta}(\zeta) = |\zeta|^{m}  \widehat{u}(\zeta), \quad \zeta \in \RR \setminus \{ 0 \}.
\end{align}
By the admissibility condition,
\begin{align}
\int_{\RR \setminus \{ 0\}} \widehat{u}(\zeta) \dd \zeta = 
\int_{\neighbour_0 \setminus \{ 0\}} v_0(\zeta) \dd \zeta + \int_{\neighbour_1} v_1(\zeta) \dd \zeta \neq 0.
\end{align}
In consideration of the singularity at $0$, we have
\begin{align}
 \overline{\widehat{\psi}(\zeta)} \left( \widehat{\eta}(\zeta) - \sum_{j=0}^k c_j \delta^{(j)}(\zeta) \right) = |\zeta|^{m}  \widehat{u}(\zeta), \quad \zeta \in \RR.
\end{align}
By taking the Fourier inversion in the sense of distributions, 
\begin{align}
\left[ \overline{\widetilde{\psi}} * \left( \eta - \sum_{j=0}^k c_j z^{j} \right) \right] (z)
&= \bp^{m} u(z) , \quad z \in \RR.
\end{align}

\section{Proof of \refthm{thm:formula}} \label{app:proof.formula}
Let $f \in L^1(\RR^m)$ satisfy $\widehat{f} \in L^1(\RR^m)$ and $(\psi,\eta) \in \spSch(\RR) \times \spLizD(\RR)$ be admissible.
\rsho{For simplicity, we rescale $\psi$ to satisfy $K_{\psi,\eta} = 1$.}
Write
\begin{align}
I(\mathbf{x}; \eps, \delta) := 
\int_{\Sph^{m-1}} 
\int_\eps^\delta \lebesgue \rid_\psi f \left( \bu, \alpha, \bu \cdot \xx - \alpha \pz \right) \eta(\pz) \frac{\dd\pz\dd \alpha \dd \bu}{\alpha^{\rsho{m}}}.
\end{align}
We show that
\begin{align}
\lim_{\substack{\delta \to \infty \\ \eps \to 0}}
 I(\mathbf{x}; \eps, \delta) = f(\mathbf{x}), \quad \mbox{a.e. } \mathbf{x} \in \mathbb{R}^m.
\end{align}
and the equality holds at every continuous point of $f$.

By using the Fourier slice theorem \rsho{in the sense of distribution}, 
\begin{align}
\lebesgue \rid_\psi f \left( \bu, \alpha, \beta - \alpha \pz \right) \eta(\pz) \dd \pz
&= \frac{1}{2 \pi} \lebesgue \widehat{f}(\fp \bu) \overline{\widehat{\psi}(\alpha \fp)} \widehat{\eta}(\alpha \fp) e^{i \fp \beta} \dd \fp \\
&= \frac{1}{2 \pi} \int_{\RR \setminus \{ 0 \}} \widehat{f}(\fp \bu) \rsho{\widehat{u}(\alpha \fp) |\alpha \omega|^m} e^{i \fp \beta} \dd \fp, \label{eq:riddz} 
\end{align}
\rsho{where $|\zeta|^m \widehat{u}(\zeta) := \overline{\widehat{\psi}(\zeta)} \widehat{\eta}(\zeta) \ (\zeta \neq 0)$ is defined as in \refthm{thm:eq.ac}.}

Then,
\begin{align}
\int_\eps^\delta \refeq{eq:riddz} \frac{\dd \alpha}{\alpha^{\rsho{m}}} 
&= \frac{1}{2 \pi} \int_{\RR \setminus \{ 0 \}} \int_\eps^\delta \widehat{f}(\fp \bu) \rsho{\widehat{u}(\alpha \fp) |\omega|^{m} } e^{i \fp \beta} \dd \alpha \dd \fp \\
&= \rsho{ \frac{1}{2 \pi} \lebesgue \int_{\eps \leq \frac{\fz}{\fp} \leq \delta } \rsho{\widehat{u}(\fz)} \widehat{f}(\fp \bu) e^{i \fp \beta} |\fp|^{m-1} \dd \fz \dd \fp } \\
&= \frac{1}{2 \pi} \int_0^\infty \int_{r \eps \leq |\fz| \leq r \delta} \rsho{\widehat{u}(\fz)} \widehat{f}(\sign(\fz) r \bu) \exp(\sign(\fz) i r \beta) r^{m-1} \dd \fz \dd r, \label{eq:riddzda}
\end{align}
\rsho{where the second equation follows by changing the variable $\fz \gets \alpha \omega$ with $\alpha^{m-1} \dd \alpha = |\fp|^{m-1} |\fz|^{-m} \dd \fz$;
the third equation follows by changing the variable $r \gets |\fp|$ with $\sign \, \fp = \sign \, \fz$.}
In the following, we substitute $\beta \gets \bu \cdot \xx$. Observe that in $\int_{\Sph^{m-1}} \dd \bu$, 
\begin{align}
\int_{\Sph^{m-1}} \widehat{f}(- r \bu) \exp(-ir\bu \cdot \xx)\dd \bu = \int_{\Sph^{m-1}} \widehat{f}( r \bu) \exp(ir\bu \cdot \xx)\dd \bu;
\end{align}
hence, we can omit $\sign \, \fz$.

Then, by substituting $\beta \gets \bu \cdot \xx$ and changing the variable $\xxi \gets r \bu$,
\begin{align}
I(\mathbf{x}; \eps, \delta) &=
\int_{\Sph^{m-1}} \eqref{eq:riddzda} \, \dd \bu \\
&= \frac{1}{2 \pi} \int_{\Sph^{m-1}} \int_0^\infty \left[ \int_{r \eps \leq |\fz| \leq r \delta} \rsho{\widehat{u}(\fz)} \dd \zeta \right] 
\widehat{f}( r \bu) e^{ i r \bu \cdot \xx} r^{m-1}  \dd r \dd \bu, \\
&= \frac{1}{2 \pi} \int_{\RR^m} \left[ \int_{\| \xxi \| \eps \leq |\fz| \leq \| \xxi \| \delta} \rsho{\widehat{u}(\fz)} \dd \zeta \right] 
 \widehat{f}(\xxi) e^{i \xxi \cdot \xx} \dd \zeta \dd \xxi.
\end{align}
\rsho{Recall that $\widehat{u} \in \spCD(\RR)$; thus, its action is continuous. That is, the limits and the integral commute.}

\rsho{Therefore,
\begin{align}
\drid_\eta \rid_\psi f(\xx)
&=
\lim_{\substack{\delta \to \infty \\ \eps \to 0}} I(\mathbf{x}; \eps, \delta) \\
&= \frac{1}{(2 \pi)}
\int_{\RR^m} \left[ \int_{\RR \setminus \{ 0 \} }\widehat{u}(\zeta) \dd \zeta \right] \widehat{f}(\xxi) e^{i \xxi \cdot \xx} \dd \xxi \\
&= \frac{1}{(2 \pi)^{m}}
\int_{\RR^m} \widehat{f}(\xxi) e^{i \xxi \cdot \xx} \dd \xxi \\
&= f(\xx), \quad \mbox{a.e. } \xx \in \RR^m
\end{align}
where the last equation follows by the Fourier inversion formula,
a consequence of which the equality holds at $\xx_0$
 if $f$ is continuous at $\xx_0$.}

\section{Proof of \refthm{thm:formula.radon}} \label{app:proof.formula.radon}
Let $f \in L^1(\RR^m)$ and $(\psi,\eta) \in \spSch(\RR) \times \spTD(\RR)$. Assume that there exists $u \in \spsmooth \cap L^1(\RR)$ that is real-valued, $\bp^m u = \overline{\widetilde{\psi}} * \eta$ and $\int_\RR \widehat{u}(\zeta) \dd \zeta = -1$.
Write
\begin{align}
I(\mathbf{x}; \eps, \delta) := 
\int_{\Sph^{m-1}} 
\int_\eps^\delta \lebesgue \rid_\psi f \left( \bu, \alpha, \bu \cdot \xx - \alpha \pz \right) \eta(\pz) \frac{\dd\pz\dd \alpha \dd \bu}{\alpha^{m}}.
\end{align}
We show that
\begin{align}
\lim_{\substack{\delta \to \infty \\ \eps \to 0}}
 I(\mathbf{x}; \eps, \delta) = \drad \bp^{m-1} \rad (\mathbf{x}), \quad \mbox{a.e. } \mathbf{x} \in \mathbb{R}^m.
\end{align}

In the following we write $(\cdot)_\alpha(p) = (\cdot)(p/\alpha)\rsho{/\alpha}$.
\rsho{By using the convolution form,}
\begin{align}
\lebesgue \rid_\psi f \left( \bu, \alpha, \beta - \alpha \pz \right) \eta(\pz) \dd \pz
&= \left[ \rad f(\bu, \cdot) * \left( \overline{\refl{\psi}} * \eta \right)_\alpha \right] (\beta) \\
&= \left[ \rad f(\bu, \cdot) * \left( \bp^m u \right)_\alpha \right] (\beta). \label{eq:convdz}
\end{align}

Observe that
\begin{align}
\int_\eps^\delta ( \bp^m u)_\alpha(p) \frac{\dd \alpha}{\alpha^{m}} 
&= \bp^{m-1} \left[ \int_\eps^\delta (\bp u) \left( \frac{p}{\alpha} \right) \frac{\dd \alpha}{\alpha^{2}} \right]\\
&= \bp^{m-1} \left[ \frac{1}{p} \int_{p/\delta}^{p/\eps} (\bp u) (\pz) \dd \pz \right]\\
&= \bp^{m-1} \left[ \frac{1}{p} \hil u \left( \frac{p}{\eps} \right) - \frac{1}{p} \hil u \left( \frac{p}{\delta} \right) \right]\\
&= \bp^{m-1} [ k_\eps(p) - k_\delta(p) ], \label{eq:bpuda}
 \end{align}
where the first equality follows by repeatedly applying $(\bp u)_\alpha = \alpha \bp (u_\alpha)$;
the second equality follows by substituting $z \gets p/\alpha$;
the fourth equality follows by defining
\begin{align}
k ( z ) := \frac{1}{z} \hil u \left( z \right) \quad \mbox{ and } \quad  k_\gamma(p) := \frac{1}{\gamma} k \left( \frac{p}{\gamma} \right) \quad \mbox{ for } \gamma = \eps, \delta.
\end{align} 
Therefore, we have
\begin{align}
\int_\eps^\delta \refeq{eq:convdz} \frac{\dd \alpha}{\alpha^{m}}
&= \left[\rad f(\bu, \cdot) * \refeq{eq:bpuda} \right](\beta)  \\
&= \left[ \bp^{m-1} \rad f(\bu, \cdot) * ( k_\eps - k_\delta ) \right] (\beta).
\end{align}

\rsho{We show that $k \in L^1 \cap L^\infty(\RR)$ and $\int_\RR k(z) \dd z = 1$.
To begin with, $k \in L^1(\RR)$ because there exist $s,t > 0$ such that
\begin{align}
&| k(z) | \lesssim |z|^{-1+s}, \quad \mbox{ as } |z| \to 0 \\
&| k(z) | \lesssim |z|^{-1-t}, \quad \mbox{ as } |z| \to \infty.
\end{align}
The first claim holds because $u$ is real-valued and thus $\widehat{u}$ is odd, then
\begin{align}
\hil u(0)
&= \int_\RR \sign \zeta \cdot \widehat{u}(\zeta) \dd \zeta \\
&= \int_{(-\infty,0]} \widehat{u}(\zeta) \dd \zeta- \int_{(0,\infty)} \widehat{u}(\zeta) \dd \zeta \\
&=0.
\end{align}
The second claim holds because $u \in L^1(\RR)$ and thus $u$ as well as $\hil u$ decays at infinity.
Then, by the continuity and the integrability of $k$, it is bounded.
By the assumption that $\int_\RR \widehat{u}(\zeta) \dd \zeta = -1$,
\begin{align}
\int_\RR k(z) \dd z
&= -\int_\RR \frac{\hil u(z)}{0 - z} \dd z \\
&= -u(0) \\
&= 1.
\end{align}}

Write
\begin{align}
J(\bu,p) := \bp^{m-1} \rad f(\bu, p).
\end{align}
Because $k \in L^1(\RR)$ and $\lebesgue k(\pz) \dd\pz= 1$,
$k_\eps$ is an approximation of the identity \cite[III, Th.2]{Stein.singular}. Then,
\begin{align}
\lim_{\eps \to 0} J(\bu,\cdot) * k_\eps(p) = J(\bu,p), \quad \mbox{ a.e. } (\bu,p) \in \Sph^{m-1} \times \RR.
\end{align}
However, as $k \in L^\infty(\RR)$,
\begin{gather}
\| J * k_\delta \|_{L^\infty(\Sph^{m-1} \times \RR)} \leq \delta^{-1} \| J \|_{L^1(\Sph^{m-1} \times \RR )} \| k \|_{L^\infty(\RR)},
\intertext{and thus,}
\lim_{\delta \to \infty} J(\bu,\cdot) * k_\delta (p) = 0, \quad \mbox{ a.e. } (\bu,p) \in \Sph^{m-1} \times \RR.
\end{gather}

Because it is an approximation to the identity, $J * k_\gamma \in  L^1(\Sph^{m-1} \times \RR)$ for $0 \leq \gamma$.
Hence, there exists a maximal function $M(\bu, p)$ \cite[III, Th.2]{Stein.singular}
such that
\begin{align}
\sup_{0 < \eps} |(J(\bu,\cdot) * \vk_\eps) (p)| \lesssim M(\bu, p).
\end{align}
\rsho{Therefore, $|J(\bu,\cdot) * (\vk_\eps - \vk_\delta)(\bu \cdot \xx)|$ is uniformly integrable \cite[Ex.~4.15.4]{Brezis.new} on $\Sph^{m-1}$.
That is, if $\Omega \subset \Sph^{m-1}$ satisfies $\int_\Omega \dd \bu \leq A$ then 
\begin{align}
\int_\Omega | J(\bu, \cdot ) * k_\gamma |(\bu \cdot \xx ) \dd \bu \lesssim A \sup_{\bu,p}|M(\bu,p)|, \quad \forall \gamma \geq 0.
\end{align}
Thus, by the Vitali convergence theorem, we have}
\begin{align}
\drid_\eta \rid_\psi f(\xx)
&= \lim_{\substack{\delta \to \infty \\ \eps \to 0}} \int_{\Sph^{m-1}} [J(\bu,\cdot) * (\vk_\eps - \vk_\delta)](\bu \cdot \xx) \dd \bu \\
&\mathop{=} \int_{\Sph^{m-1}} J(\bu, \bu \cdot \xx) \dd \bu, \quad \mathrm{a.e.} \ \xx \in \RR^m\\
&= \drad \bp^{m-1} \rad f(\xx).
\end{align}

\section{Proof of \refthm{thm:formula.L2}} \label{app:proof.formula.L2}
Let $f \in L^2(\RR^m)$ and $(\psi,\eta)$ be admissible with $K_{\psi,\eta}=1$.
Assume without loss of generality that $(\psi,\psi)$ and $(\eta,\eta)$ are self-admissible respectively.
Write
\begin{align}
I[f; (\eps,\delta)](\xx)
&:= 
\int_{\Sph^{m-1}} 
\int_\eps^\delta \lebesgue \rid_\psi f \left( \bu, \alpha, \bu \cdot \xx - \alpha \pz \right) \eta(\pz) \frac{\dd\pz\dd \alpha \dd \bu}{\alpha^{m}}.
\end{align}
In the following we write $\Omega[\eps,\delta] := \Sph^{m-1} \times [\RR_+ \setminus (\eps,\delta)] \times \RR \subset \YY^{m+1}$.
We show that
\begin{align}
\lim_{\substack{\delta \to \infty \\ \eps \to 0}} \big\| f - I[ f; (\eps, \delta) ] \big\|_2 =0.
\end{align}
Observe that 
\begin{align}
\big \|f - I[f; (\eps, \delta) ] \big \|_2
&= \sup_{\|g\|_2 = 1} \big| \ip{ f -I[f; (\eps, \delta) ], g } \big| \\
&= \sup_{\|g\|_2 = 1} \big| \ip{ \rid_\psi f, \rid_\eta g }_{\Omega[\eps,\delta] } \big| \\
&\leq \sup_{\|g\|_2 = 1} \big| \rid_\psi f \big|_{L^2(\Omega[\eps,\delta])}  \big\| \rid_\eta g \big\|_{L^2(\YY^{m+1})} \\
&= \sup_{\|g\|_2 = 1} \big| \rid_\psi f \big|_{L^2(\Omega[\eps,\delta])} \big\| g \big\|_2 \\
& \to 0 \cdot 1, \quad \mbox{ as } \eps \to 0, \delta \to \infty
\end{align}
where the third inequality follows by the Schwartz inequality; the last limit follows by $\| \rid_\psi f \|_{L^2(\Omega[\eps,\delta])}$, which
shrinks as the domain $\Omega[\eps,\delta]$ tends to $\emptyset$.

\section{Proofs of \refeg{eg:sig} and \refeg{eg:adm.sig}} \label{app:proof.sig}

Let $\sig(z) := (1+e^{-z})^{-1}$. Obviously $\sig(z) \in \spsmooth(\RR)$.

\subhead{Step 0}{Derivatives of $\sig(z)$.} %{(Some Observations)}
For every $k \in \NN$, 
\begin{align}
\sig^{(k)}(z) &= S_k(\sigma(z)),
\end{align}
where $S_k(z)$ is a polynomial defined by 
\begin{align}
S_k(z) := 
\left\{
\begin{array}{ll}
z (1-z) & k=1 \\
S_{k-1}'(z) S_1(z) & k>1,
\end{array}
\right.
\end{align}
which is justified by induction on $k$.

\subhead{Step 1}{$\sig, \tanh \in \spM(\RR)$.}
Recall that
$|\sigma(z)| \leq 1$.
Hence, for every $k \in \NN$,
\begin{align}
\big| \sigma^{(k)}(z) \big| = |S_k(\sig(z))| \leq \max_{z \in [0,1]} |S_k(z)| < \infty.
\end{align}
Therefore, every $k \in \NN_0$, $\sig^{(k)}(z)$ is bounded, which concludes $\sig(z) \in \spM(\RR)$.

Hence, immediately $\tanh \in \spM(\RR)$ because
\begin{align}
\tanh(z) = 2 \sig(2 z) -1.
\end{align}

\subhead{Step 2}{$\sig^{(k)} \in \spSch(\RR), \ k \in \NN$.}
Observe that
\begin{align}
\sig'(z) = (e^{\,z/2}+e^{-z/2})^{-2}.
\end{align}
Hence, $\sig'(z)$ decays faster than any polynomial, which means $\sup_z |z^\ell \sig'(z)| < \infty$ for any $\ell \in \NN_0$.
Then, for every $k, \ell \in \NN_0$,
\begin{align}
\sup_z \big| z^\ell \sig^{(k+1)}(z) \big| 
= \sup_z \big| z^\ell S_{k+1}(\sig(z)) \big| 
\leq \max_z |z^\ell \sig'(z)| \cdot \max_z |S'_{k}(\sig(z))| < \infty,
\end{align}
which concludes $\sig' \in \spSch(\RR)$. Therefore, $\sigma^{(k)} \in \spSch(\RR)$ for every $k \in \NN$.

\subhead{Step 3}{$\sps \in \spM(\RR)$.}
Observe that
\begin{align}
\sps(z) = \int_0^z \sig(w) \dd w. \label{eq:proof.sig.sps}
\end{align}
Hence, it is already known that $[\sps]^{(k)} = \sig^{(k-1)} \in \spM(\RR)$ for every $k \in \NN$.
We show that $\sps(z)$ has at most polynomial growth.
Write
\begin{align}
\rho(z) := \sps(z) - z_+.
\end{align}
Then $\rho(z)$ attains at $0$ its maximum $\max_z \rho(z) =\log 2$, because
$\rho'(z) < 0$ when $z >0$ and $\rho'(z) >0$ when $z < 0$.
Therefore,
\begin{align}
\big| \sps(z) \big|
\leq |\rho(z)| + |z_+|
\leq \log 2 + |z|,
\end{align}
which concludes $\sps(z) \in \spM$.

\subhead{Step 4}{$\eta = \sig^{(k)}$ is admissible with $\psi = \bp^m \gauss$ when $k \in \NN$ is positive and odd.}
Recall that $\eta = \sig^{(k)} \in \spSch(\RR)$. Hence, $\dual{\widehat{\eta},\widehat{\psi_0}}=\dual{\eta,\psi_0}$.
Observe that if $k$ is odd, then $\sig^{(k)}$ is an odd function and thus $\dual{\eta,\psi_0} = 0$.
However, if $k$ is even, then $\sig^{(k)}$ is an even function and thus $\dual{\eta,\psi_0} \neq 0$.

\subhead{Step 5}{$\sig$ and $\sps$ cannot be admissible with $\psi = \bp^m \gauss$.}
This follows by \refthm{thm:eq.ac}, because both
\begin{align}
\int_\RR \left( \overline{\widetilde{\gauss}} * \sig \right) (z) \dd z \quad \mbox{ and } \quad  \int_\RR \left( \overline{\widetilde{\gauss}} * \sps\right)  (z) \dd z,
\end{align}
diverge.

\subhead{Step 6}{$\sig$ and $\sps$ are admissible with $\psi = \bp^m \gauss'$ and $\psi = \bp^m \gauss''$, respectively.}
Observe that both
\begin{align}
u_0 := \overline{\widetilde{\gauss'}} * \sig = \overline{\widetilde{\gauss}} * \sig' \quad \mbox{ and } \quad  
u_{-1} := \overline{\widetilde{\gauss''}} * \sps = \overline{\widetilde{\gauss}} * \sig',
\end{align}
belong to $\spSch(\RR)$. Hence, $u_0$ and $u_{-1}$ satisfy the sufficient condition in \refthm{thm:eq.ac}.


\begin{thebibliography}{10}
\expandafter\ifx\csname url\endcsname\relax
  \def\url#1{\texttt{#1}}\fi
\expandafter\ifx\csname urlprefix\endcsname\relax\def\urlprefix{URL }\fi
\expandafter\ifx\csname href\endcsname\relax
  \def\href#1#2{#2} \def\path#1{#1}\fi

\bibitem{Murata1996}
N.~Murata,
  \href{http://www.sciencedirect.com/science/article/pii/0893608096000007}{{An
  Integral representation of functions using three-layered betworks and their
  approximation bounds}}, Neural Networks 9~(6) (1996) 947--956.
\newblock \href {http://dx.doi.org/10.1016/0893-6080(96)00000-7}
  {\path{doi:10.1016/0893-6080(96)00000-7}}.
\newline\urlprefix\url{http://www.sciencedirect.com/science/article/pii/0893608096000007}

\bibitem{Sonoda2014}
S.~Sonoda, N.~Murata, {Sampling hidden parameters from oracle distribution},
  in: 24th Int. Conf. Artif. Neural Networks, Vol. 8681, Springer International
  Publishing, Hamburg, Germany, 2014, pp. 539--546.
\newblock \href {http://dx.doi.org/10.1007/978-3-319-11179-7{\_}68}
  {\path{doi:10.1007/978-3-319-11179-7{\_}68}}.

\bibitem{ReLU.Glorot}
X.~Glorot, A.~Bordes, Y.~Bengio,
  \href{http://jmlr.org/proceedings/papers/v15/glorot11a/glorot11a.pdf}{{Deep
  sparse rectifier neural networks}}, in: 14th Int. Conf. Artif. Intell. Stat.
  (AISTATS 2011), Vol.~15, JMLR W\&CP, Fort Lauderdale, FL, USA, 2011, pp.
  315--323.
\newline\urlprefix\url{http://jmlr.org/proceedings/papers/v15/glorot11a/glorot11a.pdf}

\bibitem{maxout}
I.~Goodfellow, D.~Warde-Farley, M.~Mirza, A.~Courville, Y.~Bengio,
  \href{http://jmlr.csail.mit.edu/proceedings/papers/v28/goodfellow13.pdf}{{Maxout
  networks}}, in: 30th Int. Conf. Mach. Learn., Vol.~28, JMLR W\&CP, 2013, pp.
  1319--1327.
\newline\urlprefix\url{http://jmlr.csail.mit.edu/proceedings/papers/v28/goodfellow13.pdf}

\bibitem{ReLU.Hinton}
G.~E. Dahl, T.~N. Sainath, G.~E. Hinton, {Improving deep neural networks for
  LVCSR using rectified linear units and dropout}, in: Acoust. Speech Signal
  Process. (ICASSP), 2013 IEEE Int. Conf., IEEE, 2013, pp. 8609--8613.
\newblock \href {http://dx.doi.org/10.1109/ICASSP.2013.6639346}
  {\path{doi:10.1109/ICASSP.2013.6639346}}.

\bibitem{ReLU.Leaky}
A.~L. Maas, A.~Y. Hannun, A.~Y. Ng,
  \href{https://sites.google.com/site/deeplearningicml2013/relu_hybrid_icml2013_final.pdf}{{Rectifier
  nonlinearities improve neural network acoustic models}}, in: ICML 2013 Work.
  Deep Learn. Audio, Speech, Lang. Process., Atlanta, 2013.
\newline\urlprefix\url{https://sites.google.com/site/deeplearningicml2013/relu_hybrid_icml2013_final.pdf}

\bibitem{Jarrett2009}
K.~Jarrett, K.~Kavukcuoglu, M.~Ranzato, Y.~LeCun, {What is the best multi-stage
  architecture for object recognition?}, in: Comput. Vision, 2009 IEEE 12th
  Int. Conf., Kyoto, 2009, pp. 2146--2153.
\newblock \href {http://dx.doi.org/10.1109/ICCV.2009.5459469}
  {\path{doi:10.1109/ICCV.2009.5459469}}.

\bibitem{Krizhevsky2012}
A.~Krizhevsky, I.~Sutskever, G.~E. Hinton,
  \href{http://papers.nips.cc/paper/4824-imagenet-classification-with-deep-convolutional-neural-networks.pdf}{{ImageNet
  classification with deep convolutional neural networks}}, in: F.~Pereira,
  C.~J.~C. Burges, L.~Bottou, K.~Q. Weinberger (Eds.), Adv. Neural Inf.
  Process. Syst. 25, Curran Associates, Inc., 2012, pp. 1097--1105.
\newline\urlprefix\url{http://papers.nips.cc/paper/4824-imagenet-classification-with-deep-convolutional-neural-networks.pdf}

\bibitem{ReLU.Zeiler}
M.~D. Zeiler, M.~Ranzato, R.~Monga, M.~Z. Mao, K.~Yang, Q.~{Viet Le},
  P.~Nguyen, A.~W. Senior, V.~Vanhoucke, J.~Dean, G.~E. Hinton, {On rectified
  linear units for speech processing}, in: Acoust. Speech Signal Process.
  (ICASSP), 2013 IEEE Int. Conf., IEEE, Vancouver, BC, 2013, pp. 3517--3521.
\newblock \href {http://dx.doi.org/10.1109/ICASSP.2013.6638312}
  {\path{doi:10.1109/ICASSP.2013.6638312}}.

\bibitem{Mhaskar.Micchelli}
H.~Mhaskar, C.~A. Micchelli,
  \href{http://www.sciencedirect.com/science/article/pii/019688589290016P}{{Approximation
  by superposition of sigmoidal and radial basis functions}}, Adv. Appl. Math.
  13~(3) (1992) 350--373.
\newblock \href {http://dx.doi.org/10.1016/0196-8858(92)90016-P}
  {\path{doi:10.1016/0196-8858(92)90016-P}}.
\newline\urlprefix\url{http://www.sciencedirect.com/science/article/pii/019688589290016P}

\bibitem{Leshno1993}
M.~Leshno, V.~Y. Lin, A.~Pinkus, S.~Schocken,
  \href{http://www.sciencedirect.com/science/article/pii/S0893608005801315}{{Multilayer
  feedforward networks with a nonpolynomial activation function can approximate
  any function}}, Neural Networks 6~(6) (1993) 861--867.
\newblock \href {http://dx.doi.org/10.1016/S0893-6080(05)80131-5}
  {\path{doi:10.1016/S0893-6080(05)80131-5}}.
\newline\urlprefix\url{http://www.sciencedirect.com/science/article/pii/S0893608005801315}

\bibitem{Pinkus.survey}
A.~Pinkus, {Approximation theory of the MLP model in neural networks}, Acta
  Numer. 8 (1999) 143--195.
\newblock \href {http://dx.doi.org/10.1017/S0962492900002919}
  {\path{doi:10.1017/S0962492900002919}}.

\bibitem{Ito.Radon}
Y.~Ito,
  \href{http://www.sciencedirect.com/science/article/pii/089360809190075G}{{Representation
  of functions by superpositions of a step or sigmoid function and their
  applications to neural network theory}}, Neural Networks 4~(3) (1991)
  385--394.
\newblock \href {http://dx.doi.org/10.1016/0893-6080(91)90075-G}
  {\path{doi:10.1016/0893-6080(91)90075-G}}.
\newline\urlprefix\url{http://www.sciencedirect.com/science/article/pii/089360809190075G}

\bibitem{Kainen2007}
P.~C. Kainen, V.~K\r{u}rkov\'{a}, A.~Vogt,
  \href{http://linkinghub.elsevier.com/retrieve/pii/S0021904507000081}{{A
  Sobolev-type upper bound for rates of approximation by linear combinations of
  Heaviside plane waves}}, J. Approx. Theory 147~(1) (2007) 1--10.
\newblock \href {http://dx.doi.org/10.1016/j.jat.2006.12.009}
  {\path{doi:10.1016/j.jat.2006.12.009}}.
\newline\urlprefix\url{http://linkinghub.elsevier.com/retrieve/pii/S0021904507000081}

\bibitem{Kurkova2012}
V.~K\r{u}rkov\'{a},
  \href{http://www.sciencedirect.com/science/article/pii/S0893608012001311}{{Complexity
  estimates based on integral transforms induced by computational units}},
  Neural Netw. 33 (2012) 160--7.
\newblock \href {http://dx.doi.org/10.1016/j.neunet.2012.05.002}
  {\path{doi:10.1016/j.neunet.2012.05.002}}.
\newline\urlprefix\url{http://www.sciencedirect.com/science/article/pii/S0893608012001311}

\bibitem{Carroll.Dickinson}
S.~M. Carroll, B.~W. Dickinson, {Construction of neural nets using the Radon
  transform}, in: Int. Jt. Conf. Neural Networks, 1989. IJCNN., Vol.~1, IEEE,
  1989, pp. 607--611.
\newblock \href {http://dx.doi.org/10.1109/IJCNN.1989.118639}
  {\path{doi:10.1109/IJCNN.1989.118639}}.

\bibitem{Helgason.new}
S.~Helgason, {Integral Geometry and Radon Transforms}, Springer-Verlag New
  York, 2011.
\newblock \href {http://dx.doi.org/10.1007/978-1-4419-6055-9}
  {\path{doi:10.1007/978-1-4419-6055-9}}.

\bibitem{Irie.Miyake}
B.~Irie, S.~Miyake, {Capabilities of three-layered perceptrons}, in: IEEE Int.
  Conf. Neural Networks, IEEE, 1988, pp. 641--648.
\newblock \href {http://dx.doi.org/10.1109/ICNN.1988.23901}
  {\path{doi:10.1109/ICNN.1988.23901}}.

\bibitem{Funahashi1989}
K.-I. Funahashi,
  \href{http://www.sciencedirect.com/science/article/pii/0893608089900038}{{On
  the approximate realization of continuous mappings by neural networks}},
  Neural Networks 2~(3) (1989) 183--192.
\newblock \href {http://dx.doi.org/10.1016/0893-6080(89)90003-8}
  {\path{doi:10.1016/0893-6080(89)90003-8}}.
\newline\urlprefix\url{http://www.sciencedirect.com/science/article/pii/0893608089900038}

\bibitem{Jones1992}
L.~K. Jones, {A simple lemma on greedy approximation in Hilbert space and
  convergence rates for projection pursuit regression and neural network
  training}, Ann. Stat. 20~(1) (1992) 608--613.
\newblock \href {http://dx.doi.org/10.1214/aos/1176348546}
  {\path{doi:10.1214/aos/1176348546}}.

\bibitem{Barron1993}
A.~R. Barron, {Universal approximation bounds for superpositions of a sigmoidal
  function}, IEEE Trans. Inf. Theory 39~(3) (1993) 930--945.
\newblock \href {http://dx.doi.org/10.1109/18.256500}
  {\path{doi:10.1109/18.256500}}.

\bibitem{kainen.survey}
P.~C. Kainen, V.~K\r{u}rkov\'{a}, M.~Sanguineti, {Approximating multivariable
  functions by feedforward neural nets}, in: M.~Bianchini, M.~Maggini, L.~C.
  Jain (Eds.), Handb. Neural Inf. Process., Vol.~49 of Intelligent Systems
  Reference Library, Springer Berlin Heidelberg, 2013, pp. 143--181.
\newblock \href {http://dx.doi.org/10.1007/978-3-642-36657-4}
  {\path{doi:10.1007/978-3-642-36657-4}}.

\bibitem{Candes.HA}
E.~J. Cand\`{e}s,
  \href{http://www.sciencedirect.com/science/article/pii/S1063520398902482}{{Harmonic
  analysis of neural networks}}, Appl. Comput. Harmon. Anal. 6~(2) (1999)
  197--218.
\newblock \href {http://dx.doi.org/10.1006/acha.1998.0248}
  {\path{doi:10.1006/acha.1998.0248}}.
\newline\urlprefix\url{http://www.sciencedirect.com/science/article/pii/S1063520398902482}

\bibitem{Candes.PhD}
E.~J. Cand\`{e}s, {Ridgelets: theory and applications}, Ph.D. thesis, Standford
  University (1998).

\bibitem{Rubin.calderon}
B.~Rubin, \href{http://dx.doi.org/10.1007/BF02475988}{{The Calder\'{o}n
  reproducing formula, windowed X-ray transforms, and radon transforms in
  $L^p$-spaces}}, J. Fourier Anal. Appl. 4~(2) (1998) 175--197.
\newblock \href {http://dx.doi.org/10.1007/BF02475988}
  {\path{doi:10.1007/BF02475988}}.
\newline\urlprefix\url{http://dx.doi.org/10.1007/BF02475988}

\bibitem{Donoho.kplane}
D.~L. Donoho, {Tight frames of $k$-plane ridgelets and the problem of
  representing objects that are smooth away from $d$-dimensional singularities
  in $\mathbb{R}^n$}, Proc. Natl. Acad. Sci. United States Am. 96~(5) (1999)
  1828--1833.
\newblock \href {http://dx.doi.org/10.1073/pnas.96.5.1828}
  {\path{doi:10.1073/pnas.96.5.1828}}.

\bibitem{Donoho.ridgelet}
D.~L. Donoho,
  \href{http://www.sciencedirect.com/science/article/pii/S0021904501935683}{{Ridge
  functions and orthonormal ridgelets}}, J. Approx. Theory 111~(2) (2001)
  143--179.
\newblock \href {http://dx.doi.org/10.1006/jath.2001.3568}
  {\path{doi:10.1006/jath.2001.3568}}.
\newline\urlprefix\url{http://www.sciencedirect.com/science/article/pii/S0021904501935683}

\bibitem{Starck2010}
J.-L. Starck, F.~Murtagh, J.~M. Fadili,
  \href{http://www.cambridge.org/9780521119139}{{The ridgelet and curvelet
  transforms}}, in: Sparse Image Signal Process. Wavelets, Curvelets, Morphol.
  Divers., Cambridge University Press, 2010, pp. 89--118.
\newblock \href {http://dx.doi.org/10.1017/CBO9780511730344.006}
  {\path{doi:10.1017/CBO9780511730344.006}}.
\newline\urlprefix\url{http://www.cambridge.org/9780521119139}

\bibitem{Rubin.ridgelet}
B.~Rubin,
  \href{http://www.sciencedirect.com/science/article/pii/S1063520304000168}{{Convolution–backprojection
  method for the $k$-plane transform, and Calder\'{o}n's identity for ridgelet
  transforms}}, Appl. Comput. Harmon. Anal. 16~(3) (2004) 231--242.
\newblock \href {http://dx.doi.org/10.1016/j.acha.2004.03.003}
  {\path{doi:10.1016/j.acha.2004.03.003}}.
\newline\urlprefix\url{http://www.sciencedirect.com/science/article/pii/S1063520304000168}

\bibitem{Kostadinova2014}
S.~Kostadinova, S.~Pilipovi\'{c}, K.~Saneva, J.~Vindas, {The ridgelet transform
  of distributions}, Integr. Transform. Spec. Funct. 25~(5) (2014) 344--358.
\newblock \href {http://dx.doi.org/10.1080/10652469.2013.853057}
  {\path{doi:10.1080/10652469.2013.853057}}.

\bibitem{Kostadinova2015}
S.~Kostadinova, S.~Pilipovi\'{c}, K.~Saneva, J.~Vindas, {The Ridgelet Transform
  and Quasiasymptotic Behavior of Distributions}, Oper. Theory Adv. Appl. 245
  (2015) 185--197.
\newblock \href {http://dx.doi.org/10.1007/978-3-319-14618-8{\_}13}
  {\path{doi:10.1007/978-3-319-14618-8{\_}13}}.

\bibitem{Schwartz.new}
L.~Schwartz, {Th\'{e}orie des Distributions}, nouvelle Edition, Hermann, Paris,
  1966.

\bibitem{Treves.new}
F.~Tr\`{e}ves, {Tological Vector Spaces, Distributions and Kernels}, Academic
  Press, 1967.

\bibitem{Rudin.FA.new}
W.~Rudin, {Functional Analysis}, 2nd Edition, Higher Mathematics Series,
  McGraw-Hill Education, 1991.

\bibitem{Brezis.new}
H.~Brezis, {Functional Analysis, Sobolev Spaces and Partial Differential
  Equations}, 1st Edition, Universitext, Springer-Verlag New York, 2011.
\newblock \href {http://dx.doi.org/10.1007/978-0-387-70914-7}
  {\path{doi:10.1007/978-0-387-70914-7}}.

\bibitem{Yosida.new}
K.~Yosida, {Functional Analysis}, 6th Edition, Springer-Verlag Berlin
  Heidelberg, 1995.
\newblock \href {http://dx.doi.org/10.1007/978-3-642-61859-8}
  {\path{doi:10.1007/978-3-642-61859-8}}.

\bibitem{Yuan}
W.~Yuan, W.~Sickel, D.~Yang, {Morrey and Campanato Meet Besov, Lizorkin and
  Triebel}, Lecture Notes in Mathematics, Springer Berlin Heidelberg, 2010.
\newblock \href {http://dx.doi.org/10.1007/978-3-642-14606-0}
  {\path{doi:10.1007/978-3-642-14606-0}}.

\bibitem{Holschneider.new}
M.~Holschneider, {Wavelets: An Analysis Tool}, Oxford mathematical monographs,
  The Clarendon Press, 1995.

\bibitem{Hertle1983}
A.~Hertle, \href{http://dx.doi.org/10.1007/BF01252856}{{Continuity of the radon
  transform and its inverse on Euclidean space}}, Math. Zeitschrift 184~(2)
  (1983) 165--192.
\newblock \href {http://dx.doi.org/10.1007/BF01252856}
  {\path{doi:10.1007/BF01252856}}.
\newline\urlprefix\url{http://dx.doi.org/10.1007/BF01252856}

\bibitem{Grafakos.classic}
L.~Grafakos, {Classical Fourier Analysis}, 2nd Edition, Graduate Texts in
  Mathematics, Springer New York, 2008.
\newblock \href {http://dx.doi.org/10.1007/978-0-387-09432-8}
  {\path{doi:10.1007/978-0-387-09432-8}}.

\bibitem{GelfandShilov}
I.~M. Gel'fand, G.~E. Shilov, {Generalized Functions, Vol. 1: Properties and
  Operations}, Academic Press, New York, 1964.

\bibitem{LS.phantom}
L.~A. Shepp, B.~F. Logan, {The Fourier reconstruction of a head section}, Nucl.
  Sci. IEEE Trans. 21~(3) (1974) 21--43.
\newblock \href {http://dx.doi.org/10.1109/TNS.1974.6499235}
  {\path{doi:10.1109/TNS.1974.6499235}}.

\bibitem{Stein.singular}
E.~M. Stein, {Singular Integrals and Differentiability Properties of
  Functions}, Princeton Mathematical Series (PMS), Princeton University Press,
  1970.

\end{thebibliography}
\end{document}